%% file: main.tex
\documentclass{article}
\usepackage{preprint,times}

\usepackage{amsmath,amssymb,amsthm,mathtools}
\usepackage{graphicx}
\usepackage{float}
\usepackage{booktabs}
\usepackage{xcolor}
\usepackage{tikz}
\usetikzlibrary{arrows,arrows.meta,positioning,calc}
\usetikzlibrary{cd}      %
\usepackage{subcaption}  %
\usepackage{hyperref}
\usepackage{url}

\graphicspath{{figs/}}

\theoremstyle{plain}
\newtheorem{theorem}{Theorem}[section]
\newtheorem{proposition}[theorem]{Proposition}
\newtheorem{corollary}[theorem]{Corollary}
\newtheorem{lemma}[theorem]{Lemma}
\theoremstyle{definition}
\newtheorem{definition}[theorem]{Definition}
\newtheorem{remark}[theorem]{Remark}
\newtheorem{example}[theorem]{Example}

\tikzset{
  zzval/.style  = {very thick, blue!60!black},
  zzbar/.style  = {line width=2.4pt, blue!60!black, line cap=round},
  zzdrop/.style = {densely dotted, gray!70},
  zzthr/.style  = {dashed, gray!80},
  zzmark/.style = {circle, draw=red!75!black, fill=white, line width=.7pt, inner sep=1.3pt},
  zztxt/.style  = {font=\scriptsize},
  zzred/.style  = {font=\scriptsize, red!75!black},
}

\definecolor{ForestGreen}{rgb}{0.13, 0.55, 0.13}
\definecolor{airforceblue}{rgb}{0.36, 0.54, 0.66}
\definecolor{orange}{rgb}{1.0, 0.5, 0.0}
\definecolor{amethyst}{rgb}{0.6, 0.4, 0.8}
\definecolor{awesome}{rgb}{1.0, 0.13, 0.32}
\definecolor{chromeyellow}{rgb}{1.0, 0.65, 0.0}

\title{A differentiability framework for zigzag persistent homology via linear interpolation}

\author{Enrico Maria Ferrari$^{1,2}$, Clemens Bannwart$^{1}$, Matteo Biagetti$^{1}$\\
$^{1}$Area Science Park, Trieste, Italy\\
$^{2}$Politecnico di Torino, Torino, Italy}

\begin{document}
\maketitle

\begin{abstract}
Persistent homology can be differentiated and incorporated into learning pipelines, but no analogous framework exists for zigzag persistence, which is needed when the underlying topological structure evolves non-monotonically over time.
We develop such a framework for sequences of simplicial complexes obtained by thresholding time-dependent filtering values on a fixed complex.
By assigning persistence diagram endpoints the real-valued times at which linearly interpolated filtering values cross the threshold, we transfer the continuity of the filtering values to the diagram points.
This yields smooth local lifts of the resulting persistence-diagram-valued map, from which we derive differentials almost everywhere under mild regularity conditions on the parametrization of the filtering values.
We prove local Lipschitz continuity outside an explicit measure-zero exclusion set; standard stochastic subgradient convergence guarantees therefore do not apply directly.
We argue that, even without such guarantees, this exclusion set is small enough in practice to allow effective optimization. 
We test this empirically in two experiments: sensor network coverage optimization and dynamic graph classification.

\end{abstract}

\input{sections/intro}

\section{Background}
\label{sec:bg}

We review essential background in this section. Section~\ref{sec:zigzag} introduces zigzag persistent homology, while Section~\ref{sec:diff} recalls the differentiability framework for maps to and from the space of persistence diagrams. In this paper, we combine both tools by studying differentiability for persistence diagrams generated from zigzag filtrations.

\subsection{Zigzag persistent homology}
\label{sec:zigzag}

We summarize the basic concepts of zigzag persistent homology on simplicial complexes \cite{carlsson2010zigzag}.

\paragraph{Zigzag filtration.} A finite \emph{simplicial complex} $K$ over a vertex set $V$ is a collection of non-empty subsets, called \emph{simplices}, closed under taking subsets. A \emph{zigzag filtration} $F$ is a finite sequence of simplicial complexes connected by bidirectional inclusions:
$$F: K_0 \leftrightarrow K_1 \leftrightarrow \dots \leftrightarrow K_n,$$
where each $K_j \leftrightarrow K_{j+1}$ represents either a forward inclusion $K_j \hookrightarrow K_{j+1}$ or a backward inclusion $K_j \hookleftarrow K_{j+1}$. The filtration is \emph{simplex-wise} if adjacent complexes differ by exactly one simplex.

\paragraph{Zigzag persistent homology.} Fix a field $\mathbb{F}$ (e.g., $\mathbb{Z}_2$) and $p \in \mathbb{N}$. The $p$-th simplicial homology functor $H_p(\cdot)$ maps simplicial complexes to vector spaces and inclusions to linear maps. Applying $H_p(\cdot)$ to $F$ yields a sequence of vector spaces and linear maps called a \emph{zigzag persistence module}:
$$H_p(F): H_p(K_0) \leftrightarrow H_p(K_1) \leftrightarrow \dots \leftrightarrow H_p(K_n).$$
By Gabriel's theorem for $A_{n+1}$ quivers, $H_p(F)$ decomposes uniquely up to isomorphism into a direct sum of interval modules $\bigoplus_{k=1}^m \mathbb{I}_{[b_k, d_k]}$, where each interval $[b_k, d_k]$ represents a homological feature born at index $b_k$ and persisting up to $d_k$ inclusive. Each interval yields a birth-death index pair $(b_k, d_k +1) \in \{0,\dots,n\} \times \{1,\dots,n, +\infty\}$, with the convention $d_k+1:=+\infty$ if $d_k=n$ (infinite-lived features). The multiset of these pairs forms the \emph{$p$-th persistence diagram} of the filtration.
Since in this paper birth-death indices will take real, rather than integer, values, we generalize the notion:

\begin{definition}[Space of Persistence Diagrams]
\label{def:pd_space}
The \emph{space of persistence diagrams}, denoted by $\mathcal{D}$, is the set of all finite multisets of points in $\mathbb{R} \times (\mathbb{R} \cup \{+\infty\})$ counted with multiplicity.
\end{definition}

\paragraph{Sequence of complexes.} In this paper, fixing a complex $K$, we consider a sequence of subcomplexes $\mathcal K = \{K_{T_i}\}_{i=0}^{N+1}$ of $K$, indexed by real time stamps $T_0=0 < \dots < T_{N+1}$ starting from $0$, such that the first and last complexes are empty, $K_{T_0}=K_{T_{N+1}}=\emptyset$, and $K_{T_i}\neq K_{T_{i+1}}$ for every $i$. Consecutive complexes need not be nested, so $\mathcal K$ is in general not a zigzag filtration.

\begin{definition}
\label{def:seq_compl}
We call $\mathrm{Seq}(K)$ the set of all such sequences of subcomplexes of $K$, with empty first and last complexes and distinct consecutive complexes. The trivial sequence consisting of the empty complex alone is added.
\end{definition}

\begin{remark}
In the TDA literature, a sequence of this kind is typically extended to a zigzag filtration by
inserting intermediate unions or intersections, to guarantee inclusion on each side. We instead
unroll the sequence directly into a simplex-wise zigzag filtration, as detailed in
Section~\ref{sec:diff_pd}; since this unrolling is not unique, Lemma~\ref{lem:tie} shows that the resulting persistence diagrams, converted back to the original sequence indices, agree regardless of the choice made — validating the sequence, rather than the filtration, as the primary object.
We favor this approach for two reasons: zigzag persistence lacks a general stability theorem from filtrations to diagrams, so no index-level correspondence needs to be preserved to invoke it; and omitting intermediate complexes removes artificial discontinuities in the persistence computation, shrinking the non-locally-Lipschitz region that hampers optimization.
\end{remark}

\subsection{Differentiability of maps to and from the space of persistence diagrams}
\label{sec:diff}

We recall the differentiability framework for maps to and from the space of persistence diagrams introduced in \cite{leygonie2022}.

\begin{definition}[Quotient Map {\cite[Def.~3.1]{leygonie2022}}]\label{def:quotient}
For $m, n \in \mathbb{N}$, the space of ordered persistence diagrams with $m$ paired points and $n$ unpaired points is $\mathbb{R}^{2m} \times \mathbb{R}^n$. The map $Q_{m,n}: \mathbb{R}^{2m} \times \mathbb{R}^n \to \mathcal{D}$ quotients this space by permutations $S_m \times S_n$:
$$Q_{m,n}((b_1, d_1, \dots, b_m, d_m), (v_1, \dots, v_n)) := \{(b_i, d_i)\}_{i=1}^m \cup \{(v_j, +\infty)\}_{j=1}^n \in \mathcal{D}.$$
When $n=0$, we set $Q_m:=Q_{m,0}$ in the notation.
\end{definition}

\begin{definition}[Differentiability {\cite[Def.~3.3, 3.6, 3.10, 3.13]{leygonie2022}}]\label{def:differentiability}
Let $M, N$ be smooth manifolds, $\theta \in M$, $\alpha \in \mathcal{D}$, and $r \in \mathbb{N} \cup \{\infty\}$.
\begin{enumerate}
    \item A map $B: M \to \mathcal{D}$ is \emph{$r$-differentiable at $\theta$} if there exist an open neighborhood $U \subseteq M$ of $\theta$, integers $m, n \in \mathbb{N}$, and a $C^r$ map $\widetilde{B}: U \to \mathbb{R}^{2m} \times \mathbb{R}^n$ such that:
    $B|_{U} = Q_{m,n} \circ \widetilde{B}$.
    $\widetilde{B}$ is called a \emph{local lift} of $B$ at $\theta$.
    If $r \ge 1$, the \emph{differential of $B$ at $\theta$ relative to $\widetilde{B}$} is 
    $\mathrm{d}_{\theta, \widetilde{B}}B := \mathrm{d}_\theta \widetilde{B}: T_\theta M \to \mathbb{R}^{2m} \times \mathbb{R}^n$.
    \item A map $L: \mathcal{D} \to N$ is \emph{$r$-differentiable at $\alpha$} if $L \circ Q_{m,n}$ is $C^r$ on an open neighborhood of every $\widetilde{\alpha} \in Q_{m,n}^{-1}(\alpha)$.
    If $r \ge 1$, the \emph{differential of $L$ at $\alpha$ relative to the pre-image $\widetilde{\alpha}$} is $\mathrm{d}_{\alpha, \widetilde{\alpha}}L := \mathrm{d}_{\widetilde{\alpha}}(L \circ Q_{m,n}): \mathbb{R}^{2m} \times \mathbb{R}^n \to T_{L(\alpha)} N$.
\end{enumerate}
\end{definition}

\begin{proposition}[Chain Rule {\cite[Prop.~3.14]{leygonie2022}}]
\label{prop:chain}
Let $B: M \to \mathcal{D}$ be $r$-differentiable at $\theta \in M$ and $L: \mathcal{D} \to N$ be $r$-differentiable at $B(\theta) \in \mathcal{D}$. Then $\mathcal{L}:=L \circ B: M \to N$ is $C^r$ at $\theta$, and for $r \ge 1$, its differential is independent of choice of lift and satisfies:
$$\mathrm{d}_\theta \mathcal{L} = \mathrm{d}_{B(\theta), \widetilde{B}(\theta)}L \circ \mathrm{d}_{\theta, \widetilde{B}}B: T_\theta M \to T_{\mathcal{L}(\theta)} N.$$
\end{proposition}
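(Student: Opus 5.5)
The plan is to compose local lifts and apply the classical chain rule on Euclidean spaces.

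\emph{Reduction to Euclidean maps.} Since $B$ is $r$-differentiable at $\theta$, Definition~\ref{def:differentiability} gives an open neighborhood $U\ni\theta$, integers $m,n$, and a $C^r$ lift $\widetilde B:U\to\mathbb{R}^{2m}\times\mathbb{R}^n$ with $B|_U=Q_{m,n}\circ\widetilde B$. Then on $U$
\[
\mathcal{L}|_U = L\circ Q_{m,n}\circ\widetilde B .
\]
Set $\widetilde\alpha:=\widetilde B(\theta)$. This point lies in $Q_{m,n}^{-1}(B(\theta))$. By $r$-differentiability of $L$ at $B(\theta)$, the map $L\circ Q_{m,n}$ is $C^r$ on an open neighborhood $W$ of $\widetilde\alpha$. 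By continuity of $\widetilde B$, the set $U':=\widetilde B^{-1}(W)\cap U$ is an open neighborhood of $\theta$. On $U'$, $\mathcal L$ is a composition of two $C^r$ maps between manifolds, so it is $C^r$.

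\emph{Derivative formula.} For $r\ge1$, the ordinary chain rule gives
\[
\mathrm d_\theta\mathcal L=\mathrm d_{\widetilde\alpha}(L\circ Q_{m,n})\circ\mathrm d_\theta\widetilde B .
\]
By the definitions, this is exactly $\mathrm d_{B(\theta),\widetilde B(\theta)}L\circ \mathrm d_{\theta,\widetilde B}B$.

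\emph{Independence of the lift.} This step is the only subtle point, but it is nearly free. The left-hand side $\mathrm d_\theta\mathcal L$ is the differential of the map $\mathcal L=L\circ B$ itself, and that map does not depend on any lift. Hence any other lift $\widetilde B'$, possibly with different $m',n'$ or a permuted ordering, yields a composite equal to the same $\mathrm d_\theta\mathcal L$. One must still check that the argument used only the facts $\widetilde B(\theta)\in Q^{-1}_{m,n}(B(\theta))$ and that $L\circ Q_{m,n}$ is smooth near every preimage point. Both hold by definition, since the differentiability of $L$ is required at every preimage, for all $m,n$.

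I expect no real obstacle. The one point needing care is that $\mathcal L$ is smooth on a full neighborhood of $\theta$, which the shrinking to $U'$ above ensures.
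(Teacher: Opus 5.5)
Your proof is correct and is essentially the standard argument of the cited source, which the paper quotes without reproving: on $\widetilde B^{-1}(W)\cap U$ one has $\mathcal L=(L\circ Q_{m,n})\circ\widetilde B$, the classical chain rule gives both the $C^r$ regularity and the formula, and independence of the lift holds because $\mathrm{d}_\theta\mathcal L$ depends only on $\mathcal L$ itself. Shrinking the neighborhood to $\widetilde B^{-1}(W)\cap U$ is exactly the needed care point, and you handled it, including the case $r=0$ where continuity of $\widetilde B$ suffices.
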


Practically, this allows treating intermediate persistence diagrams locally as Euclidean vectors in $\mathbb{R}^{2m} \times \mathbb{R}^n$, computing standard differentials, and composing them to obtain the overall one on $M$.

\section{Theoretical contributions}

\subsection{Persistence diagram-valued maps from sequences of complexes}
\label{sec:diff_pd}

In this section, we illustrate how persistence diagrams can be computed from sequences of simplicial complexes passing through intermediate simplex-wise zigzag filtrations. We then establish sufficient conditions for the differentiability of this pipeline via Corollary~\ref{cor:condition1}.

Let $K$ be a finite simplicial complex. Consider $\mathcal K =\{K_{T_i}\}_{i=0}^{N+1}\in \mathrm{Seq}(K)$ (see Definition \ref{def:seq_compl}) a finite sequence of subcomplexes with empty first and last complexes and distinct consecutive ones.

Each transition $K_{T_{i-1}} \to K_{T_i}$ may involve several simplices entering or leaving at once; to unroll the sequence into a well-defined simplex-wise filtration, we must decide, for such simultaneous changes, an order in which to introduce or remove them one at a time. For each transition, record an \emph{event} for every simplex that changes status: an \emph{insertion} $+\sigma$ for $\sigma \in K_{T_i} \setminus K_{T_{i-1}}$, and a \emph{deletion} $-\sigma$ for $\sigma \in K_{T_{i-1}} \setminus K_{T_i}$, both recorded at time $T_i$. Let $E := \{\pm\sigma : \sigma \in K\}$ be the set of all possible events.

Since several events may share the same time stamp $T_i$, we need a rule fixing their relative order, so that unrolling them one at a time always passes through valid simplicial complexes. To do so, define a partial order on $E$ by $+\tau \le +\sigma$ and $-\sigma \le -\tau$ whenever $\tau \subseteq \sigma$ (faces must enter before cofaces, and cofaces must be removed before faces), and extend it to a total order $(E,<)$ making face-unrelated events comparable.
With $(E,<)$ fixed, list the events by increasing time stamp $T_i$, breaking ties within a stamp by $<$, obtaining a unique sequence $\mathbf e=(e_1,\dots,e_n)\in E^n$. Applying $e_1,\dots,e_n$ one at a time from $\widetilde{K}_0:=\emptyset$ via
\begin{equation}
\label{eq:filtration_constr}
    \widetilde{K}_i := \begin{cases} \widetilde{K}_{i-1} \cup \{\sigma\} & \text{if } e_i = +\sigma, \\ \widetilde{K}_{i-1} \setminus \{\sigma\} & \text{if } e_i = -\sigma, \end{cases}
\end{equation}
yields, by construction of $(E,<)$, a valid simplicial complex at every step: this is a simplex-wise zigzag filtration $\widetilde{K}_0 \leftrightarrow \dots \leftrightarrow \widetilde{K}_n$, with $\widetilde{K}_0=\widetilde{K}_n=\emptyset$.

Thus, under $(E,<)$, the sequence $\mathcal K$ is uniquely encoded as a triple $f=(n,\mathbf t,\mathbf e)$: $n\in\mathbb N$ is the number of events, $\mathbf t=(t_1\le\dots\le t_n)\in\mathbb R^n$ records each event's time stamp $T_i$, and $\mathbf e=(e_1,\dots,e_n)\in E^n$ is the event sequence built above, satisfying $t_i=t_{i+1}\implies e_i<e_{i+1}$ and inducing the filtration $\widetilde{K}_0\leftrightarrow\dots\leftrightarrow \widetilde{K}_n$. We denote the space of all such sequence representations by
$\mathrm{SR}(K;<) \subseteq \bigsqcup_{n} \{n\} \times \mathbb{R}^n \times E^n$,
where $(n,\mathbf e)$ is the \emph{combinatorial structure} (the induced filtration) and $\mathbf t$ its \emph{temporal structure}, i.e.\ when each combinatorial event occurs. 
From now on, fix $K$ and $(E,<)$: this identifies $\mathrm{SR}(K;<)$ with $\mathrm{Seq}(K)$, via the encoding just described, which is a bijection.

For each sequence, we consider the persistence diagram in homological dimension $p$ of its induced filtration translated onto the temporal indices $\mathbf{t}$, realized by the map $\mathrm{PH}_p: \mathrm{SR}(K;<) \to \mathcal{D}$.
Crucially, as a direct consequence of Lemma \ref{lem:tie}, this diagram does not depend on the choice of the total order $(E,<)$ inducing the filtration, up to diagonal points.
Then, assume $M$ is a smooth manifold acting as the parameter space, and $S: M \to \mathrm{SR}(K;<)$ is a parametrized family of sequences. We consider the following general persistence diagram-valued map:
$$\mathcal{P}_p: M \xrightarrow{S} \mathrm{SR}(K;<) \xrightarrow{\mathrm{PH}_p} \mathcal{D}$$
for which we want to find sufficient conditions to guarantee its $r$-differentiability in the sense of Def.~\ref{def:differentiability}.
To do so, we exploit the following partition of $\mathrm{SR}(K;<)$ into cells.

\begin{definition}[Cell]
Given $f = (n, \mathbf{t}, \mathbf{e}) \in \mathrm{SR}(K;<)$, its \emph{cell} is defined as the set of sequences sharing the same combinatorial structure:
$\mathrm{cell}(f) := \{g = (n_g, \mathbf{t}_g, \mathbf{e}_g) \in \mathrm{SR}(K;<) : n_g = n, \, \mathbf{e}_g = \mathbf{e}\}$.
We define the map extracting the number of events in $ \mathrm{SR}(K;<)$ as $\#(g) := n_g \in \mathbb{N}$, and the map extracting the temporal structure of sequences in $\mathrm{cell}(f)$ as $\mathrm{temp}_{f}(g) := \mathbf{t}_g \in \mathbb{R}^{n}$.
\end{definition}

Since $g\in\mathrm{cell}(f)$ if and only if $f\in\mathrm{cell}(g)$, such cells form a partition of $\mathrm{SR}(K;<)$; furthermore, if $f,g$ are within the same cell, we have $\mathrm{temp}_{f}=\mathrm{temp}_{g}$.

The evaluation of $\mathrm{PH}_p(f)$ decomposes into a pairing step and a temporal evaluation.

\begin{enumerate}
    \item \textbf{Combinatorial part.} Consider the simplex-wise zigzag filtration induced by $\mathbf e$ over $\{0,\dots,n\}$ (see Equation \ref{eq:filtration_constr}). Since it starts and ends with the empty complex, every homological feature is born at time $\ge 1$ and is destroyed at time $\le n$, so the diagram has no infinity points; running the persistence pairing algorithm on this filtration produces a set $P^f_p$ of pairs $(i,j)$, with $i,j\in\{1,\dots,n\}$ distinct, identifying which events $e_i,e_j$ jointly create and destroy each feature.

    \item \textbf{Temporal part.} Once $P^f_p$ is obtained, $\mathrm{PH}_p(f)$ is evaluated by mapping index pairs to the corresponding time instants:
    $\mathrm{PH}_p(f) = \{(t_i, t_j)\}_{(i,j) \in P^f_p}$.
\end{enumerate}

Within the same cell, $\mathrm{PH}_p$ behaves as a permuted coordinate projection.

\begin{proposition}
\label{prop:permutation}
Let $p \in \mathbb{N}$ and $f = (n, \mathbf{t}, \mathbf{e}) \in \mathrm{SR}(K;<)$. For every $g \in \mathrm{cell}(f)$ we have $P^g_p = P^f_p$. Fixing an arbitrary order $P^f_p = ((a_1, a_2), \dots, (a_{2b-1}, a_{2b}))$ on its pairs, where $b = |P^f_p|$, the permuted coordinate projection $\widetilde{\mathrm{PH}}_{p,f}: \mathbb{R}^{n} \to \mathbb{R}^{2b}$, $\widetilde{\mathrm{PH}}_{p,f}(\mathbf{t}) := (t_{a_1}, \dots, t_{a_{2b}})$, satisfies 
$Q_{b}\bigl(\widetilde{\mathrm{PH}}_{p,f}(\mathrm{temp}_f(g))\bigr) = \mathrm{PH}_p(g)$
for every $g \in \mathrm{cell}(f)$, where $Q$ is the quotient map onto the space of persistence diagrams (Def.~\ref{def:quotient}).
\end{proposition}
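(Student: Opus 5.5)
The plan is to split the statement into its two claims and observe that both follow almost directly from the two-step description of $\mathrm{PH}_p$ (combinatorial part, then temporal part). For the first claim, $P^g_p = P^f_p$, recall that $P^f_p$ is defined as the output of the persistence pairing algorithm applied to the simplex-wise zigzag filtration $\widetilde{K}_0 \leftrightarrow \dots \leftrightarrow \widetilde{K}_n$ of Equation~\eqref{eq:filtration_constr}. That filtration is a function of $(n,\mathbf e)$ alone: the recursion builds $\widetilde{K}_i$ from $\widetilde{K}_{i-1}$ and $e_i$, and never consults $\mathbf t$. If $g \in \mathrm{cell}(f)$, then by definition $n_g = n$ and $\mathbf e_g = \mathbf e$, so an induction on $i$ gives $\widetilde{K}^g_i = \widetilde{K}^f_i$ for every $i$, and the two filtrations coincide as diagrams of complexes and inclusions.

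It remains to see that the pairing is a well-defined function of the filtration. I will argue this through Gabriel's theorem: the interval decomposition of $H_p$ of the filtration is unique up to isomorphism. Because the filtration is simplex-wise, each interval endpoint corresponds to a unique event index, namely the simplex whose insertion or deletion creates or kills the feature. Hence the multiset of index pairs is determined by the filtration. This step is the main subtle point: one has to make sure that ``running the persistence pairing algorithm'' yields a canonical set of pairs rather than something depending on implementation choices. I would handle this by identifying $P^f_p$ with the interval barcode itself, with each interval $[b,d]$ recorded as the pair of event indices $(b, d+1)$, so that uniqueness in Gabriel's theorem settles it.

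For the second claim, fix the ordering $((a_1,a_2),\dots,(a_{2b-1},a_{2b}))$ of $P^f_p = P^g_p$. Write $\mathbf t_g = \mathrm{temp}_f(g) \in \mathbb R^n$, which makes sense since $n_g = n$. By the temporal part,
\[
\mathrm{PH}_p(g) = \{(t_{g,i}, t_{g,j})\}_{(i,j)\in P^g_p} = \{(t_{g,a_{2k-1}}, t_{g,a_{2k}})\}_{k=1}^b .
\]
By Definition~\ref{def:quotient} with $n=0$, this multiset is exactly $Q_b(t_{g,a_1},\dots,t_{g,a_{2b}}) = Q_b(\widetilde{\mathrm{PH}}_{p,f}(\mathbf t_g))$. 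Since $Q_b$ forgets the order of the pairs, the identity holds for any chosen ordering of $P^f_p$. Because the diagram has no points at infinity (the filtration starts and ends empty), no unpaired coordinates are needed.
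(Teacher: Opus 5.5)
Your proposal is correct and follows essentially the same route as the paper. Equal $(n,\mathbf e)$ gives the same simplex-wise filtration and hence the same pairing $P$, and then the temporal evaluation step together with Definition~\ref{def:quotient} gives $\mathrm{PH}_p(g)=Q_b(\widetilde{\mathrm{PH}}_{p,f}(\mathrm{temp}_f(g)))$. Your appeal to uniqueness in Gabriel's theorem to make the pairing canonical only spells out what the paper dismisses as ``follows automatically''; it does not change the argument.
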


Note that for every $g\in \mathrm{cell}(f)$ we have $\widetilde{\mathrm{PH}}_{p,f}=\widetilde{\mathrm{PH}}_{p,g}$.
Proposition \ref{prop:permutation} allows us to find sufficient conditions for $r$-differentiability (Definition \ref{def:differentiability}) of $\mathcal{P}_p$ by constructing suitable local lifts.

\begin{corollary}
\label{cor:condition1}
Let $M$ be a smooth manifold, $\theta \in M$, and $S: M \to \mathrm{SR}(K;<)$. If there exists an open neighborhood $U \subseteq M$ of $\theta$ such that for all $\theta' \in U$, $S(\theta') \in \mathrm{cell}(S(\theta))$, then the map $\widetilde{B}: U \to \mathbb{R}^{2\vert{}P^{S(\theta)}_p\vert{}}$ defined by:
$$\widetilde{B}(\theta') := \widetilde{\mathrm{PH}}_{p,S(\theta)}(\mathrm{temp}_{S(\theta)}(S(\theta')))$$
for all $\theta' \in U$, is a local lift of $\mathcal{P}_p = \mathrm{PH}_p \circ S$ at $\theta$.
Consequently, if in addition $\widetilde{S} := \mathrm{temp}_{S(\theta)} \circ S\vert{}_U$ is $C^r$, then $\mathcal{P}_p$ is $r$-differentiable at $\theta$, and for $r \ge 1$, the differential of $\mathcal{P}_p$ at $\theta$ relative to the local lift $\widetilde{B}$ is:
$\mathrm{d}_{\theta, \widetilde{B}}\mathcal{P}_p := \mathrm{d}_{\widetilde{S}(\theta)}\widetilde{\mathrm{PH}}_{p,S(\theta)} \circ \mathrm{d}_{\theta}\widetilde{S}$.
\end{corollary}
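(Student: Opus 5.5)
The plan is to unwind the definitions and apply Proposition~\ref{prop:permutation} pointwise on $U$. Set $f := S(\theta)$ and $b := |P^f_p|$. By hypothesis, $S(\theta') \in \mathrm{cell}(f)$ for every $\theta' \in U$. Hence $\#(S(\theta')) = n$, so $\mathrm{temp}_f(S(\theta')) \in \mathbb{R}^n$ is well defined and $\widetilde{B}$ is a well-defined map $U \to \mathbb{R}^{2b}$.

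Applying Proposition~\ref{prop:permutation} with $g = S(\theta')$ gives
$Q_b\bigl(\widetilde{\mathrm{PH}}_{p,f}(\mathrm{temp}_f(S(\theta')))\bigr) = \mathrm{PH}_p(S(\theta')) = \mathcal{P}_p(\theta')$.
This is exactly the identity $\mathcal{P}_p|_U = Q_{b,0} \circ \widetilde{B}$ required in Definition~\ref{def:differentiability}, with $m = b$ and $n = 0$ there. It establishes that $\widetilde{B}$ is a local lift in the set-theoretic sense; regularity is the separate question handled next.

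For the second part, write $\widetilde{B} = \widetilde{\mathrm{PH}}_{p,f} \circ \widetilde{S}$. The map $\widetilde{\mathrm{PH}}_{p,f}: \mathbb{R}^n \to \mathbb{R}^{2b}$ is a coordinate selection $\mathbf{t} \mapsto (t_{a_1},\dots,t_{a_{2b}})$. It is therefore linear, hence $C^\infty$, and its differential at any point is the same linear map. If $\widetilde{S}$ is $C^r$ on $U$, then the composite $\widetilde{B}$ is $C^r$ on the open set $U$. So $\mathcal{P}_p$ is $r$-differentiable at $\theta$ by Definition~\ref{def:differentiability}.

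For $r \ge 1$, the ordinary chain rule for maps between manifolds and Euclidean space gives
$\mathrm{d}_\theta \widetilde{B} = \mathrm{d}_{\widetilde{S}(\theta)}\widetilde{\mathrm{PH}}_{p,f} \circ \mathrm{d}_\theta \widetilde{S}$.
This is the stated formula, by the definition of $\mathrm{d}_{\theta,\widetilde{B}}\mathcal{P}_p$.

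There is no real obstacle here. The only point needing care is that the lift uses the fixed pairing $P^f_p$ and the fixed order chosen at $\theta$ for all $\theta' \in U$. This is legitimate precisely because Proposition~\ref{prop:permutation} guarantees $P^{S(\theta')}_p = P^f_p$ throughout the cell, and because $\mathrm{temp}_{S(\theta')} = \mathrm{temp}_f$. A second point is that $\mathbb{R}^{2b} \times \mathbb{R}^0$ matches the required target, since no infinite points occur.
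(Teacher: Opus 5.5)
Your proposal is correct and follows the paper's proof essentially step by step. You apply Proposition~\ref{prop:permutation} pointwise on $U$ to obtain $\mathcal{P}_p|_U = Q_{b}\circ\widetilde{B}$, then use the linearity of $\widetilde{\mathrm{PH}}_{p,S(\theta)}$ and the classical chain rule to get the $C^r$ regularity and the differential formula.
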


Note that the condition $\widetilde{S} = \mathrm{temp}_{S(\theta)} \circ S|_U$ is $C^r$ cannot be verified by checking the two component maps individually: we have not endowed $\mathrm{SR}(K;<)$ with a topology, let alone a manifold structure, so no differential is defined on it, and the chain rule cannot be applied through the sequence space.

\subsection{Temporal continuity via linear interpolation}
\label{sec:lin_int}

The space $\mathrm{SR}(K;<)$ places no constraint on how event times vary, but in practice sequences of complexes typically arise from thresholding continuous filtering values assigned to simplices at fixed integer time steps $\{1,\dots,N\}$. To differentiate our pipeline, we need this fixed-time data to vary continuously as the underlying filtering values vary continuously; we now bridge the two by linear interpolation (Eq. \ref{eq1}), producing a canonical map into $\mathrm{SR}(K;<)$.

Fix $N \in \mathbb{N}$, a simplicial complex $K$, and a threshold $\epsilon > 0$. We define the space of \emph{filtering functions on $K$ with $N$ time steps} as
\begin{equation}
\label{eq:ff_N}
FF_N(K) := \left\{ v: K \to \{0\}\times \mathbb{R}^N \times\{0\}\;\middle|\; v(\sigma)_i \ge v(\tau)_i, \, \forall i \in \{1, \dots, N\}, \, \forall \sigma \subsetneq \tau \right\},
\end{equation}
where the boundary values $v(\cdot)_0=v(\cdot)_{N+1}:=0$ are fixed by convention, and $\epsilon>0$ ensures the empty complex at both ends. A filtering function $v \in FF_N(K)$ induces a \emph{fixed-time sequence} $\{K_i\}_{i=0}^{N+1}$, with $K_0=K_{N+1}=\emptyset$, by thresholding: $\sigma \in K_i \iff v(\sigma)_i > \epsilon$. This yields a valid sequence of simplicial complexes because $v(\sigma)_i \ge v(\tau)_i$ for $\sigma\subsetneq\tau$ preserves face inclusion.

On the other hand, we obtain a \emph{continuous-time} sequence by linearly interpolating $v$ between consecutive integer steps.

\begin{definition}[Crossing]\label{def:crossing}
Let $v\in FF_N(K)$, $\sigma\in K$, and $k \in \{0, \dots, N\}$. We say $\sigma$ \emph{crosses} $\epsilon$ at $[k,k+1]$, and call $(\sigma,k)$ a \emph{crossing}, if $v(\sigma)_k \le \epsilon < v(\sigma)_{k+1}$ (an \emph{entry crossing}) or $v(\sigma)_k > \epsilon \ge v(\sigma)_{k+1}$ (an \emph{exit crossing}). In either case the denominator below never vanishes, so the \emph{crossing time}
\begin{equation}\label{eq1}
t_\sigma^{(k)}(v) := k + \frac{\epsilon - v(\sigma)_k}{v(\sigma)_{k+1} - v(\sigma)_k}
\end{equation}
is always well-defined.
\end{definition}

Inserting $\sigma$ at each entry crossing and removing it at each exit crossing, in the order given by the crossing times $t_\sigma^{(k)}(v)$ --- except that an exit and a re-entry of the same simplex at the same time cancel (the \emph{phantom crossings} of Definition~\ref{def:phantom}) --- defines a sequence in $\mathrm{Seq}(K)$, as formalized and proven in Lemma~\ref{lem:S-valid}. Denote by $S: FF_N(K) \to \mathrm{Seq}(K)\cong\mathrm{SR}(K;<)$ the map associating $v$ with this sequence (overloading the notation $S$ of Section \ref{sec:diff_pd}), omitting the dependence on $\epsilon$ from the notation.
For brevity, we define $(\mathbb{R}^{N|K|})^*:=\{0\}^{|K|}\times\mathbb{R}^{N|K|}\times\{0\}^{|K|}$ and see $FF_N(K)$ as a closed subset of it.

To express $S$ analytically, we partition $FF_N(K)$ into cells. For $v \in FF_N(K)$, define
$$\mathrm{cell}(v) := \left\{ w \in FF_N(K) \;\middle|\;
\begin{array}{l}
\forall i \in \{1, \dots, N\}, \, \forall \sigma \in K: \mathrm{sgn}(w(\sigma)_i - \epsilon) = \mathrm{sgn}(v(\sigma)_i - \epsilon);\\
\forall k \in \{0, \dots, N\}, \, \forall \sigma, \tau \in K \text{ crossing } \epsilon \text{ in } [k, k+1]: \\
t_\sigma^{(k)}(w) < t_\tau^{(k)}(w) \iff t_\sigma^{(k)}(v) < t_\tau^{(k)}(v), \text{ and} \\
t_\sigma^{(k)}(w) = t_\tau^{(k)}(w) \iff t_\sigma^{(k)}(v) = t_\tau^{(k)}(v),
\end{array}
\right\}$$
with $\mathrm{sgn}(x):=1$ if $x>0$, $0$ if $x=0$, $-1$ if $x<0$; we omit the dependence of $\mathrm{cell}(v)$ on $\epsilon$.
These cells form a finite semialgebraic partition of $FF_N(K)$ (see Proposition \ref{prop:cell-basic}), but are not smooth manifolds in general.

Write $|\cdot|:E\to K$ for the map forgetting the sign of an event, $|+\sigma|=|-\sigma|=\sigma$.

\begin{definition}[Indexing function]\label{def:indexing}
Let $v\in FF_N(K)$ with $S(v)=(n,\mathbf t,\mathbf e)$. A function $I:\{1,\dots,n\}\to\{0,\dots,N\}$ such that $\mathbf t_i = t^{I(i)}_{|e_i|}(v)$ for every $i$ is called an \emph{indexing function} of $v$. Such an $I$ always exists (a direct consequence of $S$ construction), though not necessarily uniquely.
\end{definition}

\begin{proposition}\label{prop:same_comb}
Let $v\in FF_N(K)$ with indexing function $I$. For every $w\in\mathrm{cell}(v)$, $S(w)\in\mathrm{cell}(S(v))$, and $I$ is also an indexing function of $w$.
\end{proposition}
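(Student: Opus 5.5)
The plan is to show that the whole construction of $S(v)$ depends only on data that are constant on $\mathrm{cell}(v)$. These data are three kinds of information:
\begin{itemize}
\item the set of crossings and their types (entry or exit);
\item the relative order, including ties, of crossing times within each interval $[k,k+1]$;
\item which exit/re-entry pairs are phantom.
\end{itemize}
Once these are shown invariant, the event sequence $\mathbf e$ and the count $n$ coincide for $v$ and $w$. The indexing function $I$ also transfers, since it only records which crossing produced each event.

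\textbf{Step 1: crossings and their types are invariant.} By Definition~\ref{def:crossing}, whether $(\sigma,k)$ is an entry crossing, an exit crossing, or not a crossing depends only on the truth values of $v(\sigma)_k\le\epsilon$ and $v(\sigma)_{k+1}>\epsilon$. These are determined by $\mathrm{sgn}(v(\sigma)_k-\epsilon)$ and $\mathrm{sgn}(v(\sigma)_{k+1}-\epsilon)$. For $k\in\{1,\dots,N\}$ these signs agree for $v$ and $w$ by the first condition of $\mathrm{cell}(v)$. For the boundary indices $0$ and $N+1$ the values are fixed at $0<\epsilon$ for every filtering function. Hence $v$ and $w$ have exactly the same crossings, with the same types.

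\textbf{Step 2: the global order of crossing times is invariant.} Each crossing time satisfies $t_\sigma^{(k)}\in(k,k+1]$. This holds because the fraction in Eq.~\ref{eq1} lies in $(0,1]$ whenever $v(\sigma)_k\le\epsilon<v(\sigma)_{k+1}$ or $v(\sigma)_k>\epsilon\ge v(\sigma)_{k+1}$; in both cases the numerator and denominator have the same sign, and the ratio is at most $1$. The value equals $k+1$ exactly when $v(\sigma)_{k+1}=\epsilon$, which is again sign-determined.

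\textbf{Step 3: the only cross-interval tie is sign-determined.} Crossings from different intervals are strictly ordered by $k$, except for the tie $t^{(k)}_\sigma=k+1=t^{(k+1)}_\tau$. The second crossing time equals $k+1$ only if its fraction is $0$, which is impossible since the fraction lies in $(0,1]$. So that tie cannot occur, and the relation $t^{(k)}_\sigma=k+1$ is itself sign-determined anyway. Within a single interval, the strict order and the ties are preserved by the remaining conditions of $\mathrm{cell}(v)$. Consequently, the map from crossings to crossing times induces the same weak order on the (common) set of crossings for $v$ and $w$.

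\textbf{Step 4: phantom cancellations are invariant.} Phantom crossings (Definition~\ref{def:phantom}) are an exit and a re-entry of the same simplex at the same time. By Step 3 such a coincidence can only happen within one interval, or at an endpoint $k+1$ in a sign-determined way. In either case it is decided by data preserved in the cell. This is the step I expect to be the main obstacle: I must check carefully against the precise Definition~\ref{def:phantom} and Lemma~\ref{lem:S-valid} that cancellation is decided purely by the weak order and the crossing types. In particular I must verify that it does not involve tie-breaking against values at integer times that could differ in $w$.

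\textbf{Step 5: conclusion.} The events are the surviving crossings, with insertion signs given by the type. They are grouped into time stamps by the weak order and sorted within a stamp by the fixed total order $(E,<)$. Therefore $\mathbf e(w)=\mathbf e(v)$ and $n_w=n_v$, so $S(w)\in\mathrm{cell}(S(v))$.

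The $i$-th event of $S(w)$ arises from the same crossing $(|e_i|,I(i))$ as for $v$, whose time is by definition $t^{I(i)}_{|e_i|}(w)$. Hence $I$ is an indexing function of $w$. Where $I$ was non-unique for $v$, the alternative choices correspond to tied crossings of the same simplex. Such ties are preserved, so any indexing function of $v$ still works for $w$.
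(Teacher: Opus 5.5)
Your plan is the paper's: crossings and types fixed by signs, phantom pairs fixed by signs, within-interval order fixed by the cell conditions, ties across adjacent intervals fixed by signs, then tie-breaking by $<$. However, Steps 2--4 rest on a false claim.

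\textbf{The false claim.} The range $t^{(k)}_\sigma\in(k,k+1]$ is correct only for exit crossings. For an entry crossing, $v(\sigma)_k\le\epsilon<v(\sigma)_{k+1}$ puts the fraction in \eqref{eq1} in $[0,1)$. So $t^{(k)}_\sigma\in[k,k+1)$, with $t^{(k)}_\sigma=k$ exactly when $v(\sigma)_k=\epsilon$ (the asymmetry noted in Remark~\ref{obs:phantom}).

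\textbf{Consequences.} The cross-interval tie you exclude in Step 3 does occur. An exit $(\sigma,k)$ with $v(\sigma)_{k+1}=\epsilon$ and an entry $(\tau,k+1)$ with $v(\tau)_{k+1}=\epsilon$ both have time $k+1$.
\begin{itemize}
\item For $\sigma\neq\tau$ this is a genuine tie, resolved by $<$, and you must show it persists in $w$. Your Step 3 never does this, because it believes there is nothing to show.
\item For $\sigma=\tau$ it is precisely the phantom configuration of Definition~\ref{def:phantom}.
\end{itemize}
This is also why Step 4 does not hang together. Under your Step 3, phantom pairs could not exist at all. And a simplex has at most one crossing per interval $[k,k+1]$, so no ``within one interval'' coincidence is possible either.

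\textbf{The repair}, which is exactly the paper's argument, is short. Besides $t^{(k)}_\sigma(w)=k+1\iff w(\sigma)_{k+1}=\epsilon$, which you noted, one has $t^{(k+1)}_\tau(w)=k+1\iff w(\tau)_{k+1}=\epsilon$. Both are sign conditions preserved on $\mathrm{cell}(v)$. Hence a tie across adjacent intervals holds for $w$ iff it holds for $v$, and crossings on non-adjacent intervals never tie.

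Step 4 then needs no order argument. The phantom condition $v(\sigma)_k=\epsilon$, $v(\sigma)_{k\pm1}>\epsilon$ is a sign pattern on three coordinates of one simplex, so it is identical on $\mathrm{cell}(v)$ (Remark~\ref{rem:phantom}). Your worry about values at integer times is unfounded: the values may change within the cell, but their position relative to $\epsilon$ cannot.

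\textbf{The closing remark on non-uniqueness of $I$} should be dropped. Genuine crossings of one simplex never share a time, so the crossing producing $e_i$ is unique. An alternative $I$ could only come from a coincidental value of \eqref{eq1} at a non-crossing index, and that need not persist. For example, take $K=\{\sigma\}$, $N=3$, and $v(\sigma)=(0,3\epsilon,2\epsilon,\epsilon,0)$. Then $t^{(1)}_\sigma(v)=3=t^{(2)}_\sigma(v)$. But $w(\sigma)=(0,4\epsilon,2\epsilon,\epsilon,0)\in\mathrm{cell}(v)$ gives $t^{(1)}_\sigma(w)=5/2$. Like the paper, you should read $I(i)$ as the index of the crossing that produced $e_i$.
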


If $S(v)=(n,\mathbf t,\mathbf e)$ and $v$ has indexing function $I$, defining the open set $W_v:=\big\{w\in(\mathbb{R^{N|K|}})^*:\ w(|e_i|)_{I(i)+1}\neq w(|e_i|)_{I(i)}\ \text{for}\ i=1,\dots,n\big\}$, by Equation \ref{eq1}, the map
\begin{equation}
\label{eq:S_tilde}
    \widetilde{S}_{v} : W_v \to \mathbb{R}^n, \qquad \widetilde{S}_v(w) = \big(t^{I(1)}_{|e_1|}(w), \dots, t^{I(n)}_{|e_n|}(w)\big),
\end{equation}
is well-defined, since it is a rational map with nowhere-vanishing denominators on $W_v$, hence $C^\infty$. In particular, since $W_v$ contains $\mathrm{cell}(v)$, Proposition \ref{prop:same_comb} implies $\widetilde{S}_{v}|_{\mathrm{cell}(v)} = \mathrm{temp}_{S(v)} \circ S|_{\mathrm{cell}(v)}$.

Specializing our general persistence diagram-valued map along this construction,
$$\mathcal{P}_p: M \xrightarrow{G} FF_N(K) \xrightarrow{S}\mathrm{SR}(K;<) \xrightarrow{\mathrm{PH}_p} \mathcal{D},$$
Corollary~\ref{cor:condition1} yields the following sufficient condition for $r$-differentiability of $\mathcal{P}_p$.

\begin{corollary}
\label{cor:condition2}
    Let $\theta \in M$. If there exists an open neighborhood $U$ of $\theta$ such that for all $\theta' \in U$, $G(\theta') \in \mathrm{cell}(G(\theta))$, and $G|_U$ is of class $C^r$, then the map $\widetilde{B}: U \to \mathbb{R}^{2|P^{S(G(\theta))}_p|}$ defined by:
    $$\widetilde{B}(\theta') := \widetilde{\mathrm{PH}}_{p,S(G(\theta))}(\widetilde{S}_{G(\theta)}(G(\theta')))$$
    for all $\theta' \in U$, is a $C^r$ local lift of $\mathcal{P}_p$ at $\theta$.
    Thus, $\mathcal{P}_p$ is $r$-differentiable at $\theta$, and the differential of $\mathcal{P}_p$ at $\theta$ relative to the local lift $\widetilde{B}$ is:
$\mathrm{d}_{\theta, \widetilde{B}}\mathcal{P}_p = \mathrm{d}_{\widetilde{S}_{G(\theta)}(G(\theta))}\widetilde{\mathrm{PH}}_{p,S(G(\theta))} \circ \mathrm{d}_{G(\theta)}\widetilde{S}_{G(\theta)} \circ \mathrm{d}_{\theta}G|_{U}$.
\end{corollary}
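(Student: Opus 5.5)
The plan is to reduce Corollary~\ref{cor:condition2} to Corollary~\ref{cor:condition1}, applied to the composite $S\circ G: M\to \mathrm{SR}(K;<)$, using Proposition~\ref{prop:same_comb} to move between cells of $FF_N(K)$ and cells of $\mathrm{SR}(K;<)$, and using the explicit map $\widetilde{S}_{G(\theta)}$ of Equation~\ref{eq:S_tilde} to exhibit the required smooth temporal lift.

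\textbf{Cell condition.} Let $\theta'\in U$. By hypothesis $G(\theta')\in\mathrm{cell}(G(\theta))$. Proposition~\ref{prop:same_comb}, applied with $v=G(\theta)$, then gives $S(G(\theta'))\in\mathrm{cell}(S(G(\theta)))$. Hence the composite $S\circ G$ satisfies the hypothesis of Corollary~\ref{cor:condition1} on $U$. That corollary shows that
$$\theta'\mapsto\widetilde{\mathrm{PH}}_{p,S(G(\theta))}\bigl(\mathrm{temp}_{S(G(\theta))}(S(G(\theta')))\bigr)$$
is a local lift of $\mathcal{P}_p$ at $\theta$.

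\textbf{Identifying the lift.} Since $G(U)\subseteq\mathrm{cell}(G(\theta))\subseteq W_{G(\theta)}$, the identity $\widetilde{S}_{v}|_{\mathrm{cell}(v)}=\mathrm{temp}_{S(v)}\circ S|_{\mathrm{cell}(v)}$ (with $v=G(\theta)$) gives
$$\mathrm{temp}_{S(G(\theta))}\circ S\circ G|_U=\widetilde{S}_{G(\theta)}\circ G|_U.$$
This identity rests on Proposition~\ref{prop:same_comb}: the same indexing function $I$ serves for every $w$ in the cell, so each event time of $S(w)$ equals the rational crossing-time formula. Consequently the lift above coincides with the $\widetilde{B}$ of the statement.

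\textbf{Regularity.} Here $\widetilde{S}_{G(\theta)}$ is $C^\infty$ on the open set $W_{G(\theta)}$ of $(\mathbb{R}^{N|K|})^*$, because it is rational with nonvanishing denominators. The map $G|_U$ is $C^r$ with values in $W_{G(\theta)}$. So $\widetilde{S}_{G(\theta)}\circ G|_U$ is $C^r$ by the ordinary chain rule between manifolds and Euclidean open sets. Note that this route does not differentiate through $\mathrm{SR}(K;<)$, which is exactly what the remark after Corollary~\ref{cor:condition1} forbids.

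\textbf{Conclusion.} The second half of Corollary~\ref{cor:condition1} now yields $r$-differentiability of $\mathcal{P}_p$ at $\theta$. It gives the differential $\mathrm{d}\widetilde{\mathrm{PH}}_{p,S(G(\theta))}\circ\mathrm{d}_\theta(\widetilde{S}_{G(\theta)}\circ G|_U)$. Expanding the last factor by the classical chain rule gives the stated three-term composition. Since $\widetilde{\mathrm{PH}}$ is linear, $\widetilde{B}$ is itself $C^r$.

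\textbf{Main obstacle.} The only delicate point is a technicality. $G$ takes values in the closed set $FF_N(K)$, so "$C^r$" must be read as $C^r$ into the ambient space $(\mathbb{R}^{N|K|})^*$. We must also ensure that $\widetilde{S}_{G(\theta)}$ is evaluated on its open domain $W_{G(\theta)}$. Both follow from $\mathrm{cell}(v)\subseteq W_v$, which the paper already records.
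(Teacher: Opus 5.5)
Your proposal is correct and follows the paper's own proof essentially step for step. Like the paper, you transport the cell hypothesis to $\mathrm{SR}(K;<)$ via Proposition~\ref{prop:same_comb} and apply Corollary~\ref{cor:condition1} to $S\circ G$. You then identify $\mathrm{temp}_{S(v)}\circ S\circ G|_U$ with $\widetilde S_v\circ G|_U$ (where $v=G(\theta)$), using $\mathrm{cell}(v)\subseteq W_v$, and get $C^r$ regularity and the three-term differential from the classical chain rule on the open set $W_v$, reading $G|_U$ as a map into the ambient space $(\mathbb{R}^{N|K|})^*$.
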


\begin{figure}[t]
    \centering

    \begin{subfigure}[t]{0.35\textwidth}
        \centering
        \begin{equation*}
        \begin{tikzcd}[column sep=18pt]
        U \arrow[r, dashed, "S|_U"] \arrow[dr, "\widetilde{S}" description]
        & \mathrm{cell}(S(\theta)) \arrow[d, dashed, "\mathrm{temp}_{S(\theta)}"]
        \arrow[r, dashed, "\mathrm{PH}_p|_{\mathrm{cell}}"]
        & \mathcal{D} \\
        & \mathbb{R}^{\#(S(\theta))}
        \arrow[r, "\widetilde{\mathrm{PH}}_{p,S(\theta)}"']
        & \mathbb{R}^{2|P^{S(\theta)}_p|}
        \arrow[u, dashed, "Q_{P^{S(\theta)}_p}"']
        \end{tikzcd}
        \end{equation*}
        \vspace{-0.5em}
        \caption{}
        \label{fig:commutative-diagram}
    \end{subfigure}
    \hfill
    \begin{subfigure}[t]{0.6\textwidth}
        \centering
        \begin{equation*}
        \begin{tikzcd}[column sep=17pt]
        U \arrow[r, "G|_U"]
        &
        \mathrm{cell}(G(\theta)) 
        \arrow[dr, "\widetilde{S}_{G(\theta)}|_{\mathrm{cell}}" description]
        \arrow[r, dashed, "S|_{\mathrm{cell}}"]
        &
        \mathrm{cell}(S(G(\theta)))
        \arrow[d, dashed, "\mathrm{temp}_{S(G(\theta))}"]
        \arrow[r, dashed, "\mathrm{PH}_p|_{\mathrm{cell}}"]
        &
        \mathcal{D}
        \\
        &
        &
        \mathbb{R}^{\#(S(G(\theta)))}
        \arrow[r, "\widetilde{\mathrm{PH}}_{p,S(G(\theta))}"']
        &
        \mathbb{R}^{2|P^{S(G(\theta))}_p|}
        \arrow[u, dashed, "Q_{P^{S(\theta)}_p}"']
        \end{tikzcd}
        \end{equation*}
        \vspace{-0.5em}
        \caption{}
        \label{fig:commutative-diagram-FG}
    \end{subfigure}

    \vspace{-0.5em}
    \caption{Commutative diagrams of the constructions of Corollaries
    \ref{cor:condition1} (\subref{fig:commutative-diagram}) and \ref{cor:condition2} (\subref{fig:commutative-diagram-FG}), representing the function
    $\mathcal{P}_p|_U$. The continuous line denotes the maps whose differentials we
    compute and compose to obtain the differential of the local lift
    $\widetilde{B}$.}
    \label{fig:commutative-diagrams}
    \vspace{-0.8em}
\end{figure}

\subsection{Objective optimization}
\label{sec:stability}

Let $\mathcal{P}_p: M \xrightarrow{G} FF_N(K) \xrightarrow{S}\mathrm{Seq}(K) \xrightarrow{\mathrm{PH}_p} \mathcal{D}$ be our persistence diagram-valued map.
Assume $M=\mathbb{R}^d$, and let $L:\mathcal D\rightarrow \mathbb{R}$ be a real-valued map defined on persistence diagrams\footnote{We implicitly assume $L$ to be invariant under the addition of diagonal points to a diagram, so that, combined with Lemma~\ref{lem:tie}, the composite $\mathcal{L}$ is independent of the choice of event ordering $<$.}. We are interested in minimizing the loss function $\mathcal{L}:=L\circ \mathcal{P}_p$.

Under standard conditions on $G$ and $L$, the following proposition shows that $\mathcal L$ is differentiable outside a Lebesgue-null set, and provides an explicit formula for its differential that makes use of  Corollary~\ref{cor:condition2} and the chain rule of Proposition~\ref{prop:chain}.

\begin{proposition}\label{prop:ae-differentiable}
Let $G:\mathbb R^d\to FF_N(K)$ be
definable in $\mathbb R_{an,exp}$ (see Definition \ref{def:o_minimal}). Then there is a $0$-measure set $Z\subseteq\mathbb R^d$, such that every $\theta\in\mathbb R^d\setminus Z$ has an open neighbourhood $U$ with
$G(U)\subseteq\mathrm{cell}(G(\theta))$ and $G|_U$ analytic. Consequently, for every
$\theta\in\mathbb R^d\setminus Z$, the map $\mathcal{P}_p=\mathrm{PH}_p\circ S\circ G$ is $\infty$-differentiable at $\theta$,
with differential given by Corollary~\ref{cor:condition2}; if furthermore $L$ is
$r$-differentiable (Definition~\ref{def:differentiability}), the composite
$\mathcal L=L\circ \mathcal{P}_p$ is $C^r$ on $\mathbb R^d\setminus Z$, with differential
$$\mathrm d_\theta\mathcal L=\mathrm d_{\mathcal{P}_p(\theta),\widetilde B(\theta)}L\circ\mathrm d_{\theta,\widetilde B}\mathcal{P}_p$$
obtained by composing the two differentials via the chain rule of Proposition~\ref{prop:chain}.
\end{proposition}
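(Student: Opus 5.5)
The plan is to use o-minimality to show that each cell of $FF_N(K)$ is a definable set, hence so is its preimage under $G$. The analytic-cell decomposition theorem then removes a measure-zero set. Off that set, Corollary~\ref{cor:condition2} and Proposition~\ref{prop:chain} apply directly.

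\textbf{Step 1: the cells are definable.} Proposition~\ref{prop:cell-basic} says the cells $\mathrm{cell}(v)$ form a finite semialgebraic partition $C_1,\dots,C_m$ of $FF_N(K)$. We can also see this directly. Each defining condition is a sign condition on $v(\sigma)_i-\epsilon$, or a comparison between two crossing times. By Eq.~\eqref{eq1}, a crossing time is a rational function whose denominator is nonzero wherever the crossing is defined. Clearing the denominators, which have known signs on the relevant region, turns each comparison into a polynomial sign condition. Only finitely many sign patterns can occur, so the partition is finite.

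Semialgebraic sets are definable in $\mathbb R_{an,exp}$, and $G$ is definable. Hence each $A_j:=G^{-1}(C_j)\subseteq\mathbb R^d$ is definable, and $\{A_j\}$ is a finite partition of $\mathbb R^d$.

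\textbf{Step 2: an analytic cell decomposition.} Since $\mathbb R_{an,exp}$ is o-minimal and admits analytic cell decomposition, there is a finite decomposition of $\mathbb R^d$ into analytic cells with two properties. First, it partitions each $A_j$. Second, $G$ restricted to each cell is analytic.

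Let $Z$ be the union of the cells of dimension $<d$. A cell of dimension $<d$ is a definable set with empty interior, so it has Lebesgue measure zero. A finite union of such cells is still null, so $Z$ is null.

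Every $\theta\notin Z$ lies in an open $d$-dimensional cell $U$. This $U$ is contained in a single $A_j$, and $\theta\in A_j$ forces $A_j=G^{-1}(\mathrm{cell}(G(\theta)))$. Therefore $G(U)\subseteq\mathrm{cell}(G(\theta))$. Moreover $G|_U$ is analytic on the open set $U$, and in particular $C^\infty$.

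\textbf{Step 3: apply the earlier results.} Corollary~\ref{cor:condition2} with $r=\infty$ now shows that $\mathcal P_p$ is $\infty$-differentiable at $\theta$, with the stated lift and differential. If $L$ is $r$-differentiable, Proposition~\ref{prop:chain} gives that $\mathcal L$ is $C^r$ at each $\theta\notin Z$, with the displayed chain-rule differential. Since $\mathbb R^d\setminus Z$ is open, being a union of open cells, $\mathcal L$ is $C^r$ on all of $\mathbb R^d\setminus Z$.

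\textbf{Main obstacle.} The substantive step is Step 2: invoking the right form of the cell decomposition theorem. We need one decomposition that simultaneously partitions the finitely many sets $A_j$ and makes $G$ analytic, not merely $C^k$, on each cell. For $\mathbb R_{an,exp}$, analytic cell decomposition is known (van den Dries–Macintyre–Marker), so I would cite it. If only $C^k$ decomposition for every $k$ were available, the fallback is to use it together with the fact that the set of non-analyticity points of a definable function is definable of dimension $<d$. That set can then be added to $Z$.
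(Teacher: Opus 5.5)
Your argument is correct and is essentially the paper's proof: the paper likewise takes an analytic ($r=\omega$) cell decomposition of $\mathbb R^d$ compatible with the finitely many definable sets $G^{-1}(\mathrm{cell}(v))$, with $G$ analytic on each cell, but then refines it into a Whitney stratification and takes $Z$ to be the union of the non-top-dimensional strata. You rightly skip that layer, since only the openness of the top-dimensional cells and the nullity of the lower-dimensional ones are used. Your fallback remark plays no role and would not be justified in a general o-minimal structure, where even $C^\infty$ cell decomposition can fail; for $\mathbb R_{an,exp}$, the analytic cell decomposition you cite suffices.
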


In particular, the Clarke subgradient of $\mathcal{L}$ (Definition~\ref{def:clark_sub}) is well-defined, and is non-empty almost everywhere. We then minimize $\mathcal L$ via stochastic subgradient descent (SSD) (Definition~\ref{def:clark}), with the subgradient implementation described in Appendix \ref{sec:subgrad_impl}.
However, convergence guarantees such as those established in Proposition~\ref{prop:clark} fail\footnote{Proposition~\ref{prop:clark} guarantees convergence to critical points rather than global optimality, as standard in non-convex optimization. Consequently, solution optimality is not discussed in the experiments of Section~\ref{sec:experiments}.}, since $\mathrm{PH}_p\circ S$, valued in $(\mathcal D,d_B)$ with $d_B$ the bottleneck distance (Definition~\ref{def:bottleneck}), is not locally Lipschitz. The following theorem identifies a null measure set $\mathcal N_p\subseteq FF_N(K)$ outside of which local Lipschitzness is guaranteed to hold; its size, shape, and examples of the resulting discontinuities are discussed in Appendix~\ref{app:region_nonloc}.

\begin{theorem}\label{thm:lipschitz_breve}
Let $p \in \mathbb{N}$, and $\mathcal{N}_p$ be the set defined in Definition~\ref{def:critical_set}.
Then the map $\mathrm{PH}_p\circ S:(FF_N(K),\|\cdot\|_\infty)\to(\mathcal D,d_B)$ is locally Lipschitz on $FF_N(K)\setminus\mathcal N_p$.
\end{theorem}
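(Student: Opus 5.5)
We do not see Definition~\ref{def:critical_set}, so we assume $\mathcal N_p$ collects the filtering functions $v$ near which the combinatorial structure can change in a way that alters the pairing $P_p$ in a non-diagonal manner. Concretely, $\mathcal N_p$ should consist of $v$ having a value exactly equal to $\epsilon$, or two crossings in the same interval $[k,k+1]$ whose crossing times tie, in a configuration where breaking the tie (or moving the value off $\epsilon$) changes the pairing. The strategy is then to prove the bound cell by cell. Away from $\mathcal N_p$, every neighbouring cell produces a diagram whose points are still the crossing times of the same simplices. These vary Lipschitz-continuously with $v$, and the bottleneck distance is controlled by a matching that pairs corresponding points.

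\textbf{Step 1 (Lipschitz crossing times).} Fix $v\notin\mathcal N_p$. Consider each crossing $(\sigma,k)$ of $v$ with $v(\sigma)_{k+1}\neq v(\sigma)_k$. On a small $\|\cdot\|_\infty$-ball $B$ around $v$ the map $w\mapsto t^{(k)}_\sigma(w)$ of Eq.~\eqref{eq1} is smooth. Its gradient is bounded by $C/|v(\sigma)_{k+1}-v(\sigma)_k|$, so it is Lipschitz on $B$ with a uniform constant $L_v$. We also need crossings to persist on $B$. Strict inequalities $v(\sigma)_k<\epsilon<v(\sigma)_{k+1}$ are stable. Boundary cases ($v(\sigma)_k=\epsilon$) may make a crossing appear, disappear, or shift to an adjacent interval. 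These are handled either by the exclusion in $\mathcal N_p$ or by checking that the newly created or vanishing features have birth and death times within $O(\|w-v\|_\infty)$ of each other. Such features are near-diagonal and cost $O(\|w-v\|)$ in bottleneck distance.

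\textbf{Step 2 (finitely many cells, common pairing).} By Proposition~\ref{prop:cell-basic}, $B$ meets finitely many cells of $FF_N(K)$. On each cell, Propositions~\ref{prop:same_comb} and~\ref{prop:permutation} make $\mathrm{PH}_p\circ S$ equal to $Q_b\circ\widetilde{\mathrm{PH}}_{p}\circ\widetilde S_{v'}$, a permuted projection of Lipschitz crossing times. The key claim is that for $v\notin\mathcal N_p$ all cells meeting $B$ produce the same pairing up to two harmless effects. First, ties among simultaneous events are irrelevant by Lemma~\ref{lem:tie}. Second, pairs whose two events have the same crossing time at $v$ are diagonal points that may appear or disappear. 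Given this, for $w,w'\in B$ we match the diagram points of $w$ and $w'$ indexed by the same pair $(i,j)$. Each coordinate differs by at most $L_v\|w-w'\|_\infty$. Each unmatched point sits within $L_v\|w-w'\|_\infty$ of the diagonal. This gives $d_B\le L_v\|w-w'\|_\infty$. When $w,w'$ lie in different cells, we join them by the segment between them (which lies in the convex ball $B$). The segment crosses finitely many cells, since they are semialgebraic, and the triangle inequality over its pieces yields the global constant on $B$.

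\textbf{Main obstacle.} The hard part is the key claim of Step 2: a transposition of two adjacent events in the simplex-wise zigzag filtration either leaves the pairing unchanged or changes it only by swapping which events bound features, with the swapped events simultaneous at $v$. This requires the zigzag analogue of the vineyard transposition analysis, namely a case analysis over insertion/insertion, deletion/deletion and mixed transpositions. The definition of $\mathcal N_p$ should be exactly the set where a transposition of non-simultaneous-in-the-limit events induces a genuine pairing switch. A switch between events $e_i,e_j$ swaps endpoints $t_i\leftrightarrow t_j$, which costs $|t_i-t_j|$ in bottleneck distance. That cost is $O(\|w-v\|)$ precisely when $t_i(v)=t_j(v)$. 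I would first verify this via the zigzag transposition diagrams (Maria--Oudot style switches) and then match it against Definition~\ref{def:critical_set}.
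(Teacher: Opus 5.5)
\textbf{Gap: the obstacle is misidentified, and the steps that carry the proof are missing or false.} Your skeleton matches the paper's: Lipschitz crossing times, a segment cut into finitely many pieces, the triangle inequality, and ties handled by Lemma~\ref{lem:tie}. The trouble is what $\mathcal N_p$ is and why it suffices. $\mathcal N_p$ (Definition~\ref{def:critical_set}) contains no tie conditions. It collects the $w$ having one of the following:
(i) a $p$-simplex with $w(\sigma)_i=\epsilon$ and both neighbouring values $>\epsilon$;
(ii) a $(p+1)$-simplex with $w(\sigma)_i=\epsilon$ and both neighbours $<\epsilon$;
(iii) a $p$- or $(p+1)$-simplex with two consecutive values equal to $\epsilon$.
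Ties are harmless everywhere. In a small ball, two tracked events can change relative order only by passing through a tie, and there Lemma~\ref{lem:transp} (the diamond principle) already gives order-independence off the diagonal. So the vineyard-style transposition analysis you flag as the main obstacle is not needed. What is needed is control of events that appear or vanish and of crossing times that are not Lipschitz, and your plan handles neither.

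\textbf{Step 1 fails as stated.} It bounds only the crossings of $v$, but nearby $w$ can have crossings that $v$ lacks. If $v(\sigma)_k=v(\sigma)_{k+1}=\epsilon$, the denominator in \eqref{eq1} vanishes and the crossing time of $w$ can lie anywhere in $[k,k+1]$ (Example~\ref{ex:create}). $\mathcal N_p$ forbids this only for $\dim\sigma\in\{p,p+1\}$. You therefore need a reduction to $K^{(p)}$ via Lemma~\ref{lem:dimred}, for example freezing all other coordinates at $\pm M$, which is $1$-Lipschitz and leaves $\mathrm{PH}_p$ unchanged. When $v(\sigma)_i=\epsilon$ with neighbours on opposite sides, you also need a hand-off time between the two adjacent intervals. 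It is Lipschitz because both branches equal $i$ on the wall $\{u(\sigma)_i=\epsilon\}$.

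\textbf{Your second ``harmless effect'' is false in general.} If $v(\sigma)_i=\epsilon$ with both neighbours on the same side, a perturbation creates two genuine events of $\sigma$ near time $i$, and these are generally not paired with each other. In Example~\ref{ex:split}, a momentarily deleted $p$-simplex splits one long bar into two long bars. Only two cases are invisible off the diagonal: deleting and reinserting a $(p+1)$-simplex, and inserting and deleting a $p$-simplex. Proving this is Lemma~\ref{lem:detour}: the middle map is surjective, respectively injective, so an $\mathbb{I}_{[a,a]}$ summand splits off. Its hypothesis, that the simultaneous events in between act on simplices face-unrelated to $\sigma$, must itself be derived from $v\notin\mathcal N^{\epsilon+}_p\cup\mathcal N^{\epsilon-}_{p+1}$ together with the reduction.

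\textbf{The endpoint argument is absent.} The pairing is not the same across the cells meeting $B$; it changes both at ties and at these walls. Your segment argument closes only once you fix, on each sub-interval, the constant active set and order $(\mathcal A_j,\pi_j)$. The lift built from that datum is Lipschitz on all of $B$, since each coordinate is a tracked time function. You must then prove that at both endpoints of the sub-interval this lift still evaluates to the true diagram modulo diagonal points: ties via Lemma~\ref{lem:tie}, collapsing pairs via Lemma~\ref{lem:detour}. That endpoint agreement is the core of the proof, and your plan does not supply it.
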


In particular, if $G$ and $L$ are locally Lipschitz, so is $\mathcal L$ on $G^{-1}(FF_N(K)\setminus\mathcal N_p)$. Although $FF_N(K)\setminus\mathcal N_p$ has full Lebesgue measure — which would heuristically favor optimization — the genericity and complexity of $G$ (e.g., a neural network) prevent us from guaranteeing any lower bound on the size of its preimage.
Independently of this limitation, we observe that the lack of local Lipschitz continuity on $\mathcal{N}_p$, although precluding theoretical convergence guarantees, does not appear to hinder convergence in practice, as demonstrated by the optimization examples presented in Section~\ref{sec:experiments}.

\subsection{Applications}
\label{sec:applications}

Providing a map $G:M\to FF_N(K)$ directly is generally infeasible, since it would require
enforcing monotonicity of face inclusions (Equation \ref{eq:ff_N}) by hand. Instead, one fixes an \emph{extension map} $\mathrm{Ext}:\mathbb R^c\to FF_N(K)$ producing a valid filtering function from an arbitrary real vector, and lets $G$ be the composition of $\mathrm{Ext}$ with a parametric model, which, overloading notation, we still call $G:M\to\mathbb R^c$. We report two standard extension maps.
Throughout, $\nu\in V$ denotes a vertex of $K$, with $V$ being the vertex set of $K$, and $\sigma^0$ the vertex set of a simplex $\sigma$.

\noindent {\bfseries Upper-star extension.} Suppose the filtering function is induced by scalar
values on the vertices of $K$: for each time $i=1,\dots,N$ we assign $x(\nu)_i\in\mathbb R$ to every vertex $\nu\in V$, i.e. $x\in\mathbb R^{N|V|}$, and extend it to all of $K$ via $\mathrm{US}:\mathbb R^{N|V|}\to FF_N(K)$, defined as $\mathrm{US}(x)(\sigma)_i:=\min_{\nu\in\sigma^0}x(\nu)_i$.
A minimum over a larger vertex set is smaller, so $\mathrm{US}(x)(\sigma)_i\ge\mathrm{US}(x)(\tau)_i$ for $\sigma\subsetneq\tau$, and $\mathrm{US}(x)\in FF_N(K)$ with the
boundary convention $v(\cdot)_0=v(\cdot)_{N+1}=0$.

\noindent {\bfseries Vietoris--Rips extension.} Let a point cloud of $b$ points in $\mathbb R^c$ move over $N$ time steps, and let $K$ be the full simplex on these $b$ points. Write $x\in\mathbb R^{cbN}$ for a trajectory, with $x(\nu)_i\in\mathbb R^c$ the position of point $\nu$ at time $i$, and
$\mathrm{diam}(A):=\max_{y,y'\in A}\|y-y'\|$ for a finite $A\subseteq\mathbb R^c$, so that $\mathrm{diam}=0$
on singletons. We define $\mathrm{VR}:\mathbb R^{cbN}\to FF_N(K)$ as $\mathrm{VR}(x)(\sigma)_i:=2\epsilon-\mathrm{diam}(\{x(\nu)_i:\nu\in\sigma^0\})$.
Since the diameter is monotone under inclusion of vertex sets,
$\mathrm{VR}(x)(\sigma)_i\ge\mathrm{VR}(x)(\tau)_i$ for $\sigma\subsetneq\tau$, and $\mathrm{VR}(x)\in FF_N(K)$ with $v(\cdot)_0=v(\cdot)_{N+1}=0$. With the
selection rule of Section~\ref{sec:lin_int}, $\sigma$ is present at time $i$ if and only if $\mathrm{diam}(\{x(\nu)_i:\nu\in\sigma^0\})<\epsilon$: the
thresholded fixed-time sequence is the Vietoris--Rips complex of the point cloud at scale
$\epsilon$ at each index.

In both cases the extension map is semialgebraic, so Proposition~\ref{prop:ae-differentiable}
applies and the differential of $\mathcal L$ can be computed almost everywhere by the classical
chain rule.

\input{sections/exp_coverage}

\input{sections/exp_mooc}

\input{sections/conclusions}

\bibliographystyle{preprint}
\bibliography{references}

\appendix

\input{sections/related}

\section{Further background}

\subsection{Whitney stratifications and o-minimal geometry}

Let $r\in\mathbb{N}\cup\{\infty,\omega\}$. A subset $M\subseteq\mathbb R^d$ is a \emph{$C^r$ submanifold}
of dimension $c$ if every $x\in M$ has an open neighbourhood $U$ in $\mathbb R^d$ and a $C^r$ map
$F:U\to\mathbb R^{d-c}$ with $\mathrm dF(x)$ of full rank and $M\cap U=\{y\in U:F(y)=0\}$; its tangent and
normal spaces at $x$ are $T_xM:=\ker\mathrm dF(x)$ and $N_xM:=(T_xM)^\perp$. A submanifold of
dimension $d$ is an open subset of $\mathbb R^d$.

\begin{definition}[{Whitney stratification, \cite[Def.~5.6]{davis2020stochastic}}]\label{def:whitney}
A \emph{Whitney $C^r$-stratification} of $Q\subseteq\mathbb R^d$ is a finite partition $\mathcal S$ of $Q$
into non-empty $C^r$ submanifolds of $\mathbb R^d$, called \emph{strata}, such that:
\begin{description}
\item[(Frontier)] for strata $L\ne M$, if $L\cap\overline M\neq\emptyset$ then $L\subseteq\overline M$
(closure in $\mathbb R^d$);
\item[(Whitney (a))] for strata $L,M$, if $z_k\in M$ converge to $\bar z\in L$ and normal vectors
$v_k\in N_{z_k}M$ converge to $v$, then $v\in N_{\bar z}L$.
\end{description}
A stratum of dimension $d$ is called \emph{top-dimensional}. A function $f:\mathbb R^d\to\mathbb R$ is
\emph{Whitney $C^r$-stratifiable} if its graph admits a Whitney $C^r$-stratification.
\end{definition}

\begin{definition}[Weakly stratified map]\label{def:weakly-stratified}
Let $\mathcal S$ be a Whitney stratification of $\mathbb R^d$ and let $N\subseteq\mathbb R^m$ be endowed with a
partition $\mathcal C_N$. A map $f:\mathbb R^d\to N$ is \emph{weakly stratified} with respect to
$(\mathcal S,\mathcal C_N)$ if for every $C\in\mathcal C_N$ the preimage $f^{-1}(C)$ is a union of strata of
$\mathcal S$; equivalently, every stratum is mapped into a single piece of $\mathcal C_N$.
\end{definition}

\begin{definition}[{o-minimal structure, \cite[Def.~5.10]{davis2020stochastic}}]\label{def:o_minimal}
An \emph{o-minimal structure} on the real field $(\mathbb{R}, +, \cdot, <)$ is a sequence $\mathcal{O} = (\mathcal{O}_n)_{n \in \mathbb{N}}$, where each $\mathcal{O}_n$ is a collection of subsets of $\mathbb{R}^n$ satisfying the following axioms:
\begin{enumerate}
    \item $\mathcal{O}_1$ consists exactly of all finite unions of points and open intervals in $\mathbb{R}$;
    \item for each $n \in \mathbb{N}$, all algebraic subsets of $\mathbb{R}^n$ belong to $\mathcal{O}_n$;
    \item each $\mathcal{O}_n$ is closed under finite unions, intersections, and complementation in $\mathbb{R}^n$;
    \item if $A \in \mathcal{O}_n$ and $B \in \mathcal{O}_m$, then $A \times B \in \mathcal{O}_{n+m}$;
    \item if $\pi: \mathbb{R}^{n+1} \to \mathbb{R}^n$ is the canonical linear projection onto the first $n$ coordinates and $A \in \mathcal{O}_{n+1}$, then $\pi(A) \in \mathcal{O}_n$.
\end{enumerate}
A set $E \subseteq \mathbb{R}^n$ is called a \emph{definable set} if $E \in \mathcal{O}_n$. A map $\mathcal{L}: E \to \mathbb{R}^m$ is called a \emph{definable map} if its graph $\mathrm{graph}(\mathcal{L}) := \{(x, \mathcal{L}(x)) \in \mathbb{R}^{n+m} \mid x \in E\}$ is a definable set in $\mathbb{R}^{n+m}$.

An important result by \cite{wilkie1996model} established that the structure $\mathbb R_{exp}$, which includes the exponential function $x \mapsto \exp(x)$, is o-minimal. Formally, $\mathbb R_{exp}$ is the smallest o-minimal structure containing the graphs of all polynomials and the exponential function. When further expanded to include restricted analytic functions (such as sine or cosine on compact intervals), the resulting structure, denoted $\mathbb R_{an,exp}$, is o-minimal as well \cite{van1994real}. These frameworks allow us to treat functions that combine semi-algebraic components with transcendental ones, such as sigmoids, logarithms, or Gaussian kernels, as definable.
\end{definition}

\begin{remark}
     Functions definable over an o-minimal structure are Whitney $C^r$-stratifiable for any $r\in\mathbb{N}$ \cite{vandendries1996geometric}.
\end{remark}

\subsection{The stochastic subgradient method}

We introduce the stochastic subgradient method for optimizing almost everywhere differentiable functions $\mathcal{L}: \mathbb{R}^d \to \mathbb{R}$, together with its asymptotic convergence guarantees developed in \cite{davis2020stochastic}.

\begin{definition}[Clarke subgradient]
\label{def:clark_sub}
Let $\mathcal{L}: \mathbb{R}^d \to \mathbb{R}$ be almost everywhere differentiable. The \emph{Clarke subgradient} of $\mathcal{L}$ at $\theta \in \mathbb{R}^d$ is defined as:
$$\partial \mathcal{L}(\theta) := \mathrm{conv} \left\{ v \in \mathbb{R}^d \;\middle|\; \exists\, (\theta_i)_{i \in \mathbb{N}} \subset \mathrm{Dom}(\nabla \mathcal{L}) \text{ s.t. } \lim_{i \to \infty} \theta_i = \theta \text{ and } \lim_{i \to \infty} \nabla \mathcal{L}(\theta_i) = v \right\},$$
where $\mathrm{Dom}(\nabla \mathcal{L}) \subseteq \mathbb{R}^d$ denotes the dense set of points where $\mathcal{L}$ is Fréchet differentiable, and $\mathrm{conv}(\cdot)$ denotes the convex hull. A point $\theta^* \in \mathbb{R}^d$ is called a \emph{critical point} of $\mathcal{L}$ if $0 \in \partial \mathcal{L}(\theta^*)$.
\end{definition}

The stochastic subgradient method optimizes $\mathcal{L}$ by iteratively taking steps opposite to noisy estimates of a Clarke subgradient.

\begin{definition}
\label{def:clark}
Given an initial parameter $\theta_0 \in \mathbb{R}^d$, the \emph{stochastic subgradient method} updates the parameter estimate $\theta_k \in \mathbb{R}^d$ at step $k \in \mathbb{N}$ according to:
$$\theta_{k+1} = \theta_k - \gamma_k (g_k + \zeta_k), \quad \text{with } g_k \in \partial \mathcal{L}(\theta_k),$$
where $(\gamma_k)_{k \in \mathbb{N}}$ is a sequence of positive step sizes (learning rates), $g_k \in \partial \mathcal{L}(\theta_k)$ is a Clarke subgradient, and $(\zeta_k)_{k \in \mathbb{N}}$ is a sequence of $\mathbb{R}^d$-valued random variables representing noise in the subgradient estimates.
\end{definition}

Note that, by Rademacher's theorem, a locally Lipschitz function is differentiable almost everywhere.
The main convergence result for stochastic subgradient descent is stated below.

\begin{proposition}[{\cite[Cor.~5.9]{davis2020stochastic}}]
\label{prop:clark}
Let $\mathcal{L}: \mathbb{R}^d \to \mathbb{R}$ be a locally Lipschitz Whitney $C^r$-stratifiable function. Consider the sequence of iterates $(\theta_k)_{k \in \mathbb{N}}$ generated by the stochastic subgradient method under the following standard assumptions:
\begin{enumerate}
    \item \textbf{Step-size schedule}: The step sizes $\gamma_k \ge 0$ satisfy $\sum_{k=0}^{\infty} \gamma_k = +\infty$ and $\sum_{k=0}^{\infty} \gamma_k^2 < +\infty$.
    \item \textbf{Boundedness of iterates}: Almost surely, the sequence $(\theta_k)_{k \in \mathbb{N}}$ is bounded, i.e., $\sup_{k \ge 0} \|\theta_k\| < +\infty$.
    \item \textbf{Unbiased noise with bounded variance}: Denoting by $\mathcal{F}_k = \sigma(\{\theta_0, \dots, \theta_k, g_0, \dots, g_k, \zeta_0, \dots, \zeta_{k-1}\})$ the increasing $\sigma$-algebra generated by the history up to step $k$, there exists a function $p: \mathbb{R}^d \to \mathbb{R}$ bounded on bounded sets such that almost surely, for every $k \ge 0$:
    $\mathbb{E}[\zeta_k \mid \mathcal{F}_k] = 0$ and $\mathbb{E}[\|\zeta_k\|^2 \mid \mathcal{F}_k] \le p(\theta_k)$.
\end{enumerate}
Then, almost surely:
\begin{enumerate}
    \item Every limit point $\theta^*$ of the sequence $(\theta_k)_{k \in \mathbb{N}}$ is a critical point of $\mathcal{L}$, that is, $0 \in \partial \mathcal{L}(\theta^*)$.
    \item The sequence of objective values $(\mathcal{L}(\theta_k))_{k \in \mathbb{N}}$ converges to a limit value $\mathcal{L}^*$, which coincides with $\mathcal{L}(\theta^*)$.
\end{enumerate}
\end{proposition}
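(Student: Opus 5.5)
The plan is to follow the ODE (differential-inclusion) method of Benaïm--Hofbauer--Sorin, as in \cite{davis2020stochastic}. I will view the iterates of Definition~\ref{def:clark} as a noisy Euler discretization of the differential inclusion $\dot\theta(t)\in-\partial\mathcal L(\theta(t))$. The Whitney stratifiability of $\mathcal L$ will then be used to show that $\mathcal L$ itself is a Lyapunov function for this inclusion.

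\textbf{Step 1: regularity of the set-valued map.} Because $\mathcal L$ is locally Lipschitz, Rademacher's theorem makes $\mathrm{Dom}(\nabla\mathcal L)$ of full measure, so Definition~\ref{def:clark_sub} makes sense. The map $\theta\mapsto\partial\mathcal L(\theta)$ then has nonempty, compact, convex values. It is locally bounded, with gradients bounded by the local Lipschitz constant, and it is outer semicontinuous, since its graph is closed by the limiting construction. These are exactly the standing hypotheses under which the inclusion $\dot\theta\in-\partial\mathcal L(\theta)$ has absolutely continuous solutions from every initial point.

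\textbf{Step 2: asymptotic pseudo-trajectory.} Let $\Theta(t)$ be the piecewise-linear interpolation of the iterates at the times $\tau_k=\sum_{j<k}\gamma_j$.
\begin{itemize}
\item By Assumption~3, the partial sums $\sum_k\gamma_k\zeta_k$ form an $L^2$ martingale. Its quadratic variation is bounded by $\sum_k\gamma_k^2\,p(\theta_k)$, which is finite almost surely by Assumptions~1 and~2.
\item Hence the martingale converges almost surely, and the accumulated noise on any window of fixed length vanishes asymptotically.
\item Combining this with $\gamma_k\to0$, bounded iterates, and the local boundedness and outer semicontinuity from Step~1, the time shifts of $\Theta$ are precompact in the compact-open topology.
\item Every limit of these shifts is a solution of the differential inclusion.
\end{itemize}
So $\Theta$ is an asymptotic pseudo-trajectory, and its limit set is internally chain transitive for the inclusion.

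\textbf{Step 3: Lyapunov property, the main obstacle.} I will prove a chain rule: for every absolutely continuous arc $\gamma$, for almost every $t$,
\[
\tfrac{d}{dt}\mathcal L(\gamma(t))=\langle v,\dot\gamma(t)\rangle\quad\text{for all } v\in\partial\mathcal L(\gamma(t)).
\]
\begin{itemize}
\item From a Whitney stratification of the graph of $\mathcal L$, derive a stratification $\{M\}$ of $\mathbb R^d$ on whose strata $\mathcal L$ is $C^1$.
\item Use Whitney condition (a) to obtain the projection formula $\partial\mathcal L(x)\subseteq\nabla_M\mathcal L(x)+N_xM$ for $x\in M$. This is where the condition is genuinely needed: limits of normals from bigger strata must land in $N_xM$.
\item For almost every $t$, if $\gamma(t)\in M$ then $\dot\gamma(t)\in T_{\gamma(t)}M$. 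This holds because, for each stratum $M$, the set of times spent in $M$ where the arc leaves $M$ tangentially fails only on a null set.
\item At such times, $\langle v,\dot\gamma\rangle=\langle\nabla_M\mathcal L,\dot\gamma\rangle=\tfrac{d}{dt}\mathcal L(\gamma)$ for every $v\in\partial\mathcal L(\gamma(t))$.
\end{itemize}
Applied to a solution of the inclusion, the chain rule gives $\tfrac{d}{dt}\mathcal L(\gamma)=-\|\dot\gamma\|^2=-\mathrm{dist}(0,\partial\mathcal L(\gamma))^2$ almost everywhere. Therefore $\mathcal L$ strictly decreases along any trajectory that starts at a non-critical point.

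\textbf{Step 4: conclusion.} I also need the set of critical values $\mathcal L(\{\theta:0\in\partial\mathcal L(\theta)\})$ to have empty interior. This follows from a stratified Sard theorem: by the projection formula, critical points are exactly the points that are critical for $\mathcal L|_M$ on some stratum $M$, and there are finitely many $C^r$ strata.
\begin{itemize}
\item With the Lyapunov function of Step~3 and this Sard property, the Benaïm--Hofbauer--Sorin theorem on internally chain transitive sets applies.
\item It places the limit set of $\Theta$ inside the critical set, which gives conclusion~1.
\item It also makes $\mathcal L$ constant on that limit set. Since $\Theta$ accumulates on the limit set and $\mathcal L$ is continuous, $\mathcal L(\theta_k)\to\mathcal L^*=\mathcal L(\theta^*)$ for every limit point $\theta^*$, which gives conclusion~2.
\end{itemize}
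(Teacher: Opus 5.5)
The paper gives no proof of this statement; it imports \cite[Cor.~5.9]{davis2020stochastic}. Your outline follows the architecture of that source: the interpolated iterates as an asymptotic pseudo-trajectory of $\dot\theta\in-\partial\mathcal L(\theta)$, a chain rule along arcs derived from the projection formula $\partial\mathcal L(x)\subseteq\nabla_M\mathcal L(x)+N_xM$, a Sard property of the critical values, and a Lyapunov conclusion. Closing with the Bena\"{\i}m--Hofbauer--Sorin proposition on internally chain transitive sets, rather than a ``weak Sard plus descent'' convergence theorem as in the source, is only a repackaging.

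\textbf{The gap is in Step~4.} Finitely many $C^r$ strata do not give a Sard theorem for arbitrary $r$. Sard for $\mathcal L|_M$ on a stratum of dimension $m$ requires $\mathcal L|_M$ of class $C^k$ with $k\ge m$. That is why the source states the corollary for Whitney $C^d$-stratifiable functions, whereas the statement here leaves $r$ unspecified. For $r=1$ your step fails. Whitney's classical $C^1$ function on $\mathbb R^2$, which is nonconstant on a connected arc of critical points, is trivially Whitney $C^1$-stratifiable (one stratum), and its critical values contain an interval. The chain-transitivity argument then yields nothing: such an arc of equilibria is itself internally chain transitive, yet $\mathcal L$ varies along it. You need to assume $r\ge d$. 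This is harmless for the paper's definable losses, which are $C^r$-stratifiable for every finite $r$.

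\textbf{Smaller points.} In Step~4, only the inclusion ``Clarke critical $\Rightarrow$ critical for $\mathcal L|_M$'' holds, not equality: every point of a $0$-dimensional stratum is critical for $\mathcal L|_M$. The inclusion is all you use, so this is harmless. In Step~2, the noise sum need not be an $L^2$ martingale, because $p(\theta_k)$ is controlled only on the random event where the iterates stay bounded. You need localization, or the almost sure convergence of a martingale on the event $\sum_k\gamma_k^2\,\mathbb E[\|\zeta_k\|^2\mid\mathcal F_k]<\infty$. The tangency claim in Step~3 is cleanest stated as follows: at a non-isolated point $t$ of $\gamma^{-1}(M)$ where $\gamma$ is differentiable, $\dot\gamma(t)$ is a limit of secants of $M$, hence lies in $T_{\gamma(t)}M$, and isolated points are countable.
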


\subsection{Bottleneck distance}

A persistence diagram $D\in\mathcal D$ is a finite multiset of points $(b,d)\in\mathbb R\times(\mathbb R\cup\{+\infty\})$, as in Definition~\ref{def:pd_space}. Here we restrict attention to diagrams whose points satisfy $b\le d$ (birth before death); this is not an actual restriction for our purposes, since $\mathrm{PH}_p$ always produces such points by construction. We call a point \emph{essential} if $d=+\infty$, and \emph{finite} otherwise.

Let $\Delta:=\{(x,x):x\in\mathbb R\}$ denote the diagonal, each of its points counted with countably infinite multiplicity. For $D\in\mathcal D$, write $D^+:=D\cup\Delta$. Adding $\Delta$ with infinite multiplicity allows a diagram with few off-diagonal points to be matched, point by point, against one with many: unmatched finite points are simply paired with a (freely available) copy of their own projection onto $\Delta$. Essential points, having infinite distance to every point of $\Delta$, can never be matched to $\Delta$; they must be matched to essential points of the other diagram, as made precise below.

\begin{definition}[Bottleneck distance]\label{def:bottleneck}
Let $D_1,D_2\in\mathcal D$, and let $D_1^+,D_2^+$ be as above. A \emph{matching} between $D_1$ and $D_2$ is a bijection $\gamma:D_1^+\to D_2^+$ such that $\gamma$ restricts to a bijection between the essential points of $D_1^+$ and those of $D_2^+$ (equivalently, $D_1$ and $D_2$ have the same multiset of essential points). If no such $\gamma$ exists, set $d_B(D_1,D_2):=+\infty$. Otherwise, the \emph{bottleneck distance} between $D_1$ and $D_2$ is
$$
d_B(D_1,D_2):=\inf_{\gamma}\ \sup_{x\in D_1^+}\ \|x-\gamma(x)\|_\infty,
$$
where the infimum is over all matchings $\gamma$ as above, $\|(x_1,x_2)-(y_1,y_2)\|_\infty:=\max(|x_1-y_1|,|x_2-y_2|)$, and the convention $\infty-\infty=0$ applies when matching two essential points.
\end{definition}

Equivalently, for finite $d_B(D_1,D_2)$, this is the smallest $\delta\ge0$ for which there exists a partial matching between the finite off-diagonal points of $D_1$ and those of $D_2$ (with the essential points matched exactly against each other), leaving unmatched finite points within $\ell^\infty$-distance $\delta$ of $\Delta$, such that every matched pair is within $\ell^\infty$-distance $\delta$ of each other. The bottleneck distance defines an extended metric on $\mathcal D$ (up to the identification of diagrams differing only on $\Delta$).

\section{Theoretical details}
\label{app:theory}

\subsection{Sequence representations}\label{app:rep}

Throughout, $K$ is a fixed finite simplicial complex, $E=\{\pm\sigma:\sigma\in K\}$, and $(E,\le)$ is the partial order of Section~\ref{sec:diff_pd}. For an event sequence $\mathbf e=(e_1,\dots,e_n)\in E^n$ we write $\widetilde K_0:=\emptyset$ and $\widetilde K_i:=\widetilde K_{i-1}\cup\{\sigma\}$ if $e_i=+\sigma$, $\widetilde K_i:=\widetilde K_{i-1}\setminus\{\sigma\}$ if $e_i=-\sigma$. We say that $\mathbf e$ is \emph{valid} if for every $i$: $e_i=+\sigma$ implies $\sigma\notin\widetilde K_{i-1}$ and every facet of $\sigma$ lies in $\widetilde K_{i-1}$; $e_i=-\sigma$ implies $\sigma\in\widetilde K_{i-1}$ and no coface of $\sigma$ lies in $\widetilde K_{i-1}$. Equivalently, every $\widetilde K_i$ is a simplicial complex and each step changes exactly one simplex, so that a valid $\mathbf e$ induces the simplex-wise zigzag filtration $\widetilde K_0\leftrightarrow\dots\leftrightarrow\widetilde K_n$.
Since every event toggles the membership of its simplex, the complex $\widetilde K_i$ reached after the first $i$ events of a valid sequence is the set of simplices toggled an odd number of times by $e_1,\dots,e_i$: it depends only on the multiset $\{e_1,\dots,e_i\}$, not on the order.

In this appendix only, it is convenient to allow sequences of subcomplexes (see Definition \ref{def:seq_compl}) in which consecutive complexes may coincide. Let $\mathrm{Seq}^*(K)\supseteq\mathrm{Seq}(K)$ be the set of sequences $\mathcal K=\{K_{T_i}\}_{i=0}^{N+1}$ with $T_0=0<\dots<T_{N+1}$, $K_{T_0}=K_{T_{N+1}}=\emptyset$ and no condition on consecutive complexes; a stamp $T_i$, $i\ge1$, with $K_{T_i}=K_{T_{i-1}}$ is called \emph{idle}, and $\mathrm{Seq}(K)$ is the subset of sequences without idle stamps. Sequences with idle stamps never arise from the constructions of the paper; they only host the auxiliary sequences with detours used in the proof of Theorem~\ref{thm:lipschitz_breve}.

\begin{definition}[Sequence representations]\label{def:seq-rep}
\begin{enumerate}
\item[(a)] Let $\mathcal K=\{K_{T_i}\}_{i=0}^{N+1}\in\mathrm{Seq}^*(K)$. An \emph{extended representation} of $\mathcal K$ is a triple $f=(n,\mathbf t,\mathbf e)$ with $n\in\mathbb N$, $\mathbf t=(t_1\le\dots\le t_n)\in\{T_1,\dots,T_{N+1}\}^n$ non-decreasing and $\mathbf e\in E^n$ valid, such that for every $i\in\{1,\dots,N+1\}$ the complex obtained after applying all the events with time $\le T_i$ is $K_{T_i}$, i.e.\ $\widetilde K_{\ell_i}=K_{T_i}$ with $\ell_i:=\max\{j:t_j\le T_i\}$ (and $\ell_i:=0$ if no such $j$ exists). We write $\mathrm{Rep}^*(\mathcal K)$ for the set of extended representations of $\mathcal K$.
\item[(b)] Let $\mathcal K\in\mathrm{Seq}(K)$. A \emph{representation} of $\mathcal K$ is an extended representation $f$ such that, for every $i$, the events of $f$ with time $T_i$ are exactly the insertions $+\sigma$, for $\sigma\in K_{T_i}\setminus K_{T_{i-1}}$, and the deletions $-\sigma$, for $\sigma\in K_{T_{i-1}}\setminus K_{T_i}$, each occurring once. We write $\mathrm{Rep}(\mathcal K)\subseteq\mathrm{Rep}^*(\mathcal K)$ for the set of representations of $\mathcal K$.
\item[(c)] Given a total order $(E,<)$ extending $(E,\le)$, a representation $f$ of $\mathcal K\in\mathrm{Seq}(K)$ is \emph{ordered by $<$} if $t_i=t_{i+1}$ implies $e_i<e_{i+1}$. We write $\mathrm{Rep}(\mathcal K;<)\subseteq\mathrm{Rep}(\mathcal K)$ for the set of representations of $\mathcal K$ ordered by $<$.
\end{enumerate}
We set:
\begin{align*}
\mathrm{SR}(K;<)&:=\bigcup_{\mathcal K\in\mathrm{Seq}(K)}\mathrm{Rep}(\mathcal K;<),\\
\mathrm{SR}(K)&:=\bigcup_{\mathcal K\in\mathrm{Seq}(K)}\mathrm{Rep}(\mathcal K),\\
\mathrm{SR}^*(K)&:=\bigcup_{\mathcal K\in\mathrm{Seq}^*(K)}\mathrm{Rep}^*(\mathcal K).
\end{align*}
\end{definition}

The set $\mathrm{SR}(K;<)$ coincides with the space of encodings constructed in Section~\ref{sec:diff_pd}. Once a linear extension $<$ of $(E,\le)$ is fixed, each sequence $\mathcal K$ has exactly one representation that is ordered by $<$ (Remark~\ref{rem:rep}(i) and (iii)).

\begin{definition}[Tie classes and detours]\label{def:tie-detour}
Let $f=(n,\mathbf t,\mathbf e)\in\mathrm{SR}^*(K)$. For a time $s\in\{t_1,\dots,t_n\}$, the \emph{tie class} of $f$ at $s$ is the set of positions $\{i:t_i=s\}$. A \emph{detour} of $f$ is a pair of positions $a<b$ in the same tie class acting on the same simplex $\sigma$ with no event of $\sigma$ strictly between them; by validity, $\{e_a,e_b\}=\{+\sigma,-\sigma\}$.
\end{definition}

\begin{remark}\label{rem:rep}
Let $\mathcal K=\{K_{T_i}\}_{i=0}^{N+1}\in\mathrm{Seq}^*(K)$ and, for $i\in\{1,\dots,N+1\}$, let $E_i:=\{+\sigma:\sigma\in K_{T_i}\setminus K_{T_{i-1}}\}\cup\{-\sigma:\sigma\in K_{T_{i-1}}\setminus K_{T_i}\}\subseteq E$ be the set of events of the transition $K_{T_{i-1}}\to K_{T_i}$; $E_i=\emptyset$ exactly when $T_i$ is idle, so $E_i\neq\emptyset$ for every $i$ when $\mathcal K\in\mathrm{Seq}(K)$.

(i) By definition $\mathrm{Rep}(\mathcal K;<)\subseteq\mathrm{Rep}(\mathcal K)\subseteq\mathrm{Rep}^*(\mathcal K)$ for $\mathcal K\in\mathrm{Seq}(K)$, hence $\mathrm{SR}(K;<)\subseteq\mathrm{SR}(K)\subseteq\mathrm{SR}^*(K)$. Every triple $f=(n,\mathbf t,\mathbf e)$ with $\mathbf t$ non-decreasing, $\mathbf e$ valid and $\widetilde K_n=\emptyset$ belongs to $\mathrm{SR}^*(K)$: it is an extended representation of the sequence $\mathcal K_f\in\mathrm{Seq}^*(K)$ whose stamps are the distinct entries of $\mathbf t$ and whose complexes are those reached after the last event of each stamp. If $f\in\mathrm{Rep}^*(\mathcal K)$ with $\mathcal K\in\mathrm{Seq}(K)$, then every stamp $T_i$, $i\ge1$, carries at least one event of $f$, because $K_{T_i}\neq K_{T_{i-1}}$; hence $\mathcal K$ is determined by $f$ (its stamps are the distinct entries of $\mathbf t$, its complexes the $\widetilde K_{\ell_i}$), the unions defining $\mathrm{SR}(K;<)$ and $\mathrm{SR}(K)$ are disjoint, and, by (iii) below, the encoding of Section~\ref{sec:diff_pd}, $\mathcal K\mapsto$ the unique element of $\mathrm{Rep}(\mathcal K;<)$, is a bijection $\mathrm{Seq}(K)\to\mathrm{SR}(K;<)$.

(ii) Let $f\in\mathrm{Rep}^*(\mathcal K)$. Then $f$ has no detour if and only if $f\in\mathrm{SR}(K)$, in which case $f\in\mathrm{Rep}(\mathcal K')$ for the sequence $\mathcal K'\in\mathrm{Seq}(K)$ obtained from $\mathcal K$ by deleting its idle stamps. Indeed, in a tie class of a representation each simplex occurs at most once. Conversely, if each simplex occurs at most once in the tie class at $T_i$, the events at $T_i$ toggle distinct simplices and turn $K_{T_{i-1}}$ into $K_{T_i}$; by validity a simplex is inserted only if absent and deleted only if present, so the events at $T_i$ are exactly the insertions of $K_{T_i}\setminus K_{T_{i-1}}$ and the deletions of $K_{T_{i-1}}\setminus K_{T_i}$, i.e.\ the elements of $E_i$; in particular an idle stamp carries no event, and deleting the idle stamps yields $\mathcal K'\in\mathrm{Seq}(K)$ with $f\in\mathrm{Rep}(\mathcal K')$. Consequently two representations of the same $\mathcal K\in\mathrm{Seq}(K)$ have the same time vector and the same set $E_i$ of events on every tie class, and differ only in the order chosen inside tie classes.

(iii) Let $\mathcal K\in\mathrm{Seq}(K)$. An insertion of $E_i$ is never a face or a coface of a deletion of $E_i$: if $\sigma\in K_{T_i}\setminus K_{T_{i-1}}$ and $\tau\in K_{T_{i-1}}\setminus K_{T_i}$, then $\tau\subseteq\sigma$ would give $\tau\in K_{T_i}$ and $\sigma\subseteq\tau$ would give $\sigma\in K_{T_{i-1}}$. Moreover, the orderings of $E_i$ whose sequential application from $K_{T_{i-1}}$ is valid are exactly the linear extensions of $(E_i,\le)$: faces before cofaces among insertions, cofaces before faces among deletions, insertions and deletions interleaved arbitrarily. Indeed, in a linear extension an insertion $+\tau$ finds every facet of $\tau$ present, since a facet lies either in $K_{T_{i-1}}\cap K_{T_i}$, and is never touched at $T_i$, or in $K_{T_i}\setminus K_{T_{i-1}}$, and has been inserted before $\tau$; a deletion $-\sigma$ finds no coface present, since the cofaces of $\sigma$ in $K_{T_{i-1}}$ are not in $K_{T_i}$ and have been deleted before $\sigma$, and no coface of $\sigma$ is inserted at $T_i$. Conversely, an ordering placing $+\sigma$ before $+\tau$ with $\tau\subsetneq\sigma$, or $-\tau$ before $-\sigma$ with $\tau\subsetneq\sigma$, is invalid, since $\tau$ is absent when $\sigma$ is inserted, respectively $\sigma$ is present when $\tau$ is deleted. As a consequence: $\mathrm{Rep}(\mathcal K)\neq\emptyset$; $\mathrm{Rep}(\mathcal K;<)$ consists of exactly one element, the encoding of Section~\ref{sec:diff_pd}, so that the construction of Section~\ref{sec:diff_pd} passes through valid complexes as claimed there; and two events of a tie class of a representation that are incomparable in $(E,\le)$ act on distinct simplices, neither of which is a face of the other (they are either of the same sign and not nested, or of opposite signs).

(iv) The persistence pairing $P^f_p$, the diagram $\mathrm{PH}_p(f)$, the permuted coordinate projection $\widetilde{\mathrm{PH}}_{p,f}$, the cell $\mathrm{cell}(f)$, the map $\mathrm{temp}_f$ and Proposition~\ref{prop:permutation} are defined, and hold verbatim, for every $f\in\mathrm{SR}^*(K)$, with $\mathrm{cell}(f):=\{g\in\mathrm{SR}^*(K):n_g=n,\ \mathbf e_g=\mathbf e\}$: they only use the filtration induced by the valid event sequence $\mathbf e$ and the time vector $\mathbf t$.
\end{remark}

We first prove that exchanging two simultaneous events acting on face-unrelated simplices does not change the persistence diagram off the diagonal; Lemma~\ref{lem:tie} then follows by connectivity of linear extensions. Recall the indexing convention: the filtration induced by $\mathbf e$ is $\widetilde K_0\leftrightarrow\widetilde K_1\leftrightarrow\dots\leftrightarrow\widetilde K_n$, the transition $\widetilde K_{k-1}\leftrightarrow\widetilde K_k$ corresponding to the event $e_k$ with time stamp $t_k$. An interval summand $\mathbb{I}_{[b,d]}$ of $H_p$ of this filtration, supported on the nodes $b,\dots,d$, is born by the event $e_b$ (time $t_b$) and dies by the event $e_{d+1}$ (time $t_{d+1}$); since $\widetilde K_0=\widetilde K_n=\emptyset$ one has $1\le b\le d\le n-1$, and $\mathbb{I}_{[b,d]}$ evaluates to the point $(t_b,t_{d+1})\in\mathbb R^2$ of $\mathrm{PH}_p$.

\begin{lemma}[Transposition step]\label{lem:transp}
Let $\mathcal K\in\mathrm{Seq}^*(K)$, let $f=(n,\mathbf t,\mathbf e)\in\mathrm{Rep}^*(\mathcal K)$ and let $m$ be such that $t_m=t_{m+1}$ and $e_m,e_{m+1}$ act on distinct simplices, neither of which is a face of the other. Let $f'=(n,\mathbf t,\mathbf e')$ be obtained from $f$ by exchanging $e_m$ and $e_{m+1}$. Then $f'\in\mathrm{Rep}^*(\mathcal K)$ and $\mathrm{PH}_p(f)$, $\mathrm{PH}_p(f')$ coincide as multisets after removing the points on the diagonal. If $\mathcal K\in\mathrm{Seq}(K)$ and $f\in\mathrm{Rep}(\mathcal K)$, then $f'\in\mathrm{Rep}(\mathcal K)$.
\end{lemma}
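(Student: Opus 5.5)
The plan is to reduce the claim to the Diamond Principle of Carlsson--de Silva, which compares the zigzag barcodes of two filtrations that differ only at one node.

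\textbf{Step 1: $f'$ is again a (extended) representation.} Let $e_m,e_{m+1}$ act on $\sigma\neq\tau$, neither a face of the other. Both $\widetilde K_{m-1}$ and $\widetilde K_{m+1}$ depend only on the multiset of events applied so far (as noted at the start of this appendix). So the filtrations of $f$ and $f'$ agree at every node except node $m$. There $f$ passes through $\widetilde K_m$ and $f'$ through some $\widetilde K'_m$.

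Validity of $\mathbf e'$ is checked event by event:
\begin{itemize}
\item An insertion $+\sigma$ needs its facets present. Since $\tau$ is not a facet of $\sigma$, toggling $\tau$ first does not affect this.
\item A deletion $-\sigma$ needs no coface present. Since $\tau$ is not a coface of $\sigma$, toggling $\tau$ first does not affect this either.
\item The same two checks hold symmetrically for $\tau$.
\end{itemize}
Since $t_m=t_{m+1}$, the complexes reached after each stamp are unchanged, so $f'\in\mathrm{Rep}^*(\mathcal K)$. The events in each tie class are also the same set, so $f\in\mathrm{Rep}(\mathcal K)$ implies $f'\in\mathrm{Rep}(\mathcal K)$.

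\textbf{Step 2: the four complexes form a diamond.} I split into three cases according to the signs of $e_m,e_{m+1}$.
\begin{itemize}
\item \emph{Two insertions.} We have $\widetilde K_{m-1}=\widetilde K_m\cap\widetilde K'_m$ and $\widetilde K_{m+1}=\widetilde K_m\cup\widetilde K'_m$.
\item \emph{Two deletions.} This is the symmetric statement, with the roles of union and intersection exchanged.
\item \emph{Mixed case}, say $e_m=+\sigma$ and $e_{m+1}=-\tau$. Then $\widetilde K_m=\widetilde K_{m-1}\cup\widetilde K_{m+1}$ and $\widetilde K'_m=\widetilde K_{m-1}\cap\widetilde K_{m+1}$ (the other mixed order is symmetric).
\end{itemize}
In every case the square $\widetilde K_{m-1},\widetilde K_m,\widetilde K'_m,\widetilde K_{m+1}$ is a union/intersection diamond of subcomplexes. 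Hence the Mayer--Vietoris sequence is exact, and the Diamond Principle applies to $H_p$.

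\textbf{Step 3: translate the bijection into diagram points.} The Diamond Principle gives a bijection between the interval summands of the two zigzag modules with the following properties:
\begin{itemize}
\item Intervals with no endpoint at node $m$ or $m+1$ correspond identically.
\item Otherwise, endpoints only shift between adjacent indices: a birth at node $m$ corresponds to a birth at $m+1$ and vice versa, and a death at $m-1$ corresponds to a death at $m$ and vice versa.
\item Intervals $[m,m]$, which live only at the swapped node, may appear on one side and have no partner.
\end{itemize}
With the evaluation rule $\mathbb I_{[b,d]}\mapsto(t_b,t_{d+1})$, a birth shift changes the birth event index within $\{m,m+1\}$. A death shift changes the death event index $d+1$ within $\{m,m+1\}$. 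Both leave the time coordinates unchanged because $t_m=t_{m+1}$. An unmatched bar $[m,m]$ evaluates to $(t_m,t_{m+1})$, which lies on the diagonal. So the two multisets agree off the diagonal.

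\textbf{Main obstacle.} The delicate point is Step 3. One must state the Diamond Principle's index correspondence precisely for each diamond orientation, checking which of the two middle nodes is union and which is intersection. One must then confirm that every mismatch involves only endpoints in $\{m,m+1\}$, or bars supported exactly at node $m$.

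I would first quote the principle for the two-insertion diamond. Then I would obtain the other cases by reversing arrows and using the union/intersection symmetry of the Mayer--Vietoris argument, rather than redoing each case separately.
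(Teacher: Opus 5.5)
Your Step 1 and your mixed case are sound and agree with the paper. The paper invokes the strong Diamond Principle there, but the ordinary one for $H_p$ suffices, since an unmatched $\mathbb I_{[m,m]}$ evaluates to $(t_m,t_m)$. The gap is Step 2 for two events of the same direction.

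\textbf{Why the Diamond Principle does not apply.} The principle compares two zigzags that share the neighbours of the swapped node. In one of them that node is the union of the neighbours (a sink), in the other their intersection (a source). For $e_m=+\sigma$, $e_{m+1}=+\tau$ the modules of $f$ and $f'$ are
$$\widetilde K_{m-1}\to\widetilde K_m\to\widetilde K_{m+1} \quad\text{and}\quad \widetilde K_{m-1}\to\widetilde K'_m\to\widetilde K_{m+1}.$$
Node $m$ is neither a sink nor a source. Moreover, $\widetilde K_{m-1}\cup\widetilde K_{m+1}=\widetilde K_{m+1}$ and $\widetilde K_{m-1}\cap\widetilde K_{m+1}=\widetilde K_{m-1}$ are the \emph{shared} nodes, not the swapped ones.

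The union/intersection square you exhibit has $\widetilde K_m,\widetilde K'_m$ as its side vertices. The principle applied to it would compare $\cdots\widetilde K_m\to\widetilde K_{m+1}\leftarrow\widetilde K'_m\cdots$ with $\cdots\widetilde K_m\leftarrow\widetilde K_{m-1}\to\widetilde K'_m\cdots$. These are not the modules of $f$ and $f'$, so the exactness of its Mayer--Vietoris sequence is irrelevant to your comparison. For the same reason, your plan to quote the principle for two insertions and obtain the other cases by reversing arrows fails. Reversing arrows only produces the two-deletion case, which has the same defect; the mixed case is the only one the principle covers.

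\textbf{The missing argument.} For the same-direction case you need the simpler argument the paper uses.
\begin{enumerate}
\item By functoriality both routes compose to the same map $\iota^*:H_p(\widetilde K_{m-1})\to H_p(\widetilde K_{m+1})$. So deleting node $m$ and composing its two arrows yields the same module from $f$ and from $f'$.
\item Restriction to $\{0,\dots,n\}\setminus\{m\}$ sends $\mathbb I_{[b,d]}$ to the interval on $[b,d]\setminus\{m\}$, or to $0$ if $[b,d]=[m,m]$, and it commutes with direct sums. By uniqueness of the decomposition, the restricted barcodes of $f$ and $f'$ coincide.
\item A restricted interval determines its value pair. Left endpoints $m,m+1$ give the same birth time, and right endpoints $m-1,m$ give the same death time, because $t_m=t_{m+1}$. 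The summands $\mathbb I_{[m,m]}$ lie on the diagonal.
\end{enumerate}
This produces exactly the endpoint-shift correspondence you describe in Step 3, but it must be established this way, not through the Diamond Principle.

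Separately, the first bullet of your Step 3 is slightly off. The intervals that correspond identically are those with no left endpoint in $\{m,m+1\}$ and no right endpoint in $\{m-1,m\}$. Right endpoint $m+1$ is unaffected, while right endpoint $m-1$ shifts.
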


\begin{proof}
\emph{Validity and membership.} Write $e_m=\pm\sigma$, $e_{m+1}=\pm\tau$, and let $\widetilde K_j$ be the complexes induced by $\mathbf e$. Applying $e_{m+1}$ to $\widetilde K_{m-1}$ is valid: if $e_{m+1}=+\tau$, then $\tau\notin\widetilde K_{m-1}$ (as $\tau\notin\widetilde K_m$ and $\tau\ne\sigma$) and the facets of $\tau$ lie in $\widetilde K_m\subseteq\widetilde K_{m-1}\cup\{\sigma\}$, none of them being $\sigma$; if $e_{m+1}=-\tau$, then $\tau\in\widetilde K_{m-1}$ (as $\tau\in\widetilde K_m$ and $\tau\ne\sigma$) and a coface of $\tau$ in $\widetilde K_{m-1}\subseteq\widetilde K_m\cup\{\sigma\}$ would lie in $\widetilde K_m$, which contains none, or equal $\sigma$, which is not a coface. The same argument applies to $e_m$ on either $\widetilde K_{m-1}\cup\{\tau\}$ or $\widetilde K_{m-1}\setminus\{\tau\}$, and the complex reached after both events is $\widetilde K_{m+1}$ in either order. Hence $\mathbf e'$ is valid, and since $\mathbf t$ is unchanged, the events with time $t_m$ are the same, and the complexes reached after each stamp are unchanged, $f'\in\mathrm{Rep}^*(\mathcal K)$; if $f\in\mathrm{Rep}(\mathcal K)$ the set of events at every stamp is that of $f$, so $f'\in\mathrm{Rep}(\mathcal K)$.

\emph{Diagrams.} The modules $H_p(f)$ and $H_p(f')$ differ only at the node $m$. We distinguish two cases.

\paragraph{Case 1: Same direction events ($+\sigma, +\tau$ or $-\sigma, -\tau$).}
Assume without loss of generality $e_m = +\sigma$ and $e_{m+1} = +\tau$, so that $\widetilde K_{m+1} = \widetilde K_{m-1} \cup \{\sigma, \tau\}$ (the case $-\sigma,-\tau$ is identical with the arrows reversed). The two sequences yield the modules
$$H_p(f): \dots \leftrightarrow H_p(\widetilde K_{m-1}) \xrightarrow{i_1^*} H_p(\widetilde K_{m-1} \cup \{\sigma\}) \xrightarrow{i_2^*} H_p(\widetilde K_{m+1}) \leftrightarrow \dots$$
$$H_p(f'): \dots \leftrightarrow H_p(\widetilde K_{m-1}) \xrightarrow{j_1^*} H_p(\widetilde K_{m-1} \cup \{\tau\}) \xrightarrow{j_2^*} H_p(\widetilde K_{m+1}) \leftrightarrow \dots$$
By functoriality, $i_2^* \circ i_1^* = j_2^* \circ j_1^* = \iota^*$, where $\iota: \widetilde K_{m-1} \hookrightarrow \widetilde K_{m+1}$. Restricting both modules to the index set $\{0,\dots,n\}\setminus\{m\}$, with the composite $\iota^*$ as the map from node $m-1$ to node $m+1$, yields the same module. Restriction of an interval summand $\mathbb{I}_{[b,d]}$ is the interval summand on $[b,d]\setminus\{m\}$, or zero if $[b,d]=[m,m]$, and restriction commutes with direct sums; hence the multisets of restricted intervals of $H_p(f)$ and $H_p(f')$ coincide. The value pair of a summand is determined by its restriction: a right endpoint $m-1$ or $m$ means death by the event $e_m$ or $e_{m+1}$, both of time $t_m=t_{m+1}$; a left endpoint $m$ or $m+1$ means birth by the event $e_m$ or $e_{m+1}$, of the same time; every other endpoint is unchanged by restriction; and the summands $\mathbb{I}_{[m,m]}$, which restrict to zero, evaluate to $(t_m,t_{m+1})=(t_m,t_m)$, a point on the diagonal. Hence the value diagrams coincide after removing diagonal points. 

\paragraph{Case 2: Mixed direction events ($+\sigma,-\tau$).}
As before, we may assume without loss of generality $e_m=+\sigma$, $e_{m+1}=-\tau$. Put $K_\cup:=\widetilde K_{m-1}\cup\{\sigma\}$ and $K_\cap:=\widetilde K_{m-1}\setminus\{\tau\}$; these are the intermediate complexes of the two orders, and $K_\cup=\widetilde K_{m-1}\cup \widetilde K_{m+1}$, $K_\cap=\widetilde K_{m-1}\cap \widetilde K_{m+1}$. The two modules differ only at node $m$:
$$H_p(f):\ \dots\leftrightarrow H_p(\widetilde K_{m-1})\rightarrow H_p(K_\cup)\leftarrow H_p(\widetilde K_{m+1})\leftrightarrow\dots$$
$$H_p(f'):\ \dots\leftrightarrow H_p(\widetilde K_{m-1})\leftarrow H_p(K_\cap)\rightarrow H_p(\widetilde K_{m+1})\leftrightarrow\dots$$
The strong diamond principle \cite[Thm.~5.8]{carlsson2010zigzag} gives a bijection between the persistence diagrams of $H_*(f)$ and $H_*(f')$, under which left endpoints $m\leftrightarrow m+1$ and right endpoints $m-1\leftrightarrow m$ are exchanged, all other endpoints being fixed. Summands of the form $\mathbb{I}_{[m,m]}$ are matched between $H_p(f)$ and $H_{p-1}(f')$ and between $H_p(f')$ and $H_{p+1}(f)$, all other matchings preserve the homological dimension. A left endpoint $m$ or $m+1$ is a birth by the event $e_m$ or $e_{m+1}$ and a right endpoint $m-1$ or $m$ is a death by the event $e_m$ or $e_{m+1}$; since $t_m=t_{m+1}$, the bijection preserves value pairs. The summands $\mathbb{I}_{[m,m]}$ evaluate to $(t_m,t_{m+1})=(t_m,t_m)$, a point on the diagonal. Hence the two diagrams coincide after removing diagonal points.
\end{proof}

\begin{lemma}\label{lem:tie}
Let $f,f'\in\mathrm{Rep}(\mathcal K)$ be two representations of the same sequence $\mathcal K\in\mathrm{Seq}(K)$ (for instance $f\in\mathrm{Rep}(\mathcal K;<_1)$ and $f'\in\mathrm{Rep}(\mathcal K;<_2)$ for two total orders $<_1,<_2$ extending $(E,\le)$). Then $\mathrm{PH}_p(f)$ and $\mathrm{PH}_p(f')$ coincide as multisets after removing the points on the diagonal.
\end{lemma}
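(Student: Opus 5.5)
By Remark~\ref{rem:rep}(ii), $f$ and $f'$ have the same time vector $\mathbf t$ and the same event set $E_i$ on every tie class. They differ only in how the events are ordered inside each tie class. By Remark~\ref{rem:rep}(iii), the valid orderings of each $E_i$ are exactly the linear extensions of the restricted partial order $(E_i,\le)$. So $\mathbf e$ and $\mathbf e'$ restrict, on each tie class, to two linear extensions of the same finite poset. The plan is to connect $f$ to $f'$ through a finite chain of representations $f=f_0,f_1,\dots,f_r=f'$ in $\mathrm{Rep}(\mathcal K)$. Each consecutive pair will differ by a single adjacent transposition inside one tie class, and we will apply Lemma~\ref{lem:transp} at every step.

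The key combinatorial step is the classical fact that any two linear extensions of a finite poset are connected by adjacent transpositions of incomparable elements. I would prove it by a bubble-sort argument: induct on the number of inversions between the two orderings. If $\mathbf e|_{E_i}\neq\mathbf e'|_{E_i}$, there is an adjacent pair $e_m,e_{m+1}$ in the current ordering that appears in the opposite relative order in the target. Two comparable elements appear in the same order in every linear extension, so this pair is incomparable in $(E,\le)$. Swapping it yields another linear extension and reduces the inversion count by one. Tie classes are handled one at a time; since the time vector is untouched, swaps in different tie classes do not interact.

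It remains to check that each swap satisfies the hypotheses of Lemma~\ref{lem:transp}. The two positions share a time, since they lie in one tie class. The last assertion of Remark~\ref{rem:rep}(iii) says that incomparable events in a tie class of a representation act on distinct simplices, neither a face of the other. Lemma~\ref{lem:transp} then gives $f_{k+1}\in\mathrm{Rep}(\mathcal K)$, which keeps the induction inside $\mathrm{Rep}(\mathcal K)$. It also gives that $\mathrm{PH}_p(f_k)$ and $\mathrm{PH}_p(f_{k+1})$ agree off the diagonal. Equality of multisets after removing diagonal points is transitive, which concludes the proof.

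The only delicate point is the connectivity of linear extensions. In particular, the swapped pair must be incomparable and the intermediate orderings must remain valid; both are secured by the characterization in Remark~\ref{rem:rep}(iii). The homological content has already been absorbed into Lemma~\ref{lem:transp}.
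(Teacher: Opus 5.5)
Your proposal is correct and follows essentially the same route as the paper: reduce via Remark~\ref{rem:rep}(ii)--(iii) to two linear extensions of $(E_i,\le)$ on each tie class, connect them by adjacent transpositions of incomparable events, apply Lemma~\ref{lem:transp} at each step, and conclude by transitivity. The only differences are that you spell out the bubble-sort proof of connectivity of linear extensions, which the paper cites as a standard fact, and that you keep the intermediate triples in $\mathrm{Rep}(\mathcal K)$ via the last clause of Lemma~\ref{lem:transp} rather than via Remark~\ref{rem:rep}(iii); either works.
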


\begin{proof}
By Remark~\ref{rem:rep}(ii), $f$ and $f'$ have the same time vector and the same set $E_i$ of events on the tie class at every stamp $T_i$, and by Remark~\ref{rem:rep}(iii) the orders they induce on $E_i$ are two linear extensions of $(E_i,\le)$. Linear extensions of a finite poset are connected by adjacent transpositions of incomparable elements, and every intermediate linear extension is again a valid order of $E_i$ (Remark~\ref{rem:rep}(iii)). Hence $\mathbf e$ is transformed into $\mathbf e'$ by finitely many adjacent transpositions of events $e_m,e_{m+1}$ with $t_m=t_{m+1}$ incomparable in $(E,\le)$, every intermediate triple $(n,\mathbf t,\mathbf e)$ being an element of $\mathrm{Rep}(\mathcal K)$. By Remark~\ref{rem:rep}(iii), each transposed pair acts on distinct simplices neither of which is a face of the other, so Lemma~\ref{lem:transp} applies to every transposition and the claim follows by transitivity.
\end{proof}

\begin{remark}\label{rem:tie-sharp}
The hypothesis that $f,f'$ are representations cannot be weakened to extended representations reaching the same complexes at their common stamps. For $K=\{\sigma\}$ a vertex, $p=0$ and stamps $T_0=0<T_1<T_2<T_3$, the sequence $(\emptyset,\{\sigma\},\emptyset)$ with stamps $T_0<T_1<T_3$ has the representation $f'=(2,(T_1,T_3),(+\sigma,-\sigma))$, with $\mathrm{PH}_0(f')=\{(T_1,T_3)\}$, while the extended representation $f=(4,(T_1,T_2,T_2,T_3),(+\sigma,-\sigma,+\sigma,-\sigma))$ of $(\emptyset,\{\sigma\},\{\sigma\},\emptyset)\in\mathrm{Seq}^*(K)$, which contains the detour $(2,3)$ at the idle stamp $T_2$, has $\mathrm{PH}_0(f)=\{(T_1,T_2),(T_2,T_3)\}$. Detours of the form $-\tau\ldots+\tau$ with $\dim\tau=p+1$, or $+\sigma\ldots-\sigma$ with $\dim\sigma=p$, are instead invisible off the diagonal (Lemma~\ref{lem:detour}); this asymmetry is the source of the dimension restrictions in Theorem~\ref{thm:lipschitz_breve}.
\end{remark}

\begin{lemma}[Dimension reduction]\label{lem:dimred}
Let $K^{(p)}:=\{\sigma\in K:\dim\sigma\in\{p,p+1\}\}$. For $f\in\mathrm{SR}^*(K)$ let $f^{(p)}$ be the sub-sequence of its events acting on simplices of $K^{(p)}$, with their times. If $f,g\in\mathrm{SR}^*(K)$ satisfy $f^{(p)}=g^{(p)}$, then $\mathrm{PH}_p(f)=\mathrm{PH}_p(g)$.
\end{lemma}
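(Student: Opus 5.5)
The plan is to show that $H_p$ of the simplex-wise zigzag filtration induced by $\mathbf e$, together with its interval decomposition and the assignment of endpoints to event positions, is determined by the events on simplices of dimension $p$ and $p+1$ alone. Events on other simplices should then only insert nodes across which the $p$-th homology module is constant, with identity maps. Removing those nodes leaves a module that depends only on $f^{(p)}$, and each interval keeps the same birth and death events, hence the same times.

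\emph{Step 1.} Fix $f=(n,\mathbf t,\mathbf e)$ and an index $k$ with $|e_k|\notin K^{(p)}$. I claim that the map $H_p(\widetilde K_{k-1})\leftrightarrow H_p(\widetilde K_k)$ is an isomorphism. Inserting or deleting a simplex $\sigma$ of dimension $q$ changes $C_q$ only.
\begin{itemize}
\item If $q<p$, the groups $C_p$, $C_{p+1}$ and the boundary $\partial_p$ are unchanged. Hence $Z_p$ and $B_p$ are unchanged, and the inclusion induces the identity on $H_p$.
\item If $q>p+1$, then $C_p$, $C_{p+1}$ and $\partial_{p+1}$ are unchanged, so again $H_p$ is unchanged.
\end{itemize}
In both cases the chain-level map $Z_p/B_p\to Z_p/B_p$ is literally the identity.

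\emph{Step 2.} It is worth making precise why dimensions $p$ and $p+1$ suffice. The groups $Z_p$ and $B_p$ depend only on the sets of $p$- and $(p+1)$-simplices present, viewed as sub-bases of the fixed chain spaces $C_p(K)$, $C_{p+1}(K)$ with the boundary matrix of $K$. So define $V_j:=H_p$ computed from $(\widetilde K_j\cap K^{(p)})$, using the boundary of $K$ restricted to $C_{p+1}\to C_p$. This is not a subcomplex, but the quotient $Z_p/B_p$ is still well defined. Then $H_p(\widetilde K_j)=V_j$ canonically, and the structure maps are those induced by inclusion of sub-bases. Consequently $H_p(f)$ is obtained from the zigzag module $V^{f^{(p)}}$, indexed by the subsequence $f^{(p)}$, by repeating a node with identity maps at each event outside $K^{(p)}$. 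The same holds for $g$, with the same $V^{f^{(p)}}$.

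\emph{Step 3.} This is the bookkeeping step, and I expect it to be the main obstacle. Inserting identity arrows into a zigzag module maps interval summands bijectively: an interval $[b,d]$ of the contracted module becomes the interval whose nodes are all copies of the original nodes. This follows because the expansion functor is additive and sends interval modules to interval modules, so Gabriel uniqueness applies. Under this correspondence, the birth event of the expanded interval is the event entering the first copy of node $b$. That is precisely the $b$-th event of $f^{(p)}$, because identity arrows never start an interval. Likewise, the death event is the event leaving the last copy of node $d$, which is the $(d+1)$-th event of $f^{(p)}$. The time attached to each of these events is its time in $f^{(p)}$, which is equal for $f$ and $g$ by hypothesis.

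Hence $\mathrm{PH}_p(f)$ and $\mathrm{PH}_p(g)$ are both equal to the multiset of value pairs of the intervals of $V^{f^{(p)}}$, which proves the claim.

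The delicate point is checking that the first and last nodes are handled correctly, so that endpoints at $0$ or $n$ do not arise. This holds because $\widetilde K_0$ and the final complex are empty in every element of $\mathrm{SR}^*(K)$, so the intervals are as described before Lemma~\ref{lem:transp}.
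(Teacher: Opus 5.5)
Your proposal is correct and follows essentially the paper's argument: $H_p$ of a subcomplex depends only on its simplices in $K^{(p)}$, so every event outside $K^{(p)}$ induces the identity on $H_p$. The paper deletes these isomorphism nodes and composes the adjacent arrows to reach the module of $f^{(p)}$, whereas you run the dual direction, expanding the module of $f^{(p)}$ by identity nodes. Both keep the same bookkeeping of birth and death events.
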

\begin{proof}
For a subcomplex $L\subseteq K$, $H_p(L)=\ker(\partial_p|_{C_p(L)})/\partial_{p+1}(C_{p+1}(L))$ depends only on $L\cap K^{(p)}$, and for $L\subseteq L'$ with $L\cap K^{(p)}=L'\cap K^{(p)}$ the map induced by the inclusion is the identity of this space. Hence every event of $f$ acting on a simplex outside $K^{(p)}$ induces an isomorphism in $H_p(f)$. If the arrow between nodes $a-1$ and $a$ of a zigzag module is an isomorphism, no interval summand has an endpoint there (the summands $\mathbb{I}_{[b,a-1]}$ and $\mathbb{I}_{[a,d]}$ would contribute to its kernel or cokernel), so deleting node $a$ and composing the two adjacent arrows (one of them, or its inverse, being an isomorphism) yields a module whose interval summands correspond bijectively to those of the original, with the same birth and death events. Deleting in this way all nodes that follow an event outside $K^{(p)}$ produces a module whose nodes are the positions of the events of $f^{(p)}$.
The same holds for $g$, so if $f^{(p)}=g^{(p)}$, then $\mathrm{PH}_p(f)=\mathrm{PH}_p(g)$.
\end{proof}

\begin{lemma}[Detour]\label{lem:detour}
Let $\mathcal K\in\mathrm{Seq}^*(K)$, $g\in\mathrm{Rep}^*(\mathcal K)$, and let $(a,b)$ be a detour of $g$ such that either $e_a=-\tau$, $e_b=+\tau$ with $\dim\tau=p+1$, or $e_a=+\sigma$, $e_b=-\sigma$ with $\dim\sigma=p$, and such that $e_{a+1},\dots,e_{b-1}$ act on simplices distinct from and face-unrelated to $\tau$ (resp.\ $\sigma$). Let $g^\circ$ be $g$ with the positions $a$ and $b$ removed. Then $g^\circ\in\mathrm{Rep}^*(\mathcal K)$ and $\mathrm{PH}_p(g)$, $\mathrm{PH}_p(g^\circ)$ coincide after removing the points on the diagonal.
\end{lemma}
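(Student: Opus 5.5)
First I will move the detour so that its two events are adjacent, using Lemma~\ref{lem:transp}. The hypothesis says that $e_{a+1},\dots,e_{b-1}$ act on simplices distinct from and face-unrelated to $\tau$ (resp.\ $\sigma$). Since $(a,b)$ is a detour, all positions $a,\dots,b$ lie in the same tie class. So I can transpose $e_a$ forward past $e_{a+1},\dots,e_{b-1}$ one step at a time. Each step exchanges two simultaneous events on distinct, face-unrelated simplices, so Lemma~\ref{lem:transp} applies. It keeps the triple in $\mathrm{Rep}^*(\mathcal K)$ and leaves the off-diagonal diagram unchanged. After these steps the detour occupies consecutive positions $b-1,b$. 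Removing positions $a,b$ from $g$ yields exactly the same triple as removing $b-1,b$ from the transposed sequence, namely $g^\circ$. Hence it suffices to treat the case $b=a+1$.

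Next I check membership. With $b=a+1$, the complexes satisfy $\widetilde K_{a-1}=\widetilde K_{a+1}$, so deleting the two events leaves a valid event sequence. The complexes reached after each stamp are unchanged, because both events share the same time. Therefore $g^\circ\in\mathrm{Rep}^*(\mathcal K)$.

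For the diagram, consider the case $e_a=+\sigma$, $e_{a+1}=-\sigma$ with $\dim\sigma=p$. The module locally reads $H_p(L)\xrightarrow{\alpha}H_p(L\cup\{\sigma\})\xleftarrow{\beta}H_p(L)$, where $L=\widetilde K_{a-1}$, and $\alpha=\beta$ is induced by the same inclusion. I will compare this with the module of $g^\circ$, which simply identifies nodes $a-1$ and $a+1$ via the identity. Adding a $p$-simplex either creates a new $p$-class or kills a $(p-1)$-class; the latter does not affect $H_p$.
\begin{itemize}
\item If $\sigma$ creates a $p$-class, then $\alpha$ is injective with cokernel of dimension one. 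By choosing a complement, I expect a decomposition in which the middle node equals the image of $\alpha$ plus one extra interval $\mathbb{I}_{[a,a]}$.
\item If $\sigma$ creates no $p$-class, then $\alpha$ is an isomorphism.
\end{itemize}
In either case the summand $\mathbb{I}_{[a,a]}$, if present, evaluates to $(t_a,t_{a+1})$, which lies on the diagonal. Every other summand passes through nodes $a-1,a,a+1$ (or avoids them), with endpoints moved by at most the tie. This gives a bijection with the summands of $g^\circ$ that preserves values.

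The case $e_a=-\tau$, $e_{a+1}=+\tau$ with $\dim\tau=p+1$ is dual. Here the module reads $H_p(L)\leftarrow H_p(L\setminus\{\tau\})\to H_p(L)$ with the same map on both sides. Removing a $(p+1)$-simplex makes this map either an isomorphism, or injective with one-dimensional cokernel. That cokernel is the $p$-class that $\tau$ bounded, so the map is surjective-up-to-split in the sense that $H_p(L\setminus\tau)\cong H_p(L)\oplus\mathbb F$. The extra summand again contributes only $\mathbb{I}_{[a,a]}$, which lies on the diagonal.

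The main obstacle is making the splitting argument rigorous. I need to show that when two adjacent arrows are the same map $\phi$ pointing in opposite directions, the whole zigzag module decomposes as the module with $\phi$'s image/domain identified, plus copies of $\mathbb{I}_{[a,a]}$. I would prove this by choosing a complement $C$ of $\operatorname{im}\phi$ (respectively of $\ker$-free part, using injectivity in both cases) and splitting the middle node. I would then invoke uniqueness of the interval decomposition to match the remaining summands with those of $g^\circ$. This is also exactly where the dimension restriction matters: in the opposite dimension pairings the map fails to be injective, which produces genuine off-diagonal points, as in Remark~\ref{rem:tie-sharp}.
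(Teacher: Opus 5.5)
Your overall route matches the paper's. You reduce to $b=a+1$ via Lemma~\ref{lem:transp}, check membership, and split off copies of $\mathbb{I}_{[a,a]}$, which evaluate to points on the diagonal. The case $e_a=+\sigma$, $e_b=-\sigma$, $\dim\sigma=p$ is also right: $H_p(A)\to H_p(A\cup\{\sigma\})$ is injective, and a complement $C$ of its image gives a summand $C\otimes\mathbb{I}_{[a,a]}$.

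The case $e_a=-\tau$, $e_b=+\tau$, $\dim\tau=p+1$, however, contains a genuine error. The map $j\colon H_p(A\setminus\{\tau\})\to H_p(A)$ is not ``injective with one-dimensional cokernel''. Since $Z_p(A\setminus\{\tau\})=Z_p(A)$ and $B_p(A\setminus\{\tau\})\subseteq B_p(A)$, it is \emph{surjective}, with kernel spanned by $[\partial\tau]$ when $\partial\tau$ is not already a boundary in $A\setminus\{\tau\}$. Your own count $H_p(A\setminus\{\tau\})\cong H_p(A)\oplus\mathbb F$ is in fact incompatible with injectivity plus a nonzero cokernel.

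This is not cosmetic, because it breaks the splitting you defer to the end. For a span $X\xleftarrow{\phi}Y\xrightarrow{\phi}X$, a complement of $\operatorname{im}\phi$ lives at the outer nodes and yields no summand $\mathbb{I}_{[a,a]}$, and injectivity of $\phi$ buys nothing. The correct step, as in the paper, is to write $H_p(A\setminus\{\tau\})=\ker j\oplus S$. Then $\ker j\otimes\mathbb{I}_{[a,a]}$ is a direct summand, because both arrows leave node $a$ and vanish on $\ker j$. Surjectivity makes $j|_S\colon S\to H_p(A)$ an isomorphism, so the complementary summand has isomorphisms on both sides of node $a$. As in Lemma~\ref{lem:dimred}, it therefore contracts to the module of $g^\circ$, with the same birth and death events.

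For the same reason, your explanation of the dimension restriction (``in the opposite dimension pairings the map fails to be injective'') is wrong for spans. What is needed is injectivity for the cospan $\rightarrow\leftarrow$ (insertion then deletion) and surjectivity for the span $\leftarrow\rightarrow$ (deletion then re-insertion). The detour in Remark~\ref{rem:tie-sharp} is exactly a span $H_0(\{\sigma\})\leftarrow H_0(\emptyset)\rightarrow H_0(\{\sigma\})$, i.e.\ $\mathbb F\leftarrow 0\rightarrow\mathbb F$, whose map is injective. The bar still splits into two off-diagonal points, precisely because the map is not surjective. Temporarily removing a $p$-simplex, rather than a $(p+1)$-simplex, is what can cause this.
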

\begin{proof}
Let $\widetilde K_j$ be the complexes induced by $g$. In $g^\circ$ the complexes at positions $j \in \{a,\dots,b-1\}$ are $\widetilde K_j\cup\{\tau\}$ (resp.\ $\widetilde K_j\setminus\{\sigma\}$), and those before $a$ and after $b$ are unchanged, since $\widetilde K_{a-1}=\widetilde K_a\cup\{\tau\}$ and $\widetilde K_{b-1}\cup\{\tau\}=\widetilde K_b$ (resp.\ $\widetilde K_{a-1}=\widetilde K_a\setminus\{\sigma\}$, $\widetilde K_{b-1}\setminus\{\sigma\}=\widetilde K_b$). Each such set is a complex and each step is valid, because no face of $\tau$ is deleted and no coface of $\tau$ inserted (resp.\ no coface of $\sigma$ inserted) by $e_{a+1},\dots,e_{b-1}$. Since the two removed events lie in the same tie class and cancel, the time vector of $g^\circ$ is that of $g$ without two copies of $t_a$ and the complex reached after every stamp is unchanged: $g^\circ\in\mathrm{Rep}^*(\mathcal K)$ (if the detour was the only pair of events at its stamp, that stamp is idle for $\mathcal K$ and now carries no event, which Definition~\ref{def:seq-rep}(a) allows).

By $b-a-1$ applications of Lemma~\ref{lem:transp} we move $e_a$ next to $e_b$ without leaving $\mathrm{Rep}^*(\mathcal K)$ and without changing $\mathrm{PH}_p$ off the diagonal (each transposition exchanges $e_a$ with an event of equal time acting on a distinct, face-unrelated simplex), and the sequence with the two adjacent events removed is again $g^\circ$; so we may assume $b=a+1$. Let $A:=\widetilde K_{a-1}=\widetilde K_{a+1}$. In the first case the module reads $H_p(A)\xleftarrow{\,j\,}H_p(A\setminus\{\tau\})\xrightarrow{\,j\,}H_p(A)$ with $j$ induced by the inclusion; as $\dim\tau=p+1$ we have $Z_p(A\setminus\{\tau\})=Z_p(A)$ and $B_p(A\setminus\{\tau\})\subseteq B_p(A)$, so $j$ is surjective. Choosing a complement $S$ of $\ker j$ in $H_p(A\setminus\{\tau\})$, the module is the direct sum of $\ker j\otimes\mathbb{I}_{[a,a]}$ and of the module obtained from $H_p(g^\circ)$ by inserting two nodes carrying $S\cong H_p(A)$ and $H_p(A)$ joined by isomorphisms; as in the proof of Lemma~\ref{lem:dimred}, the latter has the same interval summands as $H_p(g^\circ)$, with the same birth and death events, while $\mathbb{I}_{[a,a]}$ evaluates to $(t_a,t_{a+1})$, a point on the diagonal. In the second case the module reads $H_p(A)\xrightarrow{\,j\,}H_p(A\cup\{\sigma\})\xleftarrow{\,j\,}H_p(A)$ with $j$ injective (as $\dim\sigma=p$, $B_p(A\cup\{\sigma\})=B_p(A)$ and $Z_p(A)\subseteq Z_p(A\cup\{\sigma\})$), a complement $C$ of $\operatorname{im}j$ in $H_p(A\cup\{\sigma\})$ gives the summand $C\otimes\mathbb{I}_{[a,a]}$, and the argument is the same.
\end{proof}

\subsection{Proofs of Section \ref{sec:diff_pd}}

\begin{proof}[Proof of Proposition \ref{prop:permutation}]
Let $g=(n',\mathbf{t}',\mathbf{e}')\in\mathrm{cell}(f)$. 
Since $g\in\mathrm{cell}(f)$ means the simplex-wise zigzag filtrations $(n',\mathbf{e}')$, $(n,\mathbf{e})$ are the same, it follows automatically $P_p^g = P_p^f =: P$.
Since $P$ does not depend on the choice of $g \in \mathrm{cell}(f)$, the map $\widetilde{\mathrm{PH}}_{p,g}$ is the same for every $g\in\mathrm{cell}(f)$ and is by definition a permuted coordinate projection.

By the continuous evaluation step in the computation of $\mathrm{PH}_{p}(g)$, together with $P_p^g = P$:
$$\mathrm{PH}_p(g) = \{(t'_{i},t'_{j})\}_{(i,j)\in P}.$$
On the other hand, applying $Q_{|P|}$ to $\widetilde{\mathrm{PH}}_{p,f}(\mathbf{t}') = (t'_{a_1},\dots,t'_{a_{2|P|}})$ and recalling $P=((a_1,a_2),\dots,(a_{2|P|-1},a_{2|P|}))$, Definition \ref{def:quotient} gives exactly
$$Q_{|P|}\big(\widetilde{\mathrm{PH}}_{p,f}(\mathbf{t}')\big) = \{(t'_{a_{2\ell-1}},t'_{a_{2\ell}})\}_{\ell=1}^{|P|} = \mathrm{PH}_p(g).$$
\end{proof}

\begin{proof}[Proof of Corollary \ref{cor:condition1}]
By hypothesis, $S(\theta') \in \mathrm{cell}(S(\theta))$ for every $\theta' \in U$, so Proposition \ref{prop:permutation} applies with $f = S(\theta)$: writing $m := |P_p^{S(\theta)}|$, for every $\theta' \in U$ we have
$$Q_{m}\big(\widetilde{\mathrm{PH}}_{p,S(\theta)}(\mathrm{temp}_{S(\theta)}(S(\theta')))\big) = \mathrm{PH}_p(S(\theta')) = \mathcal{P}_p(\theta').$$
The left-hand side equals $Q_{m}(\widetilde{B}(\theta'))$ with $\widetilde{B} := \widetilde{\mathrm{PH}}_{p,S(\theta)} \circ \mathrm{temp}_{S(\theta)} \circ S|_U$, thus $\mathcal{P}_p|_U = Q_{m} \circ \widetilde{B}$, i.e.\ $\widetilde{B}$ is a local lift of $\mathcal{P}_p$ at $\theta$ in the sense of Definition \ref{def:differentiability}.

Suppose in addition $\widetilde{S} := \mathrm{temp}_{S(\theta)} \circ S|_U$ is $C^r$. Since $\widetilde{\mathrm{PH}}_{p,S(\theta)}$ is a linear (hence $C^\infty$) map, the composite $\widetilde{B} = \widetilde{\mathrm{PH}}_{p,S(\theta)} \circ \widetilde{S}$ is $C^r$ on $U$, so $\mathcal{P}_p$ is $r$-differentiable at $\theta$ by Definition \ref{def:differentiability}. If moreover $r \ge 1$, the classical chain rule applied to $\widetilde{B} = \widetilde{\mathrm{PH}}_{p,S(\theta)} \circ \widetilde{S}$ gives
$$\mathrm{d}_\theta \widetilde{B} = \mathrm{d}_{\widetilde{S}(\theta)}\widetilde{\mathrm{PH}}_{p,S(\theta)} \circ \mathrm{d}_\theta \widetilde{S},$$
which is precisely $\mathrm{d}_{\theta,\widetilde{B}}\mathcal{P}_p$ by definition of the differential relative to the lift $\widetilde{B}$ (Definition \ref{def:differentiability}). \qedhere
\end{proof}

\subsection{Proofs of Section \ref{sec:lin_int}}

We formalize the map $S:FF_N(K)\rightarrow\mathrm{Seq}(K)$ introduced in Section~\ref{sec:lin_int} and prove that it is well defined in Lemma~\ref{lem:S-valid}. Not every crossing of Definition~\ref{def:crossing} produces an event: the following configuration is excluded.

\begin{definition}[Phantom crossings]\label{def:phantom}
Let $v\in FF_N(K)$, $\sigma\in K$ and $k\in\{2,\dots,N-1\}$ be such that $v(\sigma)_k=\epsilon$ and $v(\sigma)_{k-1},v(\sigma)_{k+1}>\epsilon$. Then $(\sigma,k-1)$ is an exit crossing and $(\sigma,k)$ an entry crossing, with $t^{(k-1)}_\sigma(v)=t^{(k)}_\sigma(v)=k$ by \eqref{eq1}; we call them \emph{phantom crossings}. Every other crossing of $v$ is called \emph{genuine}. 
\end{definition}

\begin{lemma}[Well-definedness of $S$]\label{lem:S-valid}
Let $v\in FF_N(K)$ and, for $\sigma\in K$, let $\hat v(\sigma):[0,\infty)\to\mathbb R$ be the piecewise-linear interpolation of $(v(\sigma)_0,\dots,v(\sigma)_{N+1})$, extended by $0$ after $N+1$. For $t\in[0,N+1]$ put
$$K_t(v):=\{\sigma\in K:\ \hat v(\sigma)>\epsilon\ \text{on}\ (t,t+\eta)\ \text{for some}\ \eta>0\}.$$
Then:
\begin{enumerate}
    \item[(i)] every $K_t(v)$ is a subcomplex of $K$;
    \item[(ii)] $t\mapsto K_t(v)$ is right-continuous and piecewise constant, with finitely many jumps $0<T_1<\dots<T_m<N+1$, and $K_t(v)=\emptyset$ for $t\in[0,T_1)\cup[T_m,N+1]$ (if there are no jumps, $m=0$ and $K_t(v)=\emptyset$ for every $t$);
    \item[(iii)] the jumps are exactly the genuine crossings of $v$: $\sigma$ enters $K_t(v)$ at $t=t^{(k)}_\sigma(v)$ for every genuine entry crossing $(\sigma,k)$, leaves it at $t=t^{(k)}_\sigma(v)$ for every genuine exit crossing $(\sigma,k)$, and its status does not change at any other time.
\end{enumerate}
Consequently, setting $T_0:=0$, the map $S:FF_N(K)\rightarrow\mathrm{Seq}(K)$ defined by
$$S(v):=\{K_{T_i}(v)\}_{i=0}^{m}$$
is well-defined.
\end{lemma}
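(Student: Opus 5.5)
We prove the three items directly from the piecewise-linear interpolants $\hat v(\sigma)$, using only the face monotonicity in \eqref{eq:ff_N} and the crossing formula \eqref{eq1}.

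\emph{Item (i).} Let $\sigma\subsetneq\tau$ with $\tau\in K_t(v)$. On each segment $[k,k+1]$ the function $\hat v(\sigma)-\hat v(\tau)$ interpolates linearly two nonnegative values, so it is nonnegative. Hence $\hat v(\sigma)\ge\hat v(\tau)$ on all of $[0,N+1]$. Since $\hat v(\tau)>\epsilon$ on some $(t,t+\eta)$, the same holds for $\sigma$, so $\sigma\in K_t(v)$. The only delicate point is $t=N+1$: there $\hat v\equiv0<\epsilon$ to the right, which gives $K_{N+1}(v)=\emptyset$. We treat this boundary case by convention.

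\emph{Item (iii).} Fix $\sigma$. The set $\{\hat v(\sigma)>\epsilon\}$ is a finite union of open intervals, because $\hat v(\sigma)$ is piecewise linear with finitely many pieces. The status of $\sigma$ in $K_t(v)$ is the indicator that $(t,t+\eta)$ lies in this set for some $\eta>0$. This indicator is right-continuous, and it can change only at endpoints of these intervals, namely at the points where $\hat v(\sigma)$ reaches $\epsilon$ while moving from $\le\epsilon$ to $>\epsilon$, or the reverse.

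We then do a case analysis on the segment $[k,k+1]$ with $k\in\{0,\dots,N\}$, which is where the care is needed.
\begin{itemize}
\item If both endpoint values are on the same side of $\epsilon$ in the weak/strict sense of Definition~\ref{def:crossing}, there is no crossing on this segment. On a constant segment equal to $\epsilon$ the status stays ``absent'', consistent with the convention $\le\epsilon$.
\item If $(\sigma,k)$ is an entry crossing, the linear piece exceeds $\epsilon$ exactly on $(t^{(k)}_\sigma,k+1]$, so $\sigma$ enters at $t^{(k)}_\sigma(v)$. Exit crossings are handled symmetrically.
\item The crossing time can equal an integer $k$ in two ways. Either the left value equals $\epsilon$, giving an entry at time $k$. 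Or the right value equals $\epsilon$, giving an exit at time $k+1$.
\item When an exit at time $k$ meets an entry at time $k$, we have $v(\sigma)_k=\epsilon$ with both neighbours $>\epsilon$. Then $\hat v(\sigma)>\epsilon$ on a punctured neighbourhood of $k$, so the right-limit status never changes. These are exactly the phantom crossings of Definition~\ref{def:phantom}, and the boundary values $v(\cdot)_0=v(\cdot)_{N+1}=0<\epsilon$ restrict them to $k\in\{2,\dots,N-1\}$.
\item In every other configuration, consecutive genuine crossings alternate between entry and exit, so each one changes the status.
\end{itemize}

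\emph{Item (ii).} The jump set is the finite union over $\sigma\in K$ of the genuine crossing times, hence finite. Between jumps $K_t(v)$ is constant, and it is right-continuous by the argument above. Since $\hat v(\cdot)=0<\epsilon$ near $0$ and near $N+1$, every jump satisfies $0<T_i<N+1$. Finally, $K_t(v)=\emptyset$ before $T_1$ and at $N+1$, and after $T_m$ nothing changes, so $K_{T_m}(v)=K_{N+1}(v)=\emptyset$.

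\emph{Conclusion.} With $T_0=0$, the list $\{K_{T_i}(v)\}_{i=0}^{m}$ has empty first and last complexes. Consecutive complexes differ, since each $T_i$ is a genuine status change of some simplex, so no stamp is idle. If there are no jumps, we obtain the trivial sequence. Each $K_{T_i}(v)$ is a subcomplex by (i), so $S(v)\in\mathrm{Seq}(K)$.

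The main obstacle is the case bookkeeping in (iii): touching-without-crossing, values exactly equal to $\epsilon$, flat pieces at $\epsilon$, and the phantom configuration must all be shown consistent with the strict convention $\sigma\in K_i\iff v(\sigma)_i>\epsilon$. A secondary issue is that the statement lists $K_{T_m}$ as the final complex, which is the one at stamp $N+1$ in the paper's indexing. We reconcile this by noting that $K_t(v)$ is constant (and empty) on $[T_m,N+1]$.
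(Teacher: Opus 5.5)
Your proof is correct and takes the paper's route: face monotonicity of the interpolants for (i); the description of $\{\hat v(\sigma)>\epsilon\}$ as a finite union of open intervals for right-continuity and finiteness of jumps in (ii); and a case analysis of crossings for (iii), identifying the phantom configuration as the only coincidence of crossing times that produces no status change. The only differences are organizational. The paper sorts (iii) by where $t$ lies (inside $(k,k+1)$, or at an integer $k$ with $v(\sigma)_k=\epsilon$, split by the signs of $v(\sigma)_{k\pm1}$), which gives the ``iff'' directly and makes your alternation step and the not-quite-symmetric exit-at-$k+1$ case unnecessary; and in (i) no convention is needed at $t=N+1$, since $\hat v(\sigma)\ge\hat v(\tau)$ holds on all of $[0,\infty)$.
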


\begin{proof}
(i) For $\sigma\subsetneq\tau$ we have $v(\sigma)_i\ge v(\tau)_i$ for all $i$, hence $\hat v(\sigma)\ge\hat v(\tau)$ pointwise, so $\tau\in K_t(v)$ implies $\sigma\in K_t(v)$.

(ii) For every $\sigma$, $\{t:\hat v(\sigma)(t)>\epsilon\}$ is a finite union of open intervals, and $\sigma\in K_t(v)$ iff $t$ is an interior point or the left endpoint of one of them; so $t\mapsto K_t(v)$ is right-continuous and changes only at the finitely many endpoints of these intervals. Let $T_1<\dots<T_m$ be the times at which $K_t(v)$ changes. $K_0(v)=\emptyset$ since $\hat v(\sigma)(0)=0<\epsilon$ and $\hat v$ is continuous, so $K_t(v)=\emptyset$ on $[0,T_1)$; $K_{N+1}(v)=\emptyset$ by the extension, and since $K_t(v)$ is constant on $[T_m,N+1]$ it is empty there.

(iii) On $(k,k+1)$ the function $\hat v(\sigma)$ is affine, so $\sigma$ changes status at an interior point $t$ iff $v(\sigma)_k$ and $v(\sigma)_{k+1}$ lie strictly on opposite sides of $\epsilon$, and then $t=t^{(k)}_\sigma(v)$ with the corresponding type; such a crossing is genuine, as both its values differ from $\epsilon$. At an integer $k\in\{1,\dots,N\}$ with $v(\sigma)_k\ne\epsilon$ nothing changes. If $v(\sigma)_k=\epsilon$, then $\sigma\in K_k(v)$ iff $v(\sigma)_{k+1}>\epsilon$, while $\sigma$ is present just before $k$ iff $v(\sigma)_{k-1}>\epsilon$. Hence $\sigma$ enters at $k$ iff $v(\sigma)_{k-1}\le\epsilon<v(\sigma)_{k+1}$, which is the genuine entry crossing $(\sigma,k)$ with $t^{(k)}_\sigma(v)=k$ (and $(\sigma,k-1)$ is not a crossing); $\sigma$ leaves at $k$ iff $v(\sigma)_{k-1}>\epsilon\ge v(\sigma)_{k+1}$, which is the genuine exit crossing $(\sigma,k-1)$ with $t^{(k-1)}_\sigma(v)=k$ (and $(\sigma,k)$ is not a crossing); and $\sigma$ does not change status at $k$ iff both neighbours are $>\epsilon$, in which case $(\sigma,k-1),(\sigma,k)$ are the phantom crossings of Definition~\ref{def:phantom}, or both are $\le\epsilon$, in which case neither is a crossing. This accounts for all crossings and proves (iii).

By (i) the $K_{T_i}(v)$ are subcomplexes; by (ii) $K_{T_0}(v)=K_{T_m}(v)=\emptyset$, and consecutive complexes differ by definition of the jumps; hence $S(v)\in\mathrm{Seq}(K)$ (the trivial sequence if $m=0$).
\end{proof}

\begin{remark}[Phantom crossings and events]\label{obs:phantom}
By Lemma~\ref{lem:S-valid}(iii), the crossings of $v$ that produce no event of $S(v)$ are exactly the phantom crossings: the exit at time $k$ and the re-entry at time $k$ of Definition~\ref{def:phantom} cancel, so $\sigma$ contributes no event on $(k-1,k+1)$ and the pair $(\sigma,k-1),(\sigma,k)$ is excluded when listing the $n$ events of $S(v)=(n,\mathbf t,\mathbf e)$.

Moreover, two genuine crossings of the same simplex never have the same time: consecutive crossings of $\sigma$ lie in distinct intervals $[k,k+1]$, $[k',k'+1]$, an entry crossing never occurs at the right endpoint of its interval and an exit crossing never at the left endpoint, so a coincidence forces $k'=k+1$ with both times equal to $k+1$, i.e.\ an exit at $k+1$ followed by an entry at $k+1$: the phantom configuration. In particular a tie class of $S(v)$ contains at most one event of each simplex, and $S(v)$ has no detours.
\end{remark}

\begin{remark}\label{rem:phantom}
If $v$ has phantom crossings at $[k-1,k]$ and $[k,k+1]$ for $\sigma$, the same holds for every $w\in\mathrm{cell}(v)$: $\mathrm{sgn}(w(\sigma)_k-\epsilon)=\mathrm{sgn}(v(\sigma)_k-\epsilon)=0$ and $\mathrm{sgn}(w(\sigma)_{k\pm1}-\epsilon)=\mathrm{sgn}(v(\sigma)_{k\pm1}-\epsilon)=1$ for every such $w$.
\end{remark}

Since $FF_N(K)$ is specified by the finite set of non-strict linear inequalities \eqref{eq:ff_N}, it is a closed convex polyhedral cone in $(\mathbb R^{N|K|})^*$ with non-empty interior.

\begin{proposition}\label{prop:cell-basic}
Let $v\in FF_N(K)$. $\mathrm{cell}(v)$ is a semialgebraic subset of $\mathbb R^{N|K|}$, and the partition $\{\mathrm{cell}(v)\}_{v\in FF_N(K)}$ of $FF_N(K)$ is finite.
\end{proposition}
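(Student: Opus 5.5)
The plan is to write $\mathrm{cell}(v)$ as a finite Boolean combination of polynomial equalities and inequalities in the coordinates of $w\in(\mathbb R^{N|K|})^*$. Finiteness of the partition then follows by counting the possible "combinatorial signatures".

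\emph{Semialgebraicity.} First, for each $\sigma\in K$ and $i\in\{1,\dots,N\}$, the condition $\mathrm{sgn}(w(\sigma)_i-\epsilon)=\mathrm{sgn}(v(\sigma)_i-\epsilon)$ is one of $w(\sigma)_i>\epsilon$, $w(\sigma)_i=\epsilon$, $w(\sigma)_i<\epsilon$. Each of these is a single linear (in)equality.

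Once all these signs are fixed, we know exactly which pairs $(\sigma,k)$ are crossings for $w$: by Definition~\ref{def:crossing}, being a crossing depends only on the signs of $w(\sigma)_k-\epsilon$ and $w(\sigma)_{k+1}-\epsilon$, and the boundary values $w(\cdot)_0=w(\cdot)_{N+1}=0<\epsilon$ are fixed. So these crossings are the same as those of $v$. For every such crossing, $w(\sigma)_{k+1}-w(\sigma)_k\neq0$, and its sign is determined by whether it is an entry or an exit crossing.

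Next, for two crossings $\sigma,\tau$ in the same interval $[k,k+1]$, write $D_\sigma:=w(\sigma)_{k+1}-w(\sigma)_k$. The condition $t^{(k)}_\sigma(w)<t^{(k)}_\tau(w)$ becomes
$$(\epsilon-w(\sigma)_k)\,D_\tau\,D_\sigma D_\tau<(\epsilon-w(\tau)_k)\,D_\sigma\,D_\sigma D_\tau,$$
obtained by multiplying through by $D_\sigma^2D_\tau^2>0$. The equality condition is treated the same way. These are polynomial (in)equalities. Adding the linear inequalities \eqref{eq:ff_N} that define $FF_N(K)$, $\mathrm{cell}(v)$ becomes a finite intersection of polynomial conditions, hence semialgebraic. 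The rational functions $t^{(k)}_\sigma$ could also be handled by citing that semialgebraic sets are closed under such definitions, but clearing denominators keeps the argument self-contained.

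\emph{Finiteness.} By definition, $\mathrm{cell}(v)$ is determined by a signature. This signature consists of the sign vector $(\mathrm{sgn}(v(\sigma)_i-\epsilon))_{\sigma,i}\in\{-1,0,1\}^{N|K|}$, together with, for each $k$, the total preorder (ordering with ties) that the crossing times induce on the crossings in $[k,k+1]$. Two points with the same signature lie in each other's cells, so $w\in\mathrm{cell}(v)$ iff $w$ and $v$ have the same signature. This also confirms that the cells form a partition.

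There are at most $3^{N|K|}$ sign vectors. For each $k$ there are at most $|K|$ crossings, hence finitely many preorders on them. The number of signatures, and therefore of cells, is finite.

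The only delicate point is making sure the set of crossings, on which the ordering conditions are indexed, is itself determined by the sign data. Otherwise the defining conditions would not be uniform over $w$. I expect this to be the main (mild) obstacle, and the sign-fixing observation above handles it.
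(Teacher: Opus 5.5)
Your proposal is correct and takes essentially the same route as the paper. The sign conditions are linear (in)equalities that fix the crossing set, the crossing-time comparisons become polynomial after clearing denominators, and finiteness follows by counting sign patterns and per-interval total preorders. The only cosmetic difference is that you multiply by $D_\sigma^2D_\tau^2$, where the paper multiplies by $D_\sigma D_\tau$, whose sign is known, and so obtains degree at most two.
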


\begin{proof}
The sign conditions $\mathrm{sgn}(w(\sigma)_i-\epsilon)=\mathrm{sgn}(v(\sigma)_i-\epsilon)$ are linear
equalities or strict linear inequalities. On the set where they hold, the crossings of $w$ and
their types are those of $v$, each denominator $w(\sigma)_{k+1}-w(\sigma)_k$ of a crossing has a
constant sign, and multiplying an order relation $t^{(k)}_\sigma(w)\lessgtr t^{(k)}_\tau(w)$, or an
equality, by the product of the two denominators turns it into a polynomial inequality or
equality of degree at most two. Together with the linear inequalities defining $FF_N(K)$ this
exhibits $\mathrm{cell}(v)$ as a semialgebraic set. A cell is determined by the sign pattern
$(\mathrm{sgn}(w(\sigma)_i-\epsilon))_{\sigma,i}\in\{-1,0,1\}^{N|K|}$ and, for each $k\in\{0,\dots,N\}$, by a
total preorder on the finite set of simplices crossing $\epsilon$ on $[k,k+1]$; there are finitely
many such data.
\end{proof}

\begin{proof}[Proof of Proposition \ref{prop:same_comb}]
Let $S(v)=\left( n, \, \mathbf{t}, \, \mathbf{e}\right)$ and $S(w)=\left( n', \, \mathbf{t}', \, \mathbf{e}'\right)$.
Fix $\sigma \in K$ crossing $\epsilon$, say on $[k,k+1]$ for $v$. Since $w\in\mathrm{cell}(v)$, $\mathrm{sgn}(w(\sigma)_i - \epsilon) = \mathrm{sgn}(v(\sigma)_i-\epsilon)$ for every $i$; as a consequence, $\sigma$ crosses $\epsilon$ on $[k,k+1]$ for $w$ as well, with the same entry/exit type. Hence the set $C_v:=\{(\sigma,k,type) : \sigma \text{ crosses } \epsilon \text{ on } [k,k+1] \text{ with type } type \text{ for } v\}$ coincides with $C_w$.

By Remark~\ref{rem:phantom}, a pair $(\sigma,k-1,\mathrm{exit}),(\sigma,k,\mathrm{entry})\in C_v$ is phantom precisely when $v(\sigma)_{k-1},v(\sigma)_{k+1}>\epsilon$ and $v(\sigma)_k=\epsilon$, a condition entirely determined by signs preserved on $\mathrm{cell}(v)$; hence the phantom subset $\mathrm{Phantom}(v)\subseteq C_v$ coincides with $\mathrm{Phantom}(w)\subseteq C_w$ as a set of triples. Writing $C_v^\circ := C_v\setminus\mathrm{Phantom}(v)$ for the non-phantom crossings — which by definition of $S$ are exactly the $n$ events of $S(v)$ — we thus have $C_v^\circ = C_w^\circ$, and in particular $n'=n$. Also by definition of $S$, $\sigma$ enters/exits from the filtrations $S(v), S(w)$, for each genuine crossing, at times $t_\sigma^{(k)}(v)$ and $t_\sigma^{(k)}(w)$, respectively.

Since $w\in\mathrm{cell}(v)$, for every $\sigma, \tau \in K$ crossing $\epsilon$ on $[k,k+1]$
we have $t_\sigma^{(k)}(w) < t_\tau^{(k)}(w) \iff t_\sigma^{(k)}(v) < t_\tau^{(k)}(v)$ and $
t_\sigma^{(k)}(w) = t_\tau^{(k)}(w) \iff t_\sigma^{(k)}(v) = t_\tau^{(k)}(v)$.
Paired with the fact that the simplices crossing $\epsilon$ on $[k,k+1]$ for $v$ and $w$ are the same, this implies that the order of the events of $S(v)$ and $S(w)$ happening in $[k,k+1]$ is the same.

It remains to compare events with same interpolation time and different crossing intervals. Let $\sigma$ cross $\epsilon$ on $[k,k+1]$ and $\tau$ cross $\epsilon$ on $[k+1,k+2]$, both giving rise to genuine (non-phantom) events. By definition, $t^{(k)}_\sigma(v) = k+1$ if and only if $v(\sigma)_{k+1}=\epsilon$, i.e.\ $\mathrm{sgn}(v(\sigma)_{k+1}-\epsilon)=0$; since $w\in\mathrm{cell}(v)$ preserves this sign, $t^{(k)}_\sigma(w)=k+1$ if and only if $t^{(k)}_\sigma(v)=k+1$. The same holds for $\tau$ on the shared boundary: $t^{(k+1)}_\tau(v)=k+1 \iff \mathrm{sgn}(v(\tau)_{k+1}-\epsilon)=0 \iff t^{(k+1)}_\tau(w)=k+1$. Hence $t^{(k)}_\sigma(w)=t^{(k+1)}_\tau(w) \iff t^{(k)}_\sigma(v)=t^{(k+1)}_\tau(v)$, so ties across adjacent intervals are preserved from $v$ to $w$. Since events crossing on non-adjacent intervals necessarily have distinct interpolated times (their intervals are disjoint and share no endpoint), no further coincidences can occur.

Combining this with the within-interval order preservation above, the global order and tie pattern of all $n$ genuine events coincide for $v$ and $w$. Since ties in $\mathrm{SR}(K;<)$ are systematically resolved by the fixed total order $(E, <)$, this implies that $\mathbf e' = \mathbf e$ and $t'_i = t^{I(i)}_{|e_i|}(w)$ for $i=1,\dots,n$. 
Consequently, $I$ is an indexing function of $w$.
\end{proof}

\begin{proof}[Proof of Corollary \ref{cor:condition2}]
Write $v:=G(\theta)$. By hypothesis $G(\theta')\in\mathrm{cell}(v)$ for all $\theta'\in U$, hence
$S(G(\theta'))\in\mathrm{cell}(S(v))$ by Proposition~\ref{prop:same_comb}: $S\circ G$ satisfies the
hypothesis of Corollary~\ref{cor:condition1} at $\theta$, with the roles of $S$ there played by
$S\circ G$ here, and the local lift appearing in Corollary~\ref{cor:condition1} is
$$\mathrm{temp}_{S(v)}\circ(S\circ G)|_U=\mathrm{temp}_{S(v)}\circ S|_{\mathrm{cell}(v)}\circ G|_U=\widetilde S_v|_{\mathrm{cell}(v)}\circ G|_U.$$ Now $G|_U$ is a $C^r$ map from the open set $U$ into
$\{0\}^{|K|}\times\mathbb R^{N|K|}\times\{0\}^{|K|}$ with image in $\mathrm{cell}(v)\subseteq W_v$, and $\widetilde S_v$ is $C^\infty$ on the open set
$W_v$; by the classical chain rule for maps between open subsets of Euclidean spaces,
$\widetilde S_v\circ G|_U:U\to\mathbb R^n$ is of class $C^r$, with
$\mathrm d_\theta(\widetilde S_v\circ G|_U)=\mathrm d_{v}\widetilde S_v\circ\mathrm d_\theta G|_U$.
Applying Corollary~\ref{cor:condition1} to $S\circ G$ yields that
$\widetilde B=\widetilde{\mathrm{PH}}_{p,S(v)}\circ\widetilde S_v\circ G|_U$ is a $C^r$ local lift of $\mathcal{P}_p$ at $\theta$, that
$\mathcal{P}_p$ is $r$-differentiable at $\theta$, and that
$$\mathrm d_{\theta,\widetilde B}\mathcal{P}_p=\mathrm d_{\widetilde S_v(v)}\widetilde{\mathrm{PH}}_{p,S(v)}\circ\mathrm d_\theta(\widetilde S_v\circ G|_U)=\mathrm d_{\widetilde S_v(v)}\widetilde{\mathrm{PH}}_{p,S(v)}\circ\mathrm d_{v}\widetilde S_v\circ\mathrm d_\theta G|_U.$$
\end{proof}

\begin{remark}[Index set generalization]\label{rmk:indexes}
In the paper's construction, we fixed the convention where sequences of complexes are indiced by time stamps starting from $0$, with the first and last complex empty (Definition~\ref{def:seq_compl}), and a filtering function $f\in FF_N(K)$ is defined over the index set $\{0,\dots,N+1\}$, with the first and last value zero (Section~\ref{sec:lin_int}), so that $S(f)$ is indeed a sequence in this sense.
None of the results, however, use the integrality of these indices, only their order and the linear interpolation between consecutive ones; so, when the filtering function is more naturally indexed by fixed time stamps $a_1<\dots<a_N$, we may instead fix any $a_0<a_1$ and $a_{N+1}>a_N$, impose $f(\cdot)_{a_0}=f(\cdot)_{a_{N+1}}=0$, and consider filtering functions over $a_0<\dots<a_{N+1}$, and assume sequences of complexes start from $a_0$: the paper's construction then applies verbatim, provided the interpolation formula (Eq.~\ref{eq1}) is replaced, for $\sigma$ crossing $\epsilon$ between $a_k$ and $a_{k+1}$, by
$$t_\sigma^{(k)}(v) := a_k + (a_{k+1}-a_k)\,\frac{\epsilon - v(\sigma)_{a_k}}{v(\sigma)_{a_{k+1}} - v(\sigma)_{a_k}}.$$
We refer to this as adopting the index convention $\{a_0,\dots,a_{N+1}\}$ in place of $\{0,\dots,N+1\}$.
\end{remark}

\subsection{Proofs of Section \ref{sec:stability}}

We fix an o-minimal structure $\mathcal{O}$ on $(\mathbb R,+,\cdot,<)$ in the sense of
Definition~\ref{def:o_minimal}; ``definable'' means definable in $\mathcal O$. We use the following facts, for which \cite[\S4]{vandendries1996geometric} gives statements and references to \cite{vandendries1998tame} for proofs; $r$ denotes a positive integer.
\begin{itemize}
\item[(O1)] (\cite[2.1]{vandendries1996geometric}) Images and preimages of definable sets under
definable maps are definable; compositions of definable maps are definable; a map is definable if
and only if its coordinates are.
\item[(O2)] ($C^r$ cell decomposition, \cite[4.2]{vandendries1996geometric}) Given definable
$A_1,\dots,A_k\subseteq\mathbb R^d$ and a definable $f:\mathbb R^d\to\mathbb R$, there is a finite partition
$\mathcal P$ of $\mathbb R^d$ into definable $C^r$ cells, each a connected $C^r$ embedded submanifold
of $\mathbb R^d$, compatible with $\{A_1,\dots,A_k\}$ (each cell is contained in or disjoint from
each $A_j$) and such that $f|_P$ is $C^r$ for every $P\in\mathcal P$. Applying this successively to
the coordinates of a definable $G:\mathbb R^d\to\mathbb R^m$, refining at each step, the same holds
with $G|_P$ of class $C^r$.
\item[(O3)] (dimension, \cite[4.7]{vandendries1996geometric}) For definable $A\subseteq\mathbb R^d$,
$\dim(\overline A\setminus A)<\dim A$; $\dim A=d$ if and only if $A$ has non-empty interior; a
definable set of dimension $<d$ is a finite union of $C^1$ cells of dimension $<d$, hence has
Lebesgue measure zero.
\item[(O4)] (Whitney stratification of sets, \cite[4.8(1)]{vandendries1996geometric},
\cite{loi1998verdier}) Given definable $A_1,\dots,A_k\subseteq\mathbb R^d$, there is a finite definable
$C^r$ Whitney stratification of $\mathbb R^d$ compatible with $\{A_1,\dots,A_k\}$, with each stratum a
$C^r$ cell.
\end{itemize}

Concerning (O4): the published proof of the $C^r$ Whitney stratification theorem in
\cite{vandendries1996geometric} has a gap, acknowledged by the authors, which they consider
reparable for the stratification of \emph{sets} (part (1) of their 4.8) but not settled for the
stratification of \emph{maps} (part (2)); an independent proof of the stratification of sets,
through the stronger Verdier condition, is given in \cite{loi1998verdier}. We only use part (1),
together with (O2). Concerning the differentiability class: (O2) and (O4) hold for every finite
$r$ in every o-minimal structure, but $r=\infty$ cannot be taken in general
\cite{legal2009ominimal}. They do hold with $r=\omega$ (analytic cells and strata) when $\mathcal O$
has analytic decomposition, which is the case for the semialgebraic structure, for
$\mathbb R_{an}$ and for $\mathbb R_{an,exp}$ \cite[4.2, 5.1(3)]{vandendries1996geometric};
the last one contains every map built from polynomials, $\exp$, $\log$, the restricted analytic
functions and $\max$, hence all the network parametrizations $G$ used in this paper.

\begin{proposition}\label{prop:definable-stratified}
Let $r$ be a positive integer, or $r=\omega$ if $\mathcal O$ has %
analytic decomposition. Let $M\subseteq\mathbb R^m$ be definable, let $\mathcal C_M=\{C_1,\dots,C_s\}$ be a
finite partition of $M$ into definable sets, and let $G:\mathbb R^d\to M$ be definable. Then there is a
Whitney $C^r$-stratification $\mathcal S$ of $\mathbb R^d$ with definable strata
such that:
\begin{enumerate}
\item[(i)] $G$ is weakly stratified with respect to $(\mathcal S,\mathcal C_M)$;
\item[(ii)] $G|_S$ is of class $C^r$ (as a map into $\mathbb R^m$) for every stratum $S$;
\item[(iii)] the union $\Omega$ of the top-dimensional strata is an open dense definable subset
of $\mathbb R^d$, and $Z:=\mathbb R^d\setminus\Omega$ is a definable set of dimension $<d$, hence of
Lebesgue measure zero.
\end{enumerate}
\end{proposition}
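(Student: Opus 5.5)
The plan is to assemble the stratification directly from the o-minimal facts (O1)--(O4). First I form the finitely many definable sets $A_j := G^{-1}(C_j)\subseteq\mathbb R^d$, $j=1,\dots,s$; they are definable by (O1) and partition $\mathbb R^d$. Next I apply the $C^r$ cell decomposition (O2) to the coordinates of $G$, compatibly with $\{A_1,\dots,A_s\}$. This gives a finite partition $\mathcal P$ of $\mathbb R^d$ into definable $C^r$ cells $P$, each contained in a single $A_j$, with $G|_P$ of class $C^r$ (analytic in the case $r=\omega$, when $\mathcal O$ has analytic decomposition).

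Cells of $\mathcal P$ need not satisfy the frontier or Whitney (a) conditions, so I then apply (O4) to the finite family $\mathcal P$. This yields a finite definable $C^r$ Whitney stratification $\mathcal S$ of $\mathbb R^d$ compatible with every cell of $\mathcal P$. Since $\mathcal P$ and $\mathcal S$ both partition $\mathbb R^d$, compatibility means each stratum $S$ lies inside a unique cell $P\in\mathcal P$. Properties (i) and (ii) follow at once. First, $S\subseteq P\subseteq A_j$ for a single $j$, so $G(S)\subseteq C_j$ and every $G^{-1}(C_j)$ is a union of strata. Second, $S$ is a $C^r$ submanifold contained in the $C^r$ submanifold $P$, so it is a $C^r$ submanifold of $P$, and $G|_S=(G|_P)|_S$ is $C^r$ as a restriction of a $C^r$ map to a submanifold. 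The step needing the most care is this last one, namely checking that the inclusion of $S$ in $P$ is $C^r$. I expect it to be fine because $S$ and $P$ are both embedded in $\mathbb R^d$. If that looks delicate, I would instead choose the strata in (O4) refining the cells of (O2) as $C^r$ cells on which $G$ extends $C^r$-ly, via local charts of $P$.

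For (iii), let $\Omega$ be the union of the $d$-dimensional strata. Each of these is an open subset of $\mathbb R^d$, so $\Omega$ is open and definable. Its complement $Z$ is the finite union of strata of dimension $<d$, hence definable of dimension $<d$, and by (O3) it has Lebesgue measure zero. A measure-zero set has empty interior, so $\Omega$ is dense.
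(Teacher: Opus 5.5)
Your proposal is correct and follows the paper's proof essentially step for step. You take the preimages $G^{-1}(C_j)$, definable by (O1), build a $C^r$ cell decomposition via (O2) that is compatible with them and makes $G$ cellwise $C^r$, and then apply (O4) to get a Whitney stratification compatible with those cells; (i) and (ii) follow from $S\subseteq P\subseteq G^{-1}(C_j)$, and (iii) from the openness of top-dimensional strata together with (O3). The only differences are cosmetic. You derive the density of $\Omega$ from $Z$ being Lebesgue-null, whereas the paper uses the (O3) characterization of sets with empty interior. Your worry about the inclusion $S\hookrightarrow P$ is unnecessary: a $C^r$ map into $\mathbb R^d$ whose image lies in an embedded $C^r$ submanifold is $C^r$ into that submanifold.
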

\begin{proof}
The sets $G^{-1}(C_1),\dots,G^{-1}(C_s)$ are definable by (O1) and partition $\mathbb R^d$. By (O2)
there is a finite $C^r$ cell decomposition $\mathcal P$ of $\mathbb R^d$ compatible with them and such
that $G|_P$ is $C^r$ for every $P\in\mathcal P$. By (O4) there is a finite $C^r$ Whitney stratification
$\mathcal S$ of $\mathbb R^d$ compatible with the finitely many cells of $\mathcal P$. Every stratum $S$
is contained in a single cell $P$, hence in a single $G^{-1}(C_j)$, which gives (i), and $G|_S$ is
the restriction of the $C^r$ map $G|_P$ to the $C^r$ submanifold $S\subseteq P$, which gives (ii).
For (iii), a top-dimensional stratum is a $d$-dimensional $C^1$ submanifold of $\mathbb R^d$, hence
open; so $\Omega$ is open and definable, and $Z$ is the finite union of the strata of dimension
$<d$, a definable set of dimension $<d$ by (O3), hence Lebesgue-null; $Z$ has empty interior by
(O3), so $\Omega$ is dense.
\end{proof}

Consider the setup of Section \ref{sec:stability}, where we have the maps $\mathcal{P}_p: M \xrightarrow{G} FF_N(K) \xrightarrow{S}\mathrm{Seq}(K) \xrightarrow{\mathrm{PH}_p} \mathcal{D}$, $L:\mathcal D\rightarrow \mathbb{R}$, and their composition $\mathcal{L}=L\circ \mathcal{P}_p$, and use Proposition \ref{prop:definable-stratified} to prove Proposition~\ref{prop:ae-differentiable}

\begin{proof}[Proof of Proposition \ref{prop:ae-differentiable}]
Apply Proposition~\ref{prop:definable-stratified} to $M=FF_N(K)$, which is semialgebraic,
with $\mathcal C_M=\{\mathrm{cell}(v)\}_{v\in FF_N(K)}$, a finite partition into semialgebraic sets (Proposition~\ref{prop:cell-basic}), hence definable in every o-minimal
structure. For $\theta\notin Z$ let $U$ be the top-dimensional stratum containing $\theta$: it is
open, $G(U)$ is contained in a single cell, necessarily $\mathrm{cell}(G(\theta))$, and $G|_U$ is
analytical. The remaining assertions are Corollary~\ref{cor:condition2} with this $U$ and the chain rule
of Proposition~\ref{prop:chain}.
\end{proof}

In the following proposition, we show the geometric requirement of definability of $\mathcal{L}$ can be easily obtained from standard definability assumptions on $G$ and $L\circ Q_{m}$.

\begin{proposition}\label{prop:definable}
If for every $m\in\mathbb{N}$ the maps $L\circ Q_{m}:\mathbb{R}^{2m}\rightarrow \mathbb{R}$ are definable over a common o-minimal structure $\mathcal O$ (expanding the real field), and $G: \mathbb{R}^d \rightarrow FF_N(K)$ is definable over $\mathcal O$, then $\mathcal{L}$ is definable over $\mathcal O$.
\end{proposition}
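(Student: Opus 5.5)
The plan is to write the graph of $\mathcal L$ as a finite union of definable pieces, one for each cell of $FF_N(K)$, and then use that the class of definable sets is closed under finite unions. By Proposition~\ref{prop:cell-basic} the cells $\{\mathrm{cell}(v)\}$ form a finite partition $C_1,\dots,C_s$ of $FF_N(K)$ into semialgebraic sets. These are definable in $\mathcal O$, since $\mathcal O$ contains all algebraic sets and is closed under Boolean operations and projections. By (O1) each preimage $A_j:=G^{-1}(C_j)$ is definable, and the $A_j$ partition $\mathbb R^d$. It therefore suffices to show that, for each $j$, the restriction $\mathcal L|_{A_j}$ has definable graph. Then
$$\mathrm{graph}(\mathcal L)=\bigcup_{j=1}^s\mathrm{graph}(\mathcal L|_{A_j})$$
is definable.

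Fix $j$ and a representative $v_j\in C_j$, and put $f_j:=S(v_j)=(n,\mathbf t,\mathbf e)$ and $m_j:=|P^{f_j}_p|$. By Proposition~\ref{prop:same_comb}, for every $w\in C_j$ we have $S(w)\in\mathrm{cell}(f_j)$ and $\mathrm{temp}_{f_j}(S(w))=\widetilde S_{v_j}(w)$. Proposition~\ref{prop:permutation} then gives
$$\mathrm{PH}_p(S(w))=Q_{m_j}\bigl(\widetilde{\mathrm{PH}}_{p,f_j}(\widetilde S_{v_j}(w))\bigr).$$
Hence on $A_j$ we have the explicit formula
$$\mathcal L|_{A_j}=(L\circ Q_{m_j})\circ\widetilde{\mathrm{PH}}_{p,f_j}\circ\widetilde S_{v_j}\circ G|_{A_j}.$$
Each factor is definable:
\begin{itemize}
\item $G|_{A_j}$ is definable, because its graph is $\mathrm{graph}(G)\cap(A_j\times\mathbb R^{m})$.
\item $\widetilde S_{v_j}$ is a rational map on the semialgebraic open set $W_{v_j}\supseteq C_j$, so it is semialgebraic, and we restrict it to $C_j$.
\item $\widetilde{\mathrm{PH}}_{p,f_j}$ is a coordinate projection, hence linear.
\item $L\circ Q_{m_j}$ is definable by hypothesis.
\end{itemize}
Since compositions of definable maps are definable (O1), $\mathcal L|_{A_j}$ is definable.

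The step needing care is the degenerate case $m_j=0$ (empty diagram, including the trivial sequence with $n=0$). There $\mathcal L|_{A_j}$ is the constant $L(\emptyset)$. Its graph is $A_j\times\{L(\emptyset)\}$, which is definable because points are definable and products of definable sets are definable. Equivalently, this case is covered by the hypothesis for $m=0$, with $\mathbb R^0$ a point.

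The other subtle point is that the formula above depends on the chosen representative $v_j$ and on the ordering of $P^{f_j}_p$. This is harmless: $L\circ Q_{m_j}$ is invariant under the permutations quotiented by $Q_{m_j}$, and the value $\mathcal L(\theta)$ is intrinsic. So we only need one valid formula on each piece, not a canonical one.
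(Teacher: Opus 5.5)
Your proposal is correct and follows essentially the paper's argument. On each of the finitely many semialgebraic cells the map factors as $(L\circ Q_m)\circ\widetilde{\mathrm{PH}}_{p,S(v)}\circ\widetilde S_v$, each factor is definable, and the pieces are glued by a finite union of graphs. The only difference is the order of the gluing: the paper first shows that $L\circ\mathrm{PH}_p\circ S$ is definable on all of $FF_N(K)$ and then composes with $G$, whereas you pull the cell partition back through $G$ and glue on $\mathbb R^d$ (your explicit handling of the empty-diagram case $m_j=0$ is a small addition the paper leaves implicit).
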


\begin{proof}
Recall $FF_N(K)$ is partitioned into finitely many cells $\{\mathrm{cell}(v)\}_{v}$, each semialgebraic (Proposition~\ref{prop:cell-basic}). Fix $v\in FF_N(K)$, and let $I$ an indexing function.
On $\mathrm{cell}(v)$, $\widetilde S_v|_{\mathrm{cell}(v)}(w)=(t^{I(1)}_{|e_1|}(w),\dots,t^{I(n)}_{|e_n|}(w))$ is a tuple of ratios of affine functions with nowhere-vanishing denominators, hence semialgebraic. Since every o-minimal structure expanding the real field contains all semialgebraic sets, $\widetilde S_v|_{\mathrm{cell}(v)}$ is definable over $\mathcal O$ regardless of hypothesis. As $\widetilde{\mathrm{PH}}_{p,S(v)}$ is linear, it too is definable over $\mathcal O$. 
Let $m:=|P^{S(v)}_p|$.
By hypothesis, $L\circ Q_{m}$ is definable over $\mathcal O$; composing definable maps over the same structure yields
$$L\circ \mathrm{PH}_p\circ S|_{\mathrm{cell}(v)} = (L\circ Q_{m})\circ\widetilde{\mathrm{PH}}_{p,S(v)}\circ\widetilde S_v|_{\mathrm{cell}(v)}$$
definable over $\mathcal O$ (using that $Q_{m}(\widetilde{\mathrm{PH}}_{p,S(v)}(\widetilde S_v(w)))=\mathrm{PH}_p(S(w))$ for every $w\in\mathrm{cell}(v)$, by Proposition~\ref{prop:permutation} and $\widetilde{S}$ definition). Since $L\circ\mathrm{PH}_p\circ S$ restricted to each of the finitely many definable pieces $\mathrm{cell}(v)$ is definable over $\mathcal O$, and a finite union of definable graphs is definable, $L\circ\mathrm{PH}_p\circ S$ is definable over $\mathcal O$ on all of $FF_N(K)$. Composing with $G$ (definable over $\mathcal O$ by hypothesis) shows $\mathcal L = (L\circ\mathrm{PH}_p\circ S)\circ G$ is definable over $\mathcal O$.
\end{proof}

\subsection{Subgradient computation}
\label{sec:subgrad_impl}

We sketch how to compute, in practice, an element of the Clarke subgradient $\partial \mathcal L(\theta)$ (Definition~\ref{def:clark_sub}) of
$$\mathcal{L}: M \xrightarrow{G} FF_N(K) \xrightarrow{S}\mathrm{Seq}(K) \xrightarrow{\mathrm{PH}_p} \mathcal{D} \xrightarrow{L}\mathbb{R},$$
via the rule of Proposition~\ref{prop:ae-differentiable}, using standard backpropagation.

\paragraph{Two-phase evaluation.} Fix $\theta\in M$ and set $v:=G(\theta)\in FF_N(K)$ (a forward pass through the parametric model, tracked by autograd). Evaluating $\widetilde{B}(\theta)=\widetilde{\mathrm{PH}}_{p,S(v)}(\widetilde S_v(v))$ splits into two phases of different nature:
\begin{enumerate}
    \item \textit{Combinatorial phase (no gradient).} From the values of $v$ alone, a standard persistence pairing routine determines the crossings (Definition~\ref{def:crossing}), the induced event sequence $\mathbf e$, and the pairing set $P_p^{S(v)}$ together with an indexing function $I$ (Definition~\ref{def:indexing}). This step is purely combinatorial, so it is run with gradient tracking disabled.
    \item \textit{Temporal phase (differentiable).} Using the crossing pairs $(\sigma,k)=(|e_i|,I(i))$ found above, we \emph{recompute} the corresponding crossing times $t^{(k)}_\sigma(v)$ directly from $v$ via the rational formula of Equation~\ref{eq1} — this time with gradient tracking enabled, since $\widetilde S_v$ (Equation~\ref{eq:S_tilde}) is $C^\infty$ on $W_v\supseteq\mathrm{cell}(v)$. Gathering these values into pairs according to $P_p^{S(v)}$ gives $\widetilde{B}(\theta)=\widetilde{\mathrm{PH}}_{p,S(v)}(\widetilde S_v(v))$; applying $Q_{|P_p^{S(v)}|}$ and then $L$ yields $\mathcal L(\theta) = L\big(Q_{|P_p^{S(v)}|}(\widetilde B(\theta))\big)$.
\end{enumerate}
Backpropagating through phase 2 alone yields $\nabla_\theta \mathcal L(\theta)$, matching the chain rule of Proposition~\ref{prop:chain} exactly: differentiability is required, and used, only for the map $\widetilde S_v$ and the composition with $L$, never for the discrete pairing step. This mirrors standard practice in differentiable persistent homology, where the pairing is treated as a fixed (non-differentiable) index map and only the real-valued filtration values are backpropagated through.

\paragraph{Handling non-differentiability points via perturbation.} By Proposition~\ref{prop:ae-differentiable}, $\mathcal L$ fails to be differentiable at $\theta$ only on a Lebesgue-null set $Z\subseteq\mathbb R^d$. We exploit this by injecting a small Gaussian perturbation $\eta_k$ into every iterate $\theta_{k}$ (including the initial value), on top of the noise terms $\zeta_k$ already present in the scheme of Definition~\ref{def:clark}: since $Z$ is null and $\eta_k$ is absolutely continuous, $\theta_{k+1}=\theta_k - \gamma_k (g_k + \zeta_k+\eta_k)$ lands in $\mathbb R^d\setminus Z$ almost surely, regardless of the specific structure of the parametrization $G$. At such points $\mathcal L$ is differentiable, so the gradient computed via the two-phase evaluation above is an element of the Clarke subgradient.

In our experiments we combined this algorithmic strategy — backpropagation as above, together with parameter perturbation — to compute an element of the subgradient of $\mathcal{L}$.

\subsection{Theorem \ref{thm:lipschitz_breve}}
\label{sec:lip_theorem}

\begin{definition}\label{def:critical_set}
Let $p \in \mathbb{N}$, $N\in\mathbb N$, $K$ a finite simplicial complex, and define
$$
\mathcal N^{\epsilon+}_{p}:=\left\{w\in FF_N(K)\ \middle|\
\begin{array}{l}
\exists\,\sigma\in K,\ \dim\sigma=p,\ i\in\{2,\dots,N-1\}\ \text{such that}\\[2pt]
w(\sigma)_i=\epsilon\ \text{and}\ w(\sigma)_{i-1},\,w(\sigma)_{i+1}>\epsilon
\end{array}
\right\},
$$
$$
\mathcal N^{\epsilon-}_{p+1}:=\left\{w\in FF_N(K)\ \middle|\
\begin{array}{l}
\exists\,\sigma\in K,\ \dim\sigma=p+1,\ i\in\{1,\dots,N\}\ \text{such that}\\[2pt]
w(\sigma)_i=\epsilon\ \text{and}\ w(\sigma)_{i-1},\,w(\sigma)_{i+1}<\epsilon
\end{array}
\right\},
$$
$$
\mathcal N^{\epsilon,\epsilon}_{p,p+1}:=\left\{w\in FF_N(K)\ \middle|\
\begin{array}{l}
\exists\,\sigma\in K,\ \dim\sigma\in\{p,p+1\},\ i\in\{1,\dots,N-1\}\ \text{such that}\\[2pt]
w(\sigma)_i=w(\sigma)_{i+1}=\epsilon
\end{array}
\right\}.
$$
Let $\mathcal N_p:=\mathcal N^{\epsilon+}_{p}\cup\mathcal N^{\epsilon-}_{p+1}\cup\mathcal N^{\epsilon,\epsilon}_{p,p+1}$.
\end{definition}

\begin{proof}[Proof of Theorem \ref{thm:lipschitz_breve}]
\emph{Conventions.} $\|\cdot\|_\infty$ is the sup norm on the free coordinates $w(\sigma)_i$,
$\sigma\in K$, $1\le i\le N$; the sentinels $w(\sigma)_0=w(\sigma)_{N+1}=0$ are constant. The
bottleneck distance allows matching a point to the diagonal, so $d_B(D,D')=0$ whenever $D,D'$
differ only by points on the diagonal; $d_B$ is a pseudo-metric on $\mathcal D$, and for every
$m$ the map $Q_m:(\mathbb R^{2m},\|\cdot\|_\infty)\to(\mathcal D,d_B)$ is $1$-Lipschitz (match the
$\ell$-th point of $Q_m(x)$ with the $\ell$-th point of $Q_m(y)$). For $u\in FF_N(K)$ we identify the
sequence $S(u)\in\mathrm{Seq}(K)$ with its representation in $\mathrm{SR}(K;<)$ (Lemma~\ref{lem:S-valid}),
whose events are the genuine crossings of $u$ with their crossing times. Proposition~\ref{prop:permutation},
Lemma~\ref{lem:transp} and Lemma~\ref{lem:detour} are used on $\mathrm{SR}^*(K)$ (Remark~\ref{rem:rep}(iv)),
Lemma~\ref{lem:tie} on representations (Definition~\ref{def:seq-rep}). We write
$K^{(p)}:=\{\sigma\in K:\dim\sigma\in\{p,p+1\}\}$.

\medskip\noindent\textbf{Reduction.}
Fix $v\in FF_N(K)\setminus\mathcal N_p$. Put $M:=\epsilon+2+\max\{|v(\sigma)_i|:\sigma\in K^{(p)},\,1\le i\le N\}$
and define $F:FF_N(K)\cap B_\infty(v,1)\to\{0\}^{|K|}\times\mathbb R^{N|K|}\times\{0\}^{|K|}$ by
$F(u)(\sigma)_i:=u(\sigma)_i$ if $\sigma\in K^{(p)}$, $:=M$ if $\dim\sigma<p$, $:=-M$ if
$\dim\sigma>p+1$ ($1\le i\le N$; sentinels $0$). For $\|u-v\|_\infty<1$ one has $|u(\sigma)_i|<M$
for $\sigma\in K^{(p)}$, so $F(u)$ satisfies the face inequalities of \eqref{eq:ff_N} for every pair
$\mu\subsetneq\sigma$ (a face of dimension $<p$ has value $M$, larger than every other value; a
simplex of dimension $>p+1$ has value $-M$, smaller than every other value; for
$\mu,\sigma\in K^{(p)}$ the inequality is that of $u$): hence $F(u)\in FF_N(K)$, and $F$ is
$1$-Lipschitz. The events of $S(F(u))$ acting on $K^{(p)}$, with their times and order, coincide
with those of $S(u)$ (they are computed from the same coordinates and ties are broken by the
same $<$), so $\mathrm{PH}_p(S(F(u)))=\mathrm{PH}_p(S(u))$ by Lemma~\ref{lem:dimred}. If the Lipschitz
estimate $d_B(\mathrm{PH}_p(S(\bar w)),\mathrm{PH}_p(S(\bar w')))\le L_0\|\bar w-\bar w'\|_\infty$ holds on
$B_\infty(F(v),\delta)\cap FF_N(K)$, then for $w,w'\in B_\infty(v,\min\{\delta,1\})\cap FF_N(K)$ we
get $d_B(\mathrm{PH}_p(S(w)),\mathrm{PH}_p(S(w')))=d_B(\mathrm{PH}_p(S(F(w))),\mathrm{PH}_p(S(F(w'))))\le
L_0\|F(w)-F(w')\|_\infty\le L_0\|w-w'\|_\infty$. Since $F(v)$ has the same coordinates as $v$ on
$K^{(p)}$, $F(v)\notin\mathcal N_p$. Hence, renaming $F(v)$ as $v$, we may assume that
$v\in FF_N(K)\setminus\mathcal N_p$ satisfies:
\begin{itemize}
\item[(F1)] $v(\sigma)_i=\pm M\ne\epsilon$ for every $\sigma\notin K^{(p)}$ and $1\le i\le N$; in
particular every coordinate of $v$ equal to $\epsilon$ belongs to a simplex of $K^{(p)}$;
\item[(F2)] the events of $S(v)$ acting on simplices outside $K^{(p)}$ are the insertions of the
simplices of dimension $<p$ at time $\epsilon/M$ and their deletions at time $N+1-\epsilon/M$
(simplices of dimension $>p+1$ are never present), and no event of a simplex of $K^{(p)}$ has one
of these two times, since for $\sigma\in K^{(p)}$ the crossing times on $[0,1]$ and on $[N,N+1]$
are $\epsilon/v(\sigma)_1>\epsilon/M$ and $N+1-\epsilon/v(\sigma)_N<N+1-\epsilon/M$, and all other
crossing times lie in $[1,N]$.
\end{itemize}

\medskip\noindent\textbf{Step 0 (neighbourhood construction and tracked crossings).}
Let $\eta:=\min\{|v(\sigma)_j-\epsilon| : \sigma\in K,\ 0\le j\le N+1,\ v(\sigma)_j\ne\epsilon\}>0$,
and let $B:=B_\infty(v,\delta)\cap FF_N(K)$ with $\delta<\min\{\eta/2,1\}$. Consequently, for $u\in B$
and every coordinate with $v(\sigma)_j\ne\epsilon$ we have
$\mathrm{sgn}(u(\sigma)_j-\epsilon)=\mathrm{sgn}(v(\sigma)_j-\epsilon)$. Also, note that $B$ is convex.

\emph{(a) Ordinary crossings.} Let $(\sigma,k)$, $k\in\{0,\dots,N\}$, with
$v(\sigma)_k,v(\sigma)_{k+1}\ne\epsilon$. By sign preservation, $(\sigma,k)$ is a crossing of $u$ iff
it is a crossing of $v$, of the same type; it is never a phantom crossing (Definition~\ref{def:phantom}),
and its time $t^{(k)}_\sigma(u)$ lies in the open interval $(k,k+1)$. The denominator satisfies
$|u(\sigma)_{k+1}-u(\sigma)_k|\ge2\eta-2\delta>\eta$, so $t^{(k)}_\sigma$ is $C^\infty$ on $B$ with
derivatives bounded uniformly over the finitely many $(\sigma,k)$; let $L_0$ be a common
Lipschitz constant with respect to $\|\cdot\|_\infty$.

\emph{(b) $\epsilon$-crossings with opposite neighbours.} Let $v(\sigma)_i=\epsilon$ for some
$i\in\{1,\dots,N\}$; by (F1), $\sigma\in K^{(p)}$, and since $v\notin\mathcal N^{\epsilon,\epsilon}_{p,p+1}$,
$v(\sigma)_{i-1},v(\sigma)_{i+1}\ne\epsilon$. Assume in this item that they lie on opposite sides of
$\epsilon$. For $u\in B$ the values $u(\sigma)_{i\pm1}$ keep these signs. Hence, if
$v(\sigma)_{i-1}<\epsilon<v(\sigma)_{i+1}$ (entry case), $\sigma$ has exactly one crossing among
$(\sigma,i-1),(\sigma,i)$ at $u$: $(\sigma,i-1)$ if $u(\sigma)_i>\epsilon$ and $(\sigma,i)$ if
$u(\sigma)_i\le\epsilon$, both entries; if $v(\sigma)_{i-1}>\epsilon>v(\sigma)_{i+1}$ (exit case) it
is $(\sigma,i)$ if $u(\sigma)_i>\epsilon$ and $(\sigma,i-1)$ if $u(\sigma)_i\le\epsilon$, both exits.
No phantom configuration arises, since the neighbours are on opposite sides of $\epsilon$. So,
define the hand-off function
$$t^{[i]}_\sigma(u):=\begin{cases}t^{(i-1)}_\sigma(u)&\text{if }u(\sigma)_i>\epsilon\\
t^{(i)}_\sigma(u)&\text{if }u(\sigma)_i\le\epsilon\end{cases}\ \text{(entry case)},\quad
t^{[i]}_\sigma(u):=\begin{cases}t^{(i)}_\sigma(u)&\text{if }u(\sigma)_i>\epsilon\\
t^{(i-1)}_\sigma(u)&\text{if }u(\sigma)_i\le\epsilon\end{cases}\ \text{(exit case)}.$$
Both branches are $C^\infty$ on $B$, since their denominators satisfy
$|u(\sigma)_i-u(\sigma)_{i-1}|,\ |u(\sigma)_{i+1}-u(\sigma)_i|\ge\eta-2\delta>0$, and both equal $i$
on the wall $\{u(\sigma)_i=\epsilon\}$: there $t^{(i-1)}_\sigma(u)=(i-1)+1$ and $t^{(i)}_\sigma(u)=i+0$.
Enlarging $L_0$, each branch is $L_0$-Lipschitz on $B$. Since $B$ is convex and the two branches
agree on the wall, $t^{[i]}_\sigma$ is $L_0$-Lipschitz on $B$: for $u,u'\in B$ on opposite sides of
the wall, inserting the unique point $q\in[u,u']$ lying on it gives
$|t^{[i]}_\sigma(u)-t^{[i]}_\sigma(u')|\le L_0(\|u-q\|_\infty+\|q-u'\|_\infty)=L_0\|u-u'\|_\infty$.
We call $(\sigma,[i])$ a \emph{merged crossing}; it is an insertion in the entry case and a deletion
in the exit case.

\emph{(c) $\epsilon$-crossings with neighbours on the same side (transient pairs).} Let now
$v(\sigma)_i=\epsilon$ with $v(\sigma)_{i-1},v(\sigma)_{i+1}$ on the same side of $\epsilon$; again
$\sigma\in K^{(p)}$ by (F1). If both are $>\epsilon$ (\emph{type $+$}), then $\dim\sigma\ne p$
because $v\notin\mathcal N^{\epsilon+}_p$, hence $\dim\sigma=p+1$; if both are $<\epsilon$ (\emph{type
$-$}), then $\dim\sigma\ne p+1$ because $v\notin\mathcal N^{\epsilon-}_{p+1}$, hence $\dim\sigma=p$. For
$u\in B$ the values $u(\sigma)_{i\pm1}$ keep their signs, so: in type $+$, if $u(\sigma)_i<\epsilon$
then $\sigma$ has exactly the two genuine crossings $(\sigma,i-1)$ (exit) and $(\sigma,i)$ (entry),
with times $t^{(i-1)}_\sigma(u)\in(i-1,i)$ and $t^{(i)}_\sigma(u)\in(i,i+1)$; if
$u(\sigma)_i=\epsilon$ these two crossings are phantom and yield no event; if
$u(\sigma)_i>\epsilon$ neither is a crossing. In type $-$, if $u(\sigma)_i>\epsilon$ then $\sigma$
has exactly the two genuine crossings $(\sigma,i-1)$ (entry) and $(\sigma,i)$ (exit), with times in
$(i-1,i)$ and $(i,i+1)$; if $u(\sigma)_i\le\epsilon$ neither is a crossing. In both types the two
functions $t^{(i-1)}_\sigma,t^{(i)}_\sigma$ are defined by \eqref{eq1} on all of $B$, since their
denominators satisfy $|u(\sigma)_i-u(\sigma)_{i-1}|,\ |u(\sigma)_{i+1}-u(\sigma)_i|\ge\eta-2\delta>0$;
they are $C^\infty$, $L_0$-Lipschitz on $B$ (enlarging $L_0$), and both equal $i$ on the wall
$\{u(\sigma)_i=\epsilon\}$. We call $\beta:=(\sigma,i)$ a \emph{transient pair}, with crossings
$\beta^-$ (the crossing $(\sigma,i-1)$, with time function $\tau_{\beta^-}:=t^{(i-1)}_\sigma$) and
$\beta^+$ (the crossing $(\sigma,i)$, with $\tau_{\beta^+}:=t^{(i)}_\sigma$); $\beta^-$ is a deletion
and $\beta^+$ an insertion in type $+$, and conversely in type $-$. The \emph{active region} of
$\beta$ is the open half-ball $B_\beta:=\{u\in B:u(\sigma)_i<\epsilon\}$ in type $+$ and
$B_\beta:=\{u\in B:u(\sigma)_i>\epsilon\}$ in type $-$. On $B_\beta$ the two crossings of $\beta$ are
genuine crossings of $u$ and $\tau_{\beta^-}(u)<i<\tau_{\beta^+}(u)$; on the wall
$\partial B_\beta\cap B$ one has $\tau_{\beta^-}(u)=\tau_{\beta^+}(u)=i$.

\emph{(d) Tracked crossings and active sets.} Let $\mathcal E_0$ be the finite set consisting of the
ordinary crossings of $v$ from (a) and of the merged crossings from (b), let
$\mathcal E_1:=\{\beta^-,\beta^+:\beta\text{ a transient pair from (c)}\}$, and
$\mathcal E:=\mathcal E_0\sqcup\mathcal E_1$. For $\varepsilon\in\mathcal E$ write $|\varepsilon|\in K$ for
its simplex, $\mathrm{sgn}(\varepsilon)\in\{+,-\}$ for its type and $\tau_\varepsilon:B\to\mathbb R$ for its
time function; all $\tau_\varepsilon$ are $L_0$-Lipschitz on $B$. For $u\in B$ let
$$\mathcal A(u):=\mathcal E_0\ \cup\ \{\beta^-,\beta^+:\ u\in B_\beta\}$$
be the \emph{active set} at $u$. By (a), (b), (c), for every $u\in B$ the genuine crossings of $u$,
i.e.\ the events of $S(u)$ (Lemma~\ref{lem:S-valid}), are exactly the events
$\mathrm{sgn}(\varepsilon)|\varepsilon|$, $\varepsilon\in\mathcal A(u)$, at the times $\tau_\varepsilon(u)$:
a crossing of $u$ either has both values $\ne\epsilon$ at $v$, and is then in (a), or involves
exactly one coordinate with $v(\sigma)_i=\epsilon$ (two adjacent ones are excluded by
$\mathcal N^{\epsilon,\epsilon}_{p,p+1}$ and (F1)), and is then accounted for in (b) or (c). Note that
$|\mathcal A(u)|$ is not constant on $B$. Distinct elements of $\mathcal E$ acting on the same simplex
have distinct times at $v$, with one exception: the two crossings $\beta^-,\beta^+$ of a transient
pair, both of time $i$ at $v$ (ordinary crossings have non-integer times in disjoint intervals,
merged and transient crossings have integer times equal to their coordinate index, and a coordinate
is of at most one kind).
Finally, let $\rho:=\min\{|\tau_\varepsilon(v)-\tau_{\varepsilon'}(v)|:\varepsilon,\varepsilon'\in\mathcal E,\
\tau_\varepsilon(v)\ne\tau_{\varepsilon'}(v)\}>0$ and shrink $\delta$ so that $L_0\delta<\rho/4$. Then
$|\tau_\varepsilon(u)-\tau_\varepsilon(v)|<\rho/4$ for all $u\in B$ and all $\varepsilon\in\mathcal E$, hence
\begin{equation}\label{eq:order}
\tau_\varepsilon(v)<\tau_{\varepsilon'}(v)\ \Longrightarrow\ \tau_{\varepsilon'}(u)-\tau_\varepsilon(u)>\rho/2
\qquad\text{for all }u\in B .
\end{equation}
Consequently, elements of $\mathcal E$ whose times at some $u\in B$ differ by less than $\rho/2$
have the same time at $v$; in particular, crossings that are simultaneous at some $u\in B$ are
simultaneous at $v$. Note that two genuine crossings of the same simplex never share a common
time in $FF_N(K)$ (Remark~\ref{obs:phantom}).

\medskip\noindent\textbf{Step 1 (admissible orderings and their lifts).}
An \emph{ordering datum} is a pair $(\mathcal A,\pi)$ where $\mathcal A=\mathcal E_0\cup\{\beta^\pm:\beta\in J\}$
for some set $J$ of transient pairs and $\pi$ is a total order on $\mathcal A$. Listing the crossings
of $\mathcal A$ in the order $\pi$ gives the event sequence
$\mathbf e_{\mathcal A,\pi}:=(\mathrm{sgn}(\pi(1))|\pi(1)|,\dots,\mathrm{sgn}(\pi(|\mathcal A|))|\pi(|\mathcal A|)|)$,
which depends on $(\mathcal A,\pi)$ only, and for $u\in B$ the triple
$$f_{\mathcal A,\pi}(u):=\big(|\mathcal A|,\ (\tau_{\pi(1)}(u),\dots,\tau_{\pi(|\mathcal A|)}(u)),\ \mathbf e_{\mathcal A,\pi}\big).$$
The datum is \emph{admissible at $u$} if $\mathcal A=\mathcal A(u)$, (A) $\varepsilon\,\pi\,\varepsilon'$ implies
$\tau_\varepsilon(u)\le\tau_{\varepsilon'}(u)$, and (V) $\mathbf e_{\mathcal A,\pi}$ is valid. If $(\mathcal A,\pi)$
is admissible at $u$, then $f_{\mathcal A,\pi}(u)\in\mathrm{Rep}(S(u))$. Indeed, $f_{\mathcal A,\pi}(u)$ is
valid by (V), its times are non-decreasing by (A), and its events are the genuine crossings of $u$
with their times (Step~0(d)), i.e.\ the events of $S(u)$: hence its times are the stamps of $S(u)$,
the complex it reaches after the events of time $\le t$ is $K_t(u)$ (the reached complex depends only
on which simplices have been toggled an odd number of times, Appendix~\ref{app:rep}), so
$f_{\mathcal A,\pi}(u)\in\mathrm{Rep}^*(S(u))$, and it has no detour because two genuine crossings of
the same simplex never share a time (Remark~\ref{obs:phantom}); Remark~\ref{rem:rep}(ii) gives
$f_{\mathcal A,\pi}(u)\in\mathrm{Rep}(S(u))$. The representation of $S(u)$ in $\mathrm{SR}(K;<)$ corresponds
to the admissible datum $(\mathcal A(u),\pi_u)$, where $\pi_u$ sorts by $\tau_\cdot(u)$ and breaks ties
by $<$. Hence Lemma~\ref{lem:tie} gives
\begin{equation}\label{eq:lemma-applied}
d_B\big(\mathrm{PH}_p(f_{\mathcal A,\pi}(u)),\,\mathrm{PH}_p(S(u))\big)=0\qquad\text{whenever $(\mathcal A,\pi)$ is admissible at $u$.}
\end{equation}
Let $\Pi$ be the finite set of ordering data admissible at some point of $B$. For
$(\mathcal A,\pi)\in\Pi$ the sequence $\mathbf e_{\mathcal A,\pi}$ is a valid simplex-wise zigzag
filtration, so its pairing $P^{\mathcal A,\pi}_p:=P^{\mathbf e_{\mathcal A,\pi}}_p$ is defined, and we set
$$\Phi_{\mathcal A,\pi}:B\to\mathbb R^{2|P^{\mathcal A,\pi}_p|},\qquad
  \Phi_{\mathcal A,\pi}(u):=\widetilde{\mathrm{PH}}_{p,\mathbf e_{\mathcal A,\pi}}\big(\tau_{\pi(1)}(u),\dots,\tau_{\pi(|\mathcal A|)}(u)\big).$$
$\Phi_{\mathcal A,\pi}$ is defined on all of $B$, whether or not $(\mathcal A,\pi)$ is admissible at $u$
(all time functions are defined on $B$), and each of its coordinates is one of the functions
$\tau_\varepsilon$; hence $\Phi_{\mathcal A,\pi}$ is $L_0$-Lipschitz from $(B,\|\cdot\|_\infty)$ to
$(\mathbb R^{2|P^{\mathcal A,\pi}_p|},\|\cdot\|_\infty)$. If $(\mathcal A,\pi)$ is admissible at $u$,
Proposition~\ref{prop:permutation} applied to $f_{\mathcal A,\pi}(u)$ gives
$Q_{|P^{\mathcal A,\pi}_p|}(\Phi_{\mathcal A,\pi}(u))=\mathrm{PH}_p(f_{\mathcal A,\pi}(u))$, so by \eqref{eq:lemma-applied}
\begin{equation}\label{eq:lift-agrees}
d_B\big(Q_{|P^{\mathcal A,\pi}_p|}(\Phi_{\mathcal A,\pi}(u)),\,\mathrm{PH}_p(S(u))\big)=0\qquad\text{whenever $(\mathcal A,\pi)$ is admissible at $u$.}
\end{equation}

\medskip\noindent\textbf{Step 2 (Lipschitz bound along a segment).}
Let $w,w'\in B$ and $\gamma(s):=(1-s)w+sw'$, $s\in[0,1]$, contained in $B$ by its convexity.
Each coordinate $s\mapsto\gamma(s)(\sigma)_i$ is affine, so the segment meets each wall
$\{u(\sigma)_i=\epsilon\}$ (of a merged crossing or of a transient pair) in at most one point or lies
inside it; hence $[0,1]$ splits into finitely many closed sub-intervals on each of which every
$\tau_\varepsilon\circ\gamma$ is a rational function of $s$ (one branch of \eqref{eq1} or of a
hand-off). On such a sub-interval a difference $\tau_\varepsilon\circ\gamma-\tau_{\varepsilon'}\circ\gamma$
is rational, hence identically zero or with finitely many zeros. Let $0=s_0<s_1<\dots<s_r=1$
consist of $0$, $1$, the wall points and these zeros. On each open interval $(s_{j-1},s_j)$ the
active set $\mathcal A(\gamma(s))$ is constant (it changes only at the walls of transient pairs), call
it $\mathcal A_j$, and every difference $\tau_\varepsilon\circ\gamma-\tau_{\varepsilon'}\circ\gamma$ has
constant sign (positive, negative or identically zero), so the canonical order $\pi_{\gamma(s)}$ on
$\mathcal A_j$ is the same for all $s\in(s_{j-1},s_j)$; call it $\pi_j$. Then $(\mathcal A_j,\pi_j)\in\Pi$.

\emph{Claim: $d_B\big(Q(\Phi_{\mathcal A_j,\pi_j}(u_*)),\mathrm{PH}_p(S(u_*))\big)=0$ for
$u_*\in\{\gamma(s_{j-1}),\gamma(s_j)\}$.} We treat $u_*=\gamma(s_j)$, the other endpoint being
identical. Since active regions are open, $\mathcal A(u_*)\subseteq\mathcal A_j$; let $J_*$ be the set
of transient pairs $\beta$ with $\beta^\pm\in\mathcal A_j\setminus\mathcal A(u_*)$. For $\beta=(\sigma,i)\in J_*$,
$u_*$ lies on the wall $\{u(\sigma)_i=\epsilon\}$ (by continuity, since $\gamma(s)\in B_\beta$ for
$s<s_j$ close to $s_j$ and $u_*\notin B_\beta$), so $\tau_{\beta^-}(u_*)=\tau_{\beta^+}(u_*)=i$.

\emph{(i) The triple $g:=f_{\mathcal A_j,\pi_j}(u_*)$ belongs to $\mathrm{SR}^*(K)$ and, for every
$\beta=(\sigma,i)\in J_*$, the pair $(\beta^-,\beta^+)$ is a detour of $g$ whose intermediate events
act on simplices distinct from and face-unrelated to $\sigma$.} The event sequence
$\mathbf e_{\mathcal A_j,\pi_j}$ is valid by (V) at the points $\gamma(s)$, $s\in(s_{j-1},s_j)$, and
starts and ends at $\emptyset$; the times $\tau_\cdot(u_*)$ are non-decreasing along $\pi_j$ by
continuity; hence $g\in\mathrm{SR}^*(K)$ by Remark~\ref{rem:rep}(i). Since $\tau_{\beta^-}<\tau_{\beta^+}$
on $(s_{j-1},s_j)$, $\beta^-$ precedes $\beta^+$ in $\pi_j$, and every event between them has time
$i$ at $u_*$, hence (by \eqref{eq:order}) time $i$ at $v$; by Step~0(d) such an event is a merged
crossing $(\mu,[i])$ or a crossing of a transient pair $(\mu,i)$ with $v(\mu)_i=\epsilon$, so
$\mu\in K^{(p)}$ by (F1) and $\mu\ne\sigma$ (the coordinate $(\sigma,i)$ is of one kind only); in
particular no event of $\sigma$ lies between $\beta^-$ and $\beta^+$, so $(\beta^-,\beta^+)$ is a
detour. If $\beta$ is of type $+$ ($\dim\sigma=p+1$, $v(\sigma)_{i\pm1}>\epsilon$), a face
$\mu\subsetneq\sigma$ of this kind would have $\dim\mu=p$ and $v(\mu)_{i\pm1}\ge v(\sigma)_{i\pm1}>\epsilon$,
i.e.\ $v\in\mathcal N^{\epsilon+}_p$; and the cofaces of $\sigma$ have dimension $>p+1$, hence no
coordinate equal to $\epsilon$ by (F1). If $\beta$ is of type $-$ ($\dim\sigma=p$,
$v(\sigma)_{i\pm1}<\epsilon$), a coface $\mu\supsetneq\sigma$ of this kind would have $\dim\mu=p+1$
and $v(\mu)_{i\pm1}\le v(\sigma)_{i\pm1}<\epsilon$, i.e.\ $v\in\mathcal N^{\epsilon-}_{p+1}$; and the
faces of $\sigma$ have dimension $<p$. This proves (i).

\emph{(ii) $d_B(\mathrm{PH}_p(g),\mathrm{PH}_p(g^\circ))=0$, where $g^\circ:=f_{\mathcal A(u_*),\pi_j|_{\mathcal A(u_*)}}(u_*)$.}
By (i), Lemma~\ref{lem:detour} applies to $g$ and to the detour $(\beta^-,\beta^+)$ of any
$\beta\in J_*$ ($\beta^-=-\sigma$ with $\dim\sigma=p+1$ in type $+$, $\beta^-=+\sigma$ with
$\dim\sigma=p$ in type $-$), and removing the pair preserves the hypotheses for the remaining
pairs (their intermediate events form a subset of the previous ones). Removing the pairs of
$J_*$ one at a time yields $g^\circ$, which belongs to $\mathrm{SR}^*(K)$ and is valid.

\emph{(iii) $g^\circ\in\mathrm{Rep}(S(u_*))$, hence $d_B(\mathrm{PH}_p(g^\circ),\mathrm{PH}_p(S(u_*)))=0$.}
The triple $g^\circ$ is valid by (ii), its times are non-decreasing, and its events are those of
$\mathcal A(u_*)$ at the times $\tau_\cdot(u_*)$, i.e.\ the genuine crossings of $u_*$ with their
times (Step~0(d)). Exactly as in Step~1, this gives $g^\circ\in\mathrm{Rep}(S(u_*))$, and
Lemma~\ref{lem:tie} applied to $g^\circ$ and to the representation of $S(u_*)$ in $\mathrm{SR}(K;<)$
gives the claim.

By (iii) and (ii), $d_B(\mathrm{PH}_p(g),\mathrm{PH}_p(S(u_*)))=0$; by Proposition~\ref{prop:permutation}
applied to $g\in\mathrm{SR}^*(K)$ (Remark~\ref{rem:rep}(iv)), $Q(\Phi_{\mathcal A_j,\pi_j}(u_*))=\mathrm{PH}_p(g)$;
this proves the Claim.

By the Claim, the triangle inequality for the pseudo-metric $d_B$, the $1$-Lipschitz property of
$Q$ and the $L_0$-Lipschitz property of $\Phi_{\mathcal A_j,\pi_j}$,
\begin{equation*}
\begin{aligned}
d_B\big(
    \mathrm{PH}_p(S(\gamma(s_{j-1}))),
    \mathrm{PH}_p(S(\gamma(s_j)))
\big)
&\le
\big\|
    \Phi_{\mathcal A_j,\pi_j}(\gamma(s_{j-1}))
    -
    \Phi_{\mathcal A_j,\pi_j}(\gamma(s_j))
\big\|_\infty
\\
&\le
L_0\,\|\gamma(s_{j-1})-\gamma(s_j)\|_\infty .
\end{aligned}
\end{equation*}
Summing over $j$ and using that the points $\gamma(s_j)$ lie in order on the segment $[w,w']$, so
that $\sum_j\|\gamma(s_j)-\gamma(s_{j-1})\|_\infty=\|w-w'\|_\infty$,
$$d_B\big(\mathrm{PH}_p(S(w)),\mathrm{PH}_p(S(w'))\big)\le L_0\,\|w-w'\|_\infty\qquad\text{for all }w,w'\in B .$$
Hence $\mathrm{PH}_p\circ S$ is Lipschitz on $B$, which by the Reduction proves the theorem.
\end{proof}

\subsection{Region of potential non-local Lipschitzness}
\label{app:region_nonloc}

From Theorem \ref{thm:lipschitz_breve}, the map $\mathrm{PH}_p\circ S$ may fail to be locally Lipschitz, or even continuous, on $\mathcal N$ (Definition \ref{sec:lip_theorem}), which only bounds this region rather than characterizing it exactly: not every point of $\mathcal N$ is necessarily a failure point. We show two examples of these
discontinuities.

\begin{example}
\label{ex:split}
For simplicity, assume $K=\{\sigma\}$, $N=3$, and $p=0$.
Let $v(\sigma):=(0,a,\epsilon,a,0)$, with $a>\epsilon$, so that $v\in\mathcal{N}^{\epsilon+}_0$: $\sigma$ enters at $\epsilon/a$, exits at $4-\epsilon/a$, and its exit at time $2$ on $[1,2]$ cancels with its re-entry at time $2$ on $[2,3]$ (a phantom pair), so that $\mathrm{PH}_0(S(v))=\{(\epsilon/a,\,4-\epsilon/a)\}$.
Then, perturbing $v$ we get $u=(0,a,\epsilon-\delta,a,0)$, for which the phantom pair becomes a genuine exit at $t_1=2-\frac{\delta}{a-\epsilon+\delta}$ followed by a re-entry at $t_2=2+\frac{\delta}{a-\epsilon+\delta}$, and $\mathrm{PH}_0(S(u))=\{(\epsilon/a,\,t_1),\,(t_2,\,4-\epsilon/a)\}$. However,
$$d_B\big(\mathrm{PH}_0(S(u)),\mathrm{PH}_0(S(v))\big)=\min\Big\{4-\tfrac{\epsilon}{a}-t_1,\ t_2-\tfrac{\epsilon}{a},\ 2-\tfrac{\epsilon}{a}\Big\}\ \xrightarrow[\ \delta\to0^+\ ]{}\ 2-\tfrac{\epsilon}{a}>1,$$
while $\|u-v\|_\infty=\delta\to0$, so that $\mathrm{PH}_0\circ S$ is discontinuous at $v$ (from the other side, $u=(0,a,\epsilon+\delta,a,0)$ gives the same diagram as $v$: the jump is one-sided).
Qualitatively, $\mathcal{N}^{\epsilon+}_p$ and $\mathcal{N}^{\epsilon-}_{p+1}$ can lead to discontinuities from breaking intervals in the middle: a $p$-simplex disappearing ($\mathcal N^{\epsilon+}_p$) or $p+1$-simplex appearing ($\mathcal N^{\epsilon-}_{p+1}$) for an arbitrarily short time splits the interval of a class it carries, respectively kills, into two intervals of non-vanishing length.
See Figure \ref{fig:disc-split} for a visual representation of the example.
\end{example}

\begin{example}
\label{ex:create}
Assume $K=\{\sigma\}$, $N=2$, and $p=0$.
Let $v(\sigma):=(0,\epsilon,\epsilon,0)$, so that $v\in\mathcal{N}^{\epsilon,\epsilon}_0$: $\sigma$ never crosses $\epsilon$ and $\mathrm{PH}_0(S(v))=\emptyset$.
Then, perturbing $v$ we get $u=(0,\epsilon+\delta,\epsilon,0)$, and $\mathrm{PH}_0(S(u))=\{(\tfrac{\epsilon}{\epsilon+\delta},\,2)\}$. However, $d_B\big(\mathrm{PH}_0(S(u)),\mathrm{PH}_0(S(v))\big)=1-\tfrac{\epsilon}{2(\epsilon+\delta)}>\tfrac12$ for every $\delta>0$, so that $\mathrm{PH}_0\circ S$ is discontinuous at $v$.
Note that similarly happens taking $u=(0,\epsilon,\epsilon+\delta,0)$ or $u=(0,\epsilon+\delta,\epsilon+\delta,0)$.
Qualitatively, $\mathcal{N}^{\epsilon,\epsilon}_0$ can lead to discontinuities from the interpolation formula \eqref{eq1} becoming $0/0$: an interval of non-vanishing length appears or disappears.
See Figure \ref{fig:disc-create}.
\end{example}

Finally, we briefly discuss the size of the critical set $\mathcal N_p\subseteq FF_N(K)\subseteq (\mathbb R^{N|K|})^*$. The sets $\mathcal{N}^{\epsilon+}_p,\mathcal{N}^{\epsilon-}_{p+1}$ are contained in a finite union of affine hyperplanes of codimension $1$, while $\mathcal{N}^{\epsilon,\epsilon}_{p,p+1}$ is contained in a finite union of codimension $2$; in particular $\mathcal N_p$ has Lebesgue measure $0$ in $(\mathbb R^{N|K|})^*$.
Moreover, since it is easy to see that $\mathcal N_p$ is closed, the complementary set $FF_N(K)\setminus \mathcal N_p$ is open in $FF_N(K)$.

To give intuition on the shape of this set, consider the simple example $K=\{\sigma\}$, $N=3$,
$p=0$ (so $\dim\sigma=0$ and there is no simplex of dimension $1$). Here
$FF_3(K)=\{(0,x,y,z,0):x,y,z\in\mathbb R\}=(\mathbb R^{3|K|})^*$, and the critical set becomes $\mathcal{N}^{\epsilon,\epsilon}_{0,1}=\{(0,x,y,z,0)\in FF_3(K):x=y=\epsilon \text{ or } y=z=\epsilon\}$, $\mathcal{N}^{\epsilon+}_0=\{(0,x,y,z,0)\in FF_3(K):x,z>\epsilon\text{ and } y=\epsilon\}$ and $\mathcal{N}^{\epsilon-}_1=\emptyset$. Thus $\mathcal N_0=\mathcal N^{\epsilon+}_0\cup\mathcal N^{\epsilon,\epsilon}_{0,1}$ is entirely contained in the plane $\{y=\epsilon\}$, and removes from it only the open quadrant $x,z>\epsilon$ together with the two lines $\{x=y=\epsilon\}$ and $\{y=z=\epsilon\}$. Since the plane $\{y=\epsilon\}$ is not entirely removed, one can always continuously cross $\mathcal N_0$ without touching it; hence $FF_N(K)\setminus
\mathcal N_0$ is connected, as illustrated in Figure~\ref{fig:safe_set}.

\begin{figure}[h]
\centering
\begin{subfigure}[b]{0.48\textwidth}\centering
\begin{tikzpicture}[x=1.02cm, y=0.34cm, font=\small, >={Latex[length=4pt]}]
  \path[use as bounding box] (-0.85,-6.1) rectangle (4.85,3.9);
  \draw[zzthr] (-0.25,1) -- (4.45,1);
  \node[zztxt, gray!50!black] at (-0.55,1) {$\epsilon$};
  \draw[->] (-0.25,0) -- (4.62,0) node[right, zztxt] {$t$};
  \foreach \i in {0,1,2,3,4}
    \draw (\i,0.3) -- (\i,-0.3) node[below, zztxt] {$\i$};
  \draw[zzval] (0,0) -- (1,3) -- (2,1) -- (3,3) -- (4,0);
  \foreach \p/\q in {0/0, 1/3, 2/1, 3/3, 4/0} \fill[blue!60!black] (\p,\q) circle (1.5pt);
  \node[above, zztxt, blue!60!black] at (1,3) {$a$};
  \node[above, zztxt, blue!60!black] at (3,3) {$a$};
  \node[zzmark] at (2,1) {};
  \node[zzred, align=center] (phl) at (2,2.6) {phantom\\ pair};
  \draw[red!75!black, ->, thin] (phl) -- (2,1.3);
  \draw[zzdrop] (0.3333,0) -- (0.3333,-3.0);
  \draw[zzdrop] (3.6667,0) -- (3.6667,-3.0);
  \draw[zzbar]  (0.3333,-3.0) -- (3.6667,-3.0);
  \node[zztxt] at (2,-5.3) {$\mathrm{PH}_0(S(v))=\bigl\{(\tfrac{\epsilon}{a},\,4-\tfrac{\epsilon}{a})\bigr\}$};
\end{tikzpicture}
\caption{$v=(0,a,\epsilon,a,0)\in\mathcal N^{\epsilon+}_0$}
\end{subfigure}
\hfill
\begin{subfigure}[b]{0.48\textwidth}\centering
\begin{tikzpicture}[x=1.02cm, y=0.34cm, font=\small, >={Latex[length=4pt]}]
  \path[use as bounding box] (-0.85,-6.1) rectangle (4.85,3.9);
  \draw[zzthr] (-0.25,1) -- (4.45,1);
  \node[zztxt, gray!50!black] at (-0.55,1) {$\epsilon$};
  \draw[->] (-0.25,0) -- (4.62,0) node[right, zztxt] {$t$};
  \foreach \i in {0,1,2,3,4}
    \draw (\i,0.3) -- (\i,-0.3) node[below, zztxt] {$\i$};
  \draw[zzval] (0,0) -- (1,3) -- (2,0.5) -- (3,3) -- (4,0);
  \foreach \p/\q in {0/0, 1/3, 2/0.5, 3/3, 4/0} \fill[blue!60!black] (\p,\q) circle (1.5pt);
  \node[above, zztxt, blue!60!black] at (1,3) {$a$};
  \node[above, zztxt, blue!60!black] at (3,3) {$a$};
  \node[zzred] (dl) at (2,2.6) {$\epsilon-\delta$};
  \draw[red!75!black, ->, thin] (dl) -- (2,0.72);
  \fill[red!75!black] (1.8,1) circle (1.5pt);
  \fill[red!75!black] (2.2,1) circle (1.5pt);
  \draw[zzdrop] (0.3333,0) -- (0.3333,-3.0);
  \draw[zzdrop] (1.8,1)    -- (1.8,-3.0);
  \draw[zzdrop] (2.2,1)    -- (2.2,-3.0);
  \draw[zzdrop] (3.6667,0) -- (3.6667,-3.0);
  \draw[zzbar]  (0.3333,-3.0) -- (1.8,-3.0);
  \draw[zzbar]  (2.2,-3.0)    -- (3.6667,-3.0);
  \node[zzred, anchor=east] at (1.76,-1.95) {$t_1$};
  \node[zzred, anchor=west] at (2.24,-1.95) {$t_2$};
  \draw[<->, red!75!black, thin] (1.8,-3.8) -- (3.6667,-3.8);
  \node[zzred] at (2.73,-4.35) {$d_B\to 2-\epsilon/a$};
  \node[zztxt] at (2,-5.3) {$\mathrm{PH}_0(S(u))=\bigl\{(\tfrac{\epsilon}{a},t_1),(t_2,4-\tfrac{\epsilon}{a})\bigr\}$};
\end{tikzpicture}
\caption{$u=(0,a,\epsilon-\delta,a,0)$}
\end{subfigure}
\caption{\textbf{Example~\ref{ex:split}: an interval broken in the middle
($\mathcal N^{\epsilon+}_p$).} In each panel the upper plot is the piecewise-linear interpolation of the filtering values $v$ at $i=0,\dots,N+1$, with the threshold $\epsilon$ dashed, and the lower plot is the resulting $H_0$ barcode on the same time axis. In (a) the value at $i=2$ equals $\epsilon$ with both neighbours above it, so the exit and the re-entry occur at the same time and cancel as a phantom pair, leaving a single bar. In (b) an
arbitrarily small $\delta>0$ makes both crossings genuine, and the bar breaks in two.}
\label{fig:disc-split}
\end{figure}

\begin{figure}[h]
\centering
\begin{subfigure}[b]{0.32\textwidth}\centering
\begin{tikzpicture}[x=0.98cm, y=0.58cm, font=\small, >={Latex[length=4pt]}]
  \path[use as bounding box] (-0.8,-3.45) rectangle (3.75,2.35);
  \draw[zzthr] (-0.25,1) -- (3.45,1);
  \node[zztxt, gray!50!black] at (-0.55,1) {$\epsilon$};
  \draw[->] (-0.25,0) -- (3.62,0) node[right, zztxt] {$t$};
  \foreach \i in {0,1,2,3}
    \draw (\i,0.14) -- (\i,-0.14) node[below, zztxt] {$\i$};
  \draw[zzval] (0,0) -- (1,1) -- (2,1) -- (3,0);
  \foreach \p/\q in {0/0, 1/1, 2/1, 3/0} \fill[blue!60!black] (\p,\q) circle (1.5pt);
  \node[zzred] at (1.5,1.72) {never $>\epsilon$};
  \draw[gray!45, densely dotted] (0,-1.35) -- (3,-1.35);
  \node[zztxt, fill=white, inner sep=1pt] at (1.5,-1.35) {$\emptyset$};
  \node[zztxt] at (1.5,-2.6) {$\mathrm{PH}_0(S(v))=\emptyset$};
\end{tikzpicture}
\caption{$v=(0,\epsilon,\epsilon,0)\in \mathcal N^{\epsilon,\epsilon}_0$}
\end{subfigure}
\hfill
\begin{subfigure}[b]{0.32\textwidth}\centering
\begin{tikzpicture}[x=0.98cm, y=0.58cm, font=\small, >={Latex[length=4pt]}]
  \path[use as bounding box] (-0.8,-3.45) rectangle (3.75,2.35);
  \draw[zzthr] (-0.25,1) -- (3.45,1);
  \node[zztxt, gray!50!black] at (-0.55,1) {$\epsilon$};
  \draw[->] (-0.25,0) -- (3.62,0) node[right, zztxt] {$t$};
  \foreach \i in {0,1,2,3}
    \draw (\i,0.14) -- (\i,-0.14) node[below, zztxt] {$\i$};
  \draw[zzval] (0,0) -- (1,1.35) -- (2,1) -- (3,0);
  \foreach \p/\q in {0/0, 1/1.35, 2/1, 3/0} \fill[blue!60!black] (\p,\q) circle (1.5pt);
  \draw[<->, red!75!black, thin] (1,1) -- (1,1.35);
  \node[zzred, anchor=south] at (1,1.45) {$\delta$};
  \node[zzmark] at (2,1) {};
  \draw[zzdrop] (0.741,1) -- (0.741,-1.35);
  \draw[zzdrop] (2,1)     -- (2,-1.35);
  \draw[zzbar]  (0.741,-1.35) -- (2,-1.35);
  \node[zztxt] at (1.5,-2.6) {$\bigl\{(\tfrac{\epsilon}{\epsilon+\delta},\,2)\bigr\}$};
\end{tikzpicture}
\caption{$u=(0,\epsilon+\delta,\epsilon,0)$}
\end{subfigure}
\hfill
\begin{subfigure}[b]{0.32\textwidth}\centering
\begin{tikzpicture}[x=0.98cm, y=0.58cm, font=\small, >={Latex[length=4pt]}]
  \path[use as bounding box] (-0.8,-3.45) rectangle (3.75,2.35);
  \draw[zzthr] (-0.25,1) -- (3.45,1);
  \node[zztxt, gray!50!black] at (-0.55,1) {$\epsilon$};
  \draw[->] (-0.25,0) -- (3.62,0) node[right, zztxt] {$t$};
  \foreach \i in {0,1,2,3}
    \draw (\i,0.14) -- (\i,-0.14) node[below, zztxt] {$\i$};
  \draw[zzval] (0,0) -- (1,1.35) -- (2,1.35) -- (3,0);
  \foreach \p/\q in {0/0, 1/1.35, 2/1.35, 3/0} \fill[blue!60!black] (\p,\q) circle (1.5pt);
  \draw[<->, red!75!black, thin] (1.5,1) -- (1.5,1.35);
  \node[zzred, anchor=south] at (1.5,1.45) {$\delta$};
  \draw[zzdrop] (0.741,1) -- (0.741,-1.35);
  \draw[zzdrop] (2.259,1) -- (2.259,-1.35);
  \draw[zzbar]  (0.741,-1.35) -- (2.259,-1.35);
  \node[zztxt] at (1.5,-2.6) {$\bigl\{(\tfrac{\epsilon}{\epsilon+\delta},\,3-\tfrac{\epsilon}{\epsilon+\delta})\bigr\}$};
\end{tikzpicture}
\caption{$u=(0,\epsilon+\delta,\epsilon+\delta,0)$}
\end{subfigure}
\caption{\textbf{Example~\ref{ex:create}: an interval of non-vanishing length created from nothing ($\mathcal N^{\epsilon,\epsilon}_{p,p+1}$).} Same conventions as Figure~\ref{fig:disc-split}. In (a) two consecutive values equal $\epsilon$, so $\sigma$ never crosses the threshold and the diagram is empty. In (b) raising the first of them by $\delta>0$ leaves the value at $i=2$ exactly on the threshold, but its right neighbour is now below it, so the configuration is not phantom: the exit is genuine and occurs at $t=2$, producing the bar $\bigl(\tfrac{\epsilon}{\epsilon+\delta},\,2\bigr)$, whose length tends to $1$ and not to $0$. In (c) raising both values instead gives the bar $\bigl(\tfrac{\epsilon}{\epsilon+\delta},\,3-\tfrac{\epsilon}{\epsilon+\delta}\bigr)$ and the same conclusion holds.}
\label{fig:disc-create}
\end{figure}

\begin{figure}[h]%
    \centering
    \begin{minipage}[c]{0.48\textwidth}
        \centering
        \includegraphics[width=0.8\textwidth]{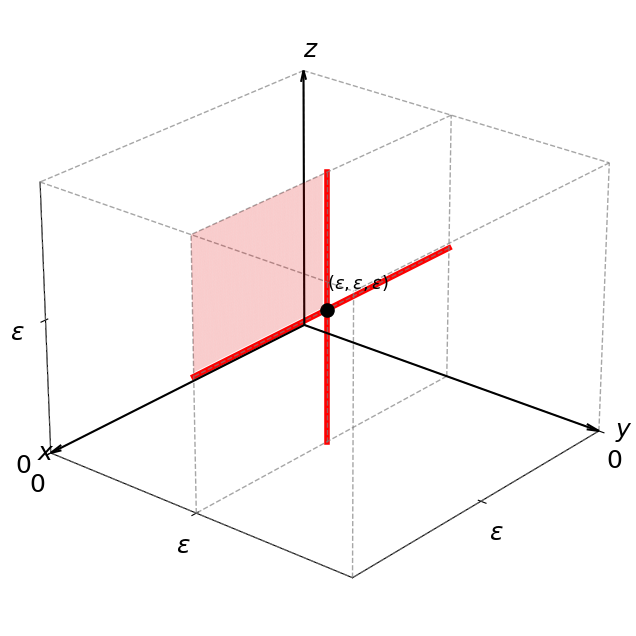}
    \end{minipage}
    \hfill
    \begin{minipage}[c]{0.48\textwidth}
        \centering
        \captionof{figure}{The set $\mathcal N_0$ is represented in red as a subset of
        $\mathbb R^{3|K|}\cong(\mathbb R^{3|K|})^*=FF_3(K)$. The highlighted red lines
        form $\mathcal{N}^{\epsilon,\epsilon}_{0,1}$, while the transparent red region
        represents $\mathcal{N}^{\epsilon+}_0$.
        The complementary set of $\mathcal N_0$ is open and connected.}
        \label{fig:safe_set}
    \end{minipage}
\vspace{-1.8em}
\end{figure}

\input{sections/app_exp_coverage}

\input{sections/app_exp_mooc}

\end{document}

%% file: sections/intro.tex
\section{Introduction}
\label{sec:intro}

Persistent homology has evolved in recent years to be embedded as a differentiable component of learning pipelines: a loss on a persistence diagram can be minimized by gradient descent, imposing a topological requirement on a model's output or learning the filtration itself \citep{leygonie2022}. That calculus takes as input a filtration of a fixed complex: one object, examined at all scales. Much of the data one would like to treat this way is instead evolving: point clouds in motion, graphs whose edges appear and disappear, fields that change in time. Their natural summary is zigzag persistent homology \citep{carlsson2010zigzag}, which follows a homological feature along a sequence of complexes connected by inclusions in either direction, tracking its identity across time rather than counting features at each instant. It is computationally practical \citep{dey2022fastzigzag}, but it cannot actively enter in any learning dynamic, since no gradient passes through a zigzag barcode. 

In standard persistent homology, the input is represented by a filtering function, a real vector that can vary continuously, and diagram endpoints are permutations of its components, the filtering values. In zigzag persistence, endpoints are instead a permutation of the sequence's temporal indices, which are fixed integers. The previous approach therefore cannot be extended to zigzag persistence, since barcode endpoints cannot inherit continuity from indices alone; a different framework is needed.
Moreover, stable optimization in standard persistence rests on the stability theorem \citep{cohensteiner2007stability}, which makes the barcode Lipschitz in the filtering function. For zigzag persistence, stability holds at module level~\citep{botnan2018}, bounding barcode distance by module distance rather than by filtration perturbations; \citep{carlsson2009realvalued} gives stability tied to an actual filtering function, but only for one specific construction (levelset zigzag). No such general theorem exists for zigzag persistence, where, moreover, no general notion of a filtering function has even been defined, making zigzag optimization harder to achieve even in principle.

We develop such a framework, first formalizing persistence computation on a sequence of complexes unrolled into a simplex-wise zigzag filtration (Section~\ref{sec:diff_pd}), and then obtaining such a sequence by thresholding time-dependent real filtering values defined on a fixed complex (Section~\ref{sec:lin_int}).
Rather than the fixed time index at which a threshold crossing is detected, we assign each barcode endpoint the real time at which the linearly interpolated filtering values cross the threshold (Eq.~\ref{eq1}). This yields smooth local lifts of the resulting map (Corollary~\ref{cor:condition2}), from which differentials follow via the chain rule.
In Section~\ref{sec:stability}, we show that a whole zigzag-persistence-based loss is differentiable almost everywhere for definable parametrizations of the filtering function (Proposition~\ref{prop:ae-differentiable}); we further prove that the map from filtering function to diagram is locally Lipschitz outside an explicit exclusion set (Theorem~\ref{thm:lipschitz_breve}), which nonetheless does not yield convergence guarantees for stochastic subgradient descent.
Even without such guarantees, we successfully test our optimization pipeline on two synthetic experiments, in Section \ref{sec:experiments}: 
minimizing coverage defect persistence in a sensor network, and classifying dynamic graphs via a learnable zigzag persistence feature map.
In Appendix~\ref{app:related}, we position these findings in detail against current literature.

%% file: sections/exp_coverage.tex
\section{Experiments}
\label{sec:experiments}

\begin{figure}[t]
\centering
\includegraphics[width=0.995\textwidth]{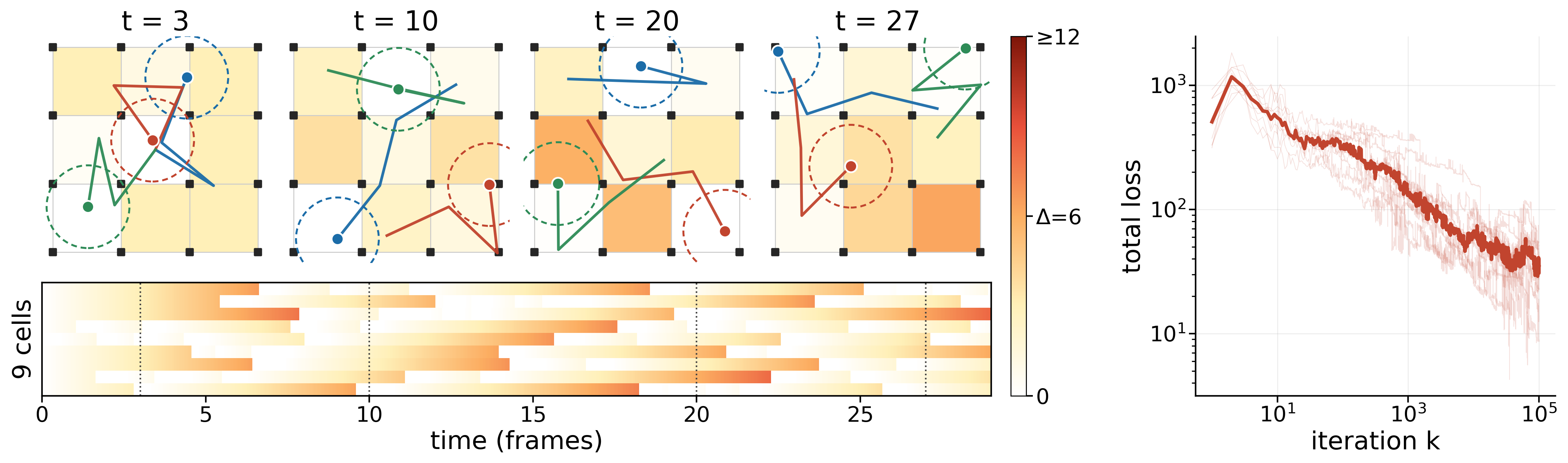}
\vspace{-0.7em}
\caption{{\bfseries The experiment.} \emph{Top:} four snapshots of the lifetime strategy's final
iterate on the median seed $18$; $t$ is elapsed time. Cells are shaded by how long they have
waited since last covered, pale to red past $\Delta=6$; black squares are the fixed sensors,
coloured dots the robots, with sensing range dashed and the last three frames of motion trailing.
\emph{Bottom left:} the same for all nine cells over the $30$ frames, snapshots dotted; the streaks keep
resetting because every cell keeps being revisited. \emph{Right:} the total objective
$\mathcal L$ along the iterate sequence, penalties included, held-out seeds $10$--$19$,
median bold and seeds faint, both axes logarithmic; the rise at $k=2$ is the first perturbation
breaking the unit-speed penalty.}
\label{fig:cov_topo}
\vspace{-0.4em}
\end{figure}

We test this construction employing zigzag calculus at two different levels: as a \emph{loss},
to minimize the persistence of a defect in a coverage network (Section~\ref{sec:exp_coverage}), and as a \emph{feature map}, to summarize a GNN transformation of dynamic graphs used for classification (Section~\ref{sec:exp_mooc}). Full specifications and results are in Appendices~\ref{app:exp_coverage} and~\ref{app:exp_mooc}.

\vspace{-0.3em}
\subsection{Optimizing how long a coverage defect lasts}
\vspace{-0.2em}
\label{sec:exp_coverage}

Sixteen static sensors on the lattice $\{0,1,2,3\}^2$ monitor $[0,3]^2$. At Vietoris-Rips radius $R=1.05$ the lattice sides are edges and the diagonals are not, so each of the nine unit cells is a $1$-cycle that does not bound: a class of $H_1$ representing a hole (the defect) in the network coverage. That
a homological condition of this kind certifies coverage is due to \citet{desilva2007,desilva2006},
and \citet{gamble2015} already follows such defects across a moving network with a zigzag barcode;
here instead we propose a setup where the barcode is differentiated and the trajectories
optimized against a functional of it (Appendix~\ref{app:coverage-related-work}). A group of $M$ robots are tasked to move for $30$ frames to cover the holes. We test from $M=2$ to $5$ robots. Since one robot can only cover one cell at once, we devise two distinct strategies to reduce these defects via a topological loss term $\mathcal{L}_{\text{topo}}$, computed on the persistence diagram $D$ in homological dimension $1$ for the evolving sensors-robots point cloud (via Rips extension at radius $R$).
The \emph{count} strategy sets $\mathcal{L}_{\text{topo}} = \sum_{(b,d)\in D}(d-b)$ to minimize total bar length, while the \emph{lifetime} strategy sets $\mathcal{L}_{\text{topo}} = \sum_{(b,d)\in D}\max(0,d-b-\Delta)^2$ for a fixed tolerance $\Delta$, ignoring short-lived gaps and penalizing only defects left uncovered for longer than $\Delta$ frames.
The total objective $\mathcal{L}$ combines $\mathcal{L}_{\text{topo}}$ with two penalty terms that keep the robots inside the lattice and enforce constant unit speed.
In what follows, we show results for $M=3$ robots and $\Delta=6$ frames. A sweep over $M$ from $2$ to $5$ and over $\Delta=6,8,10,12$ is in
Appendix~\ref{app:exp_coverage}, Tables~\ref{tab:cov_robots} and~\ref{tab:cov_delta}.

The parameters are the $174$ coordinates $y$ of the three robots' positions (two coordinates each) at frames $1,...,29$, since the shared frame-$0$ position is fixed at the lattice centre.
We minimize $\mathcal L$ with stochastic subgradient descent (SSD) (Definition \ref{def:clark}), as described in Section \ref{sec:stability}:
\begin{equation*}
y_{k+1}=y_k-\gamma_k\big(\nabla\mathcal L(y_k)+\xi_k\big),
\qquad \gamma_k=\frac{10^{-2}}{(1+k)^{0.51}},
\qquad \xi_k\sim\mathcal N(0,\sigma^2 I),\quad \sigma=30,
\end{equation*}
for $100\,000$ iterations, with a random displacement $\gamma_k\sigma$ of $0.3$ lattice units at the first iterate and $0.0008$ at the last. This Gaussian noise is introduced to insert stochasticity in the gradient, as a way to properly test SSD dynamics. We provide an analysis of turning this noise on and off in Appendix~\ref{app:exp_coverage}, Table~\ref{tab:cov_noise}.
The behaviour of these iterations is the experiment's primary object. Empirically, $\mathcal L$ steadily decreases and stabilizes along the iterates on every seed (see Fig.~\ref{fig:cov_topo} for the lifetime strategy; Appendix~\ref{app:coverage-loss} shows both strategies for all four tolerances).

Similarly, the topological loss itself consistently descends (see Appendix~\ref{app:coverage-loss}). The lifetime strategy achieves a topological loss of $33.8$, reducing the longest defect to a median of $9.4$ frames, while the count strategy reaches a total bar length of $211.0$ frames (Table~\ref{tab:cov_delta}).
As a control, we run the same optimization with $\mathcal{L}_{\text{topo}}$ removed from the loss, to check whether defect persistence is reduced relative to topologically unguided motion, and thus whether the topological gradient is responsible for it: without that term, both topological quantities instead stay high (lifetime loss $530.9$, count loss $246.7$), with a median longest defect of $21.3$ frames (Table~\ref{tab:cov_noise}). This gap confirms that the topological objective's descent is driven by the topological gradient itself instead of by the motion penalties.

%% file: sections/exp_mooc.tex
\vspace{-0.3em}
\subsection{Controlled demonstration of a learned zigzag feature map}
\vspace{-0.2em}
\label{sec:exp_mooc}

Can classification gradients passing through the interpolated endpoints of Eq.~\eqref{eq1} train the filtering function that generates the sequence of complexes, starting from one that is uninformative for the task? We build a task where the answer is checkable (Fig.~\ref{fig:mooc_setup}; Appendix~\ref{app:exp_mooc}). Each example is a graph of two figure-eights, loops sharing a node, whose nodes switch on and off over $32$ time steps so that one loop of each is closed at any time. The label is how fast the closed loop alternates in the \emph{signal} figure-eight, slowly or rapidly; a second figure-eight is a \emph{decoy} alternating at the opposite rate, and only an exclusive-or of two noisy node features tells which is which. Seen through which nodes are on, the classes have the same loop counts and lifetimes; with the decoy removed, the zigzag barcode separates them.

\begin{figure}[t]
\centering
\includegraphics[width=\textwidth]{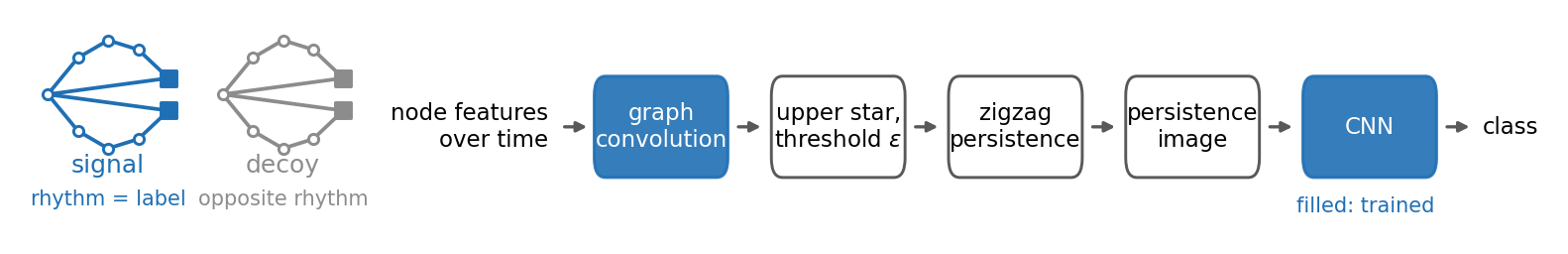}
\vspace{-1.2em}
\caption{{\bfseries The experiment.} Each example is two figure-eights whose square nodes switch so that one loop of each is closed at any time; how fast the signal alternates is the label, and the decoy alternates at the opposite rate. The node features go through the depicted pipeline, which predicts whether the signal alternates slowly or rapidly.
}
\label{fig:mooc_setup}
\vspace{-0.6em}
\end{figure}
The model is depicted in Figure \ref{fig:mooc_setup}.
At each frame a vertex carries five numbers, its activity and four features, which do not change over time. A graph convolution ($225$ parameters, applied to each frame alone with the same weights, and with no time index) maps them to a single value $x(\nu)_t$ per vertex and frame, as a residual on the activity field initialized at zero, so that training starts from uninformative $x(\nu)_t$ values.
The upper-star extension of Section~\ref{sec:applications} then turns this time-evolving vertex function into a zigzag barcode, with a fixed threshold $\epsilon$, following the paper's construction; the barcode is vectorized into a persistence image with a weighting that vanishes on the diagonal, which a $26$k-parameter convolutional reader maps to two logits.
Only the graph convolution and the reader are trained. The loss is cross-entropy. We minimize it with SSD (Definition~\ref{def:clark}), $\theta_{k+1}=\theta_k-\gamma_k\,\hat g_k,$ where $\gamma_k=0.02/(1+k)^{0.6}$
for $3\,200$ iterations, $\hat g_k$ being the gradient on $16$ examples drawn uniformly with replacement: the noise of Definition~\ref{def:clark} is the minibatch sampling alone. We evaluate on twenty splits, each a division of a dataset into training, validation and test examples (ten-fold cross-validation on two datasets that played no part in fixing any setting); on each split we train from four random seeds and report the final iterate of the seed with the lowest validation loss. Details of the experimental setup are in Appendix \ref{app:mooc_setup}.

With the uninformative filtering function frozen, the reader alone stays at chance, $0.540\pm0.086$, while trained end to end the pipeline reaches an accuracy of $0.995\pm0.016$. 
As a feasibility check, a filtering function that keeps only the signal figure-eight solves the task through the zigzag but not through per-frame persistence, so the label lies in how features are paired across time. Full results are in Appendix~\ref{app:mooc_results}.

%% file: sections/conclusions.tex
\section{Conclusions}

We present a differentiability framework for zigzag persistence generated by thresholding a time-dependent filtering function on a fixed simplicial complex. Representing bar endpoints as interpolated crossing times yields smooth local lifts with explicit differentials, from which we prove almost-everywhere differentiability under definable parametrizations of the filtering function. We also prove local Lipschitzness outside an explicit exclusion set, one hypothesis short of the convergence guarantee available for stochastic subgradient methods, a gap we do not close.
Our experiments use these gradients to show that optimization can be practically achieved even without such guarantees, in two controlled settings: a topological loss function and a learnable feature map, consistently driving the loss to low values. These results suggest that our framework can support zigzag-based optimization in different machine learning scenarios.

\subsection*{Code Availability}

The methods of this paper are implemented in \texttt{zz-top}, an open-source Python package (BSD-3-Clause) available at \url{https://github.com/RitAreaSciencePark/zz-top}: a pure-Python zigzag persistence engine and the module \texttt{zztop.zigzagdiff}, the differentiable read-out of Section~\ref{sec:diff_pd}. The code of the experiments, with the configurations and commands that regenerate every table and figure, will be added to the same repository.

\subsection*{AI use statement}

We used generative AI tools in this work, in the following ways.

\begin{description}\setlength{\itemsep}{0pt}
\item[Writing.] To aid and polish writing: We used AI tools to check for English language grammar, improve readability of sentences and structure paragraphs to be concise but clear.
\item[Retrieval and discovery.] To find related work and locate
  references: we used AI tools to survey prior works on differentiability in persistent homology and existing related works in the specific domain of zigzag persistence. We used AI tools to search for suitable datasets for the experiments.
\item[Research ideation and execution.] We used AI tools to structure the experiments in a comprehensive and extensive way, and we used it to write the necessary code to execute them.
\item[Drafting.] To draft sections of the paper: We used AI tools to think about how to structure a section, making a list of topics to cover. We also used AI tools to organize how the Appendix sections relative to the experiments should be structured.
\item[Synthetic data generation.] We used AI tools to write code producing the synthetic data used in the experiments.
\end{description}

All AI-assisted output was reviewed and verified by the authors, who take full responsibility for the final content of this paper, including its text, claims, proofs and artifacts.

\subsection*{Acknowledgements}

The authors thank Sven Heydenreich for useful comments on a draft. C. B., M.B. and E.M.F. are supported by the Ministero Università e Ricerca, project E-ARGO” (CUP J95F21002190001), with title “Fondo finalizzato al rilancio degli Investimenti delle Amministrazioni Centrali dello Stato e allo sviluppo del Paese, ex Art.1, comma 14, legge n. 160/2019".

%% file: sections/related.tex
\section{Related work}
\label{app:related}

This appendix collects the literature that Section~\ref{sec:intro} refers to, organized by the role it plays in this paper.

\paragraph{Differentiability of persistent homology.} Persistent homology \citep{edelsbrunner2010topological} entered learning pipelines first as a fixed featurization and then as a differentiable layer: \citet{brueelgabrielsson2020} implement persistence of point clouds and images as a layer with a backward pass, and \citet{hofer2020filtration} learn a vertex filtration function end to end for graph classification, in both cases with a monotone filtration of a fixed complex. Earlier, \citet{gameiro2016continuation} used the implicit function theorem to continue point clouds toward prescribed persistence diagrams,  and \citet{poulenard2018topological} optimized real-valued functions on shapes through the derivatives of persistence diagrams with respect to the function values. 
\citet{solomon2021fast} and \citet{nigmetov2024big} propose backpropagation schemes that spread the gradient of a persistence loss over more simplices than the critical pair. 
\citet{horn2022togl} learn graph filtrations end to end inside a graph neural network, for static graphs. The analytical basis for most of this practices is the differential calculus on barcodes of \citet{leygonie2022}, which identifies when a diagram-valued map admits a $C^r$ local lift through the quotient by the permutations of the bars, defines the differential relative to that lift, and gives the chain rule for composing a map into the space of diagrams with a map out of it; we recall the parts we use in Section~\ref{sec:diff}. \citet{carriere2021} pair that calculus with o-minimal geometry: for definable functions of a persistence diagram, stochastic subgradient descent converges to a critical point under the conditions of \citet{davis2020stochastic}, which we restate as Proposition~\ref{prop:clark} and whose hypotheses Theorem~\ref{thm:lipschitz_breve} does not fully supply. \citet{scoccola2024} extend differentiability and optimization to multiparameter persistence, decomposing the parameter space into regions on which the combinatorial structure of the invariant is constant. Our cell decomposition of $FF_N(K)$ plays the same role, with the difference that time is here a non-monotone evolution parameter rather than a second filtration axis.

\paragraph{Zigzag persistence and its computation.} Zigzag persistence was introduced by \citet{carlsson2010zigzag} and connected to the levelset topology of real-valued functions by \citet{carlsson2009realvalued}. It generalizes persistent homology to sequences of complexes connected by inclusions in either direction, so that a feature can be followed across a non-monotone sequence, at the cost of the decomposition being indexed by positions in the sequence. Computation is practical: \citet{dey2022fastzigzag} convert a zigzag filtration into an ordinary one and compute the barcode with standard persistence software, and \citet{deyhou2024vineyard} maintain a zigzag barcode along a one-parameter family of filtrations, including expansions and contractions. These algorithms fix the pairing; our construction reads gradients through the endpoints they return without modifying it.

\paragraph{Topological summaries of time-varying data.}
One family of methods computes standard persistence independently at each time step and follows the resulting summaries over time: CROCKER plots display Betti numbers as a function of time and scale for simulated biological aggregations \citep{topaz2015topological}, crocker stacks add a smoothing parameter to them \citep{xian2022capturing}, and \citet{rieck2020uncovering} compute sublevel-set cubical persistence of each fMRI volume of participants watching a movie, using summary statistics and persistence images of the per-time-point diagrams to predict age and to embed cohort brain-state trajectories. Such per-frame summaries do not match features across time. Vineyards \citep{cohensteiner2006vines} match them for a continuously varying filtration of a fixed complex, and zigzag persistence does so for sequences in which simplices are both inserted and deleted. Zigzag persistence is applied to fish swarms, where barcodes of time-varying point clouds are vectorized as persistence landscapes \citep{corcoran2017modelling};  to dynamic graphs and dynamic metric spaces, whose zigzag barcodes are stable under perturbation of the input \citep{kim2020analysis}; to dynamical systems, where it detects Hopf bifurcations from a sliding-window embedding \citep{tymochko2020hopf};  to temporal networks, where it summarizes the evolution of connectivity \citep{	myers2023temporal};  to time-series forecasting, where zigzag summaries are used as features inside a graph convolutional architecture \citep{chen2021zgcnets};  and to neural population activity recorded under time-varying stimuli \citep{gardinazzi2026neural}. \citet{deysamaga2025quasi} propose a variant designed to reduce the sensitivity of the barcode to the choice of the sequence. For temporal-graph classification, \citet{uddin2026t3former} use sliding-window topological and spectral descriptors as tokens of a transformer. An alternative to the zigzag route is to keep time as a genuine second parameter: \citet{kim2021spatiotemporal} treat time-varying point clouds as dynamic metric spaces and study a spatiotemporal multiparameter module, with stability in the interleaving distance, and \citet{chen2022tamp} summarize time-conditioned multipersistence as an Euler--Poincar\'e surface that is fed to a graph convolutional forecaster. In all of this work the filtration and the sequence are designed by hand, and where a topological summary enters a trained model \citep{chen2021zgcnets,chen2022tamp,uddin2026t3former}, it is computed from a fixed filtration and passed to the network as input; what is missing, and what this paper supplies, is a derivative of the barcode with respect to the values that generate the sequence.

\paragraph{Stability.} In one-parameter persistence, the bottleneck distance between diagrams is bounded by the supremum distance between the functions generating them \citep{cohensteiner2007stability}, so a barcode-valued map is $1$-Lipschitz and the failure of differentiability is confined to the strata where bars are exchanged. For zigzag modules the available results are algebraic: \citet{botnan2018} prove that block-decomposable zigzag modules satisfy an isometry theorem for the interleaving distance, and \citet{bjerkevik2021} give bounds for interval-decomposable modules. These bound distances between modules that are already given, and do not bound the change of the barcode under a perturbation of the filtering values, which in our setting can create or destroy a bar of positive length. Section~\ref{sec:stability} and Appendix~\ref{sec:lip_theorem} describe the resulting exclusion set and the discontinuities inside it.

\paragraph{Gradients through event times.} The device we use --- differentiating the instant at which an interpolated value crosses a threshold, rather than the value at a fixed instant --- is the one used for exact gradients in spiking neural networks, where the loss depends on the parameters through spike times defined by a threshold crossing of the membrane potential and the derivative of the time is obtained by implicit differentiation \citep{wunderlich2021eventprop}. \citet{bohte2002spikeprop} backpropagate errors through spike times in temporally encoded spiking networks, and \citet{chen2021event} differentiate through the termination times of neural ODEs defined implicitly by learned event functions.  Here the analogy is in the mechanism: there the event times enter a differential equation, here they are the endpoints of bars in a barcode.

\paragraph{Settings of the experiments.} The coverage experiment of Section~\ref{sec:exp_coverage} takes its criterion from the homological coverage literature, where a cycle in the nerve of the sensing network certifies a hole in the covered region without coordinates \citep{desilva2006}; repair strategies for such holes have been studied with spectral objectives \citep{yadokoro2023}. We propose as a novelty the object minimized: the lifetime of a defect across time, which is a function of the zigzag pairing. The classification experiment of Section~\ref{sec:exp_mooc} takes place on temporal graphs, for which a developed family of learned architectures exists, based on recurrent embedding trajectories \citep{kumar2019jodie,trivedi2019dyrep}, on temporal attention \citep{xu2020tgat}, and on memory modules \citep{rossi2020tgn}, with a unified benchmark \citep{yu2023dyglib}. We do not compare against these architectures. The question asked there is whether the classification gradient can train the filtration, not whether a topological pipeline is preferable to a temporal one.

%% file: sections/app_exp_coverage.tex
\section{Coverage experiment: details}
\label{app:exp_coverage}
\label{app:coverage-lifetimes}

In this section, we give more details on the Experiment presented in section \ref{sec:exp_coverage}. 

\subsection{Related work}
\label{app:coverage-related-work}

\paragraph{The coverage criterion.}
The homological coverage criterion used here is inspired by
\citep{desilva2007, desilva2006}, which introduced the idea that a homological
condition can certify coverage of a domain by a sensor network. The setup is coordinate-free, with unknown positions
and they compute the relative homology of a Rips pair against a fence with a controlled boundary. Our construction is different: we devise a local per-cell test on one
fixed $1$-cycle with known station positions, 
and we test the coverage gap with an external evaluator (explained in the next section).

\paragraph{Zigzag persistence for dynamic coverage.}
Zigzag persistence has already been applied to coverage in time-varying networks.
\citet{gamble2015} track coverage holes across a sequence of complexes built from
the communication graph, obtaining a barcode that records when each hole opens and
closes, together with representative cycles and per-hole size estimates. That is
the same object this appendix optimises over. The distinction is one of analysis
against synthesis: they \emph{measure} time-varying coverage with zigzag barcodes,
whereas the present work optimizes the trajectoris by \emph{differentiating through} the barcode against a functional of bar lifetimes. Their work establishes
zigzag-on-coverage as a formulation; the contribution here is the derivative.

\paragraph{Gradient descent on a spectral topological objective.}
The nearest prior method by \emph{mechanism} rather than by object is the weighted
combinatorial Laplacian of \citet{yadokoro2023}, who repair coverage in a mobile
sensor team by gradient ascent on the smallest non-zero eigenvalue of a weighted
Laplacian $\widetilde{L}_1$, with edge weights inverse to Euclidean length. The
mechanism is the one used here, i.e. gradient descent on a topological objective,
and the difference lies in the objective and in the complex it is computed on.
Their complex is a Delaunay triangulation, which is planar and contractible, so
$H_1$ vanishes and, by the discrete Hodge theorem, $\ker \widetilde{L}_1$ is
trivial: the quantity their eigenvalue responds to is not a homology class but an
\emph{almost}-hole, a subcomplex of low weights spanned by long edges. Their
functional is moreover a property of a single configuration, evaluated instant by
instant with no memory across frames, so like the count strategy we propose it cannot express how long a given defect has
persisted. The two lines therefore
address different problems: theirs is the static redistribution of an all-mobile
team, while the problem posed here is scheduling under scarcity, in which the
defects are permanent under the fixed infrastructure and only their lifetimes can
be controlled.

\subsection{Experimental setup}

\paragraph{Arena.}
Sixteen static sensors occupy the unit lattice $\{0,1,2,3\}^2$ and the monitored domain is $[0,3]^2$. The Rips radius is $R=1.05$. Since $R>1$, the lattice sides are edges of the complex, while the diagonals, of length $\sqrt2$, are not. Since $R<\sqrt5/2=1.118$, the distance from the midpoint of a shared side to the far corners of the two adjacent cells, one robot can fill at most one cell at a time; the nine unit cells are therefore $1$-cycles that do not bound, and each is a class of $H_1$. $M$ mobile robots are added to the sensors points, moving for $30$ frames, so that the sensors-robots point cloud evolves across frames only through robots movements. The physical sensing radius is $R_S=R/\sqrt3=0.606$ and is used by the evaluator only: for a fixed cell-boundary cycle the planar inclusion $\mathrm{Rips}(R)\subseteq\mathrm{\check Cech}(R/\sqrt3)$ says that a cycle bounding at radius $R$ is covered by sensing discs of radius $R_S$, which is a pointwise statement about one cycle at one instant and gives no bar-by-bar correspondence and no ordering of lifetimes. The barcode is the objective throughout and every coverage number is the evaluator's.

\begin{figure}[h]
\centering
\includegraphics[width=0.86\textwidth]{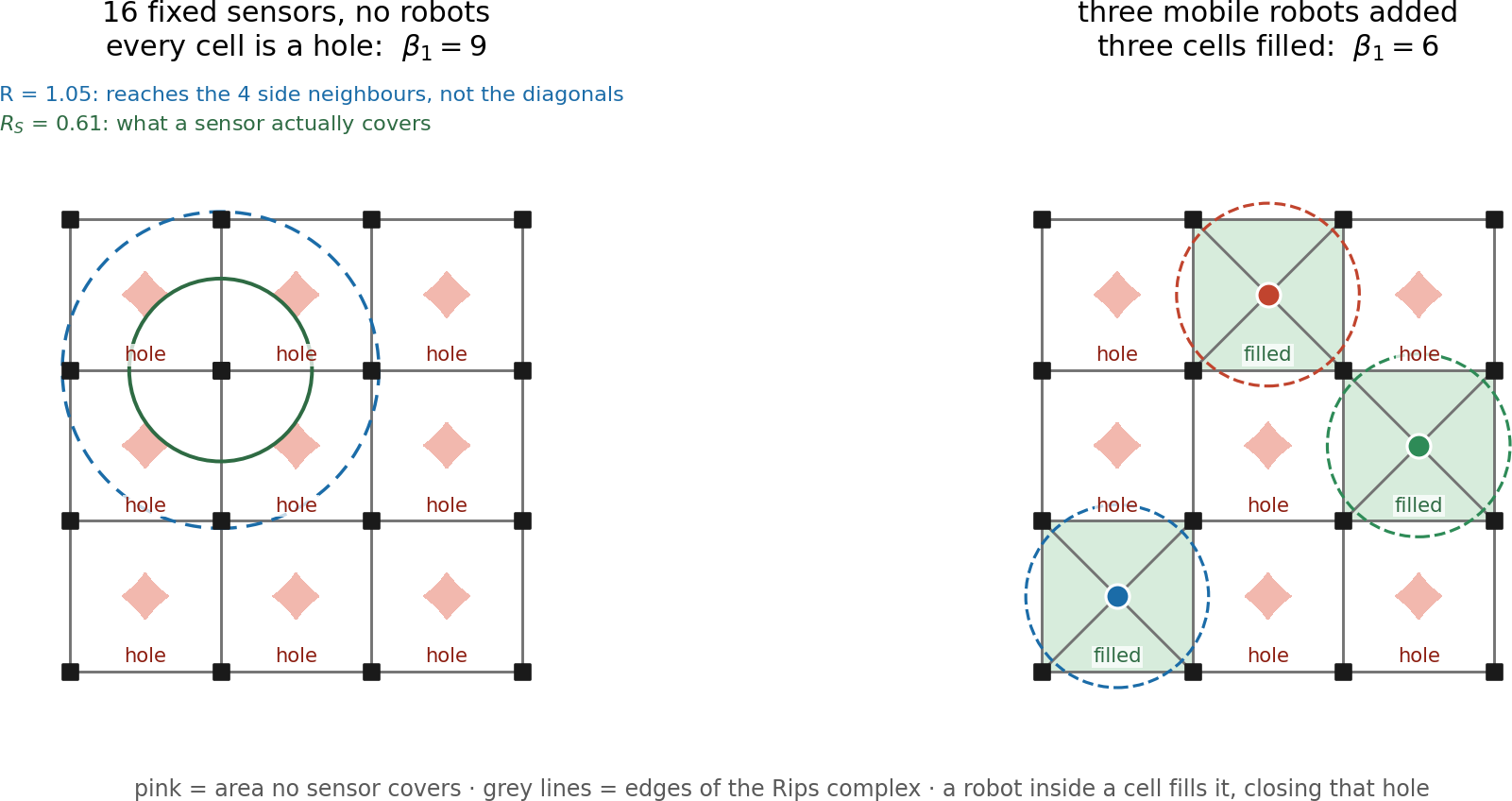}
\vspace{-0.4em}
\caption{The arena. Left: with no robots each of the nine cells is a hole, so the $1$-Betti number $\beta_1=9$; the two circles on one sensor are the Rips radius $R=1.05$, which reaches the four side neighbours and not the diagonals, and the sensing radius $R_S=0.61$; the pink diamonds are the area no sensor covers, the cell centres lying $0.707$ from their corners. Right: three robots fill three cells, $\beta_1=6$. Three robots can never close more than a third of the arena at once, which is why the total hole-time is close to a fixed budget and its distribution in time is the quantity at issue.}
\label{fig:cov_arena}
\end{figure}

\paragraph{How the robots move.}
The parameters are the robots' positions at each frame:
$$\theta = y = \bigl(y(j)_t\bigr)_{j = 1,\dots,M;\ t = 1,\dots,29} \in \mathbb R^{58M},$$
where $y(j)_t$ is the planar position of the $j$-th robot at frame $t$, and they start moving from the common initial point $y(j)_0=(1.5,1.5)$. 
Between consecutive frames a robot travels in a straight line at constant velocity equal to $1$, imposed at the loss-level.
To get a finer view of the robots' motion, each robots' path on frames $\{0,...,29\}$ is extended to a set of $88$ time stamps $\{0,\tfrac13,\tfrac23,\dots,29\}$ via piecewise-linear interpolation.
Formally, this can be seen as a parametrization map $G : \mathbb R^{58M} \to \mathbb R^{cbN} = \mathbb R^{2 \times (16+M) \times 88}$ where, if the point $\nu$ is the $j$-th robot:
\begin{align*}
  x(\nu)_{t+\frac m3}   &= \Bigl(1 - \tfrac{m}{3}\Bigr)\, y(j)_t + \tfrac{m}{3}\, y(j)_{t+1},
                 && t = 0,\dots,28,\,\, m = 0,1,2, \\
  x(\nu)_{29}  &= y(j)_{29},
\end{align*} 
while if $\nu$ is a sensor, with $s_\nu$ being its fixed position, $x(\nu)_i= s_\nu$ for every $i\in\{0,\tfrac13,\tfrac23,\dots,29\}$.
Thus, $x=G(y)$ is a vector representing the sensors-robots point cloud evolution along the $88$ time stamps $\{0,\tfrac13,\tfrac23,\dots,29\}$.

\paragraph{Objective.}
After point cloud motion formalization, we can apply the paper's pipeline with the indexing convention changed from $\{0,1,...,N,N+1\}$ to $\{-\tfrac13,0,\tfrac13,\dots,29,\tfrac{88}{3}\}$, as explained in Remark \ref{rmk:indexes}. Under this convention, the pipeline reads:
\[
  P_1 : \mathbb R^{58M}
  \xrightarrow{\ G\ } \mathbb R^{2 \times (16+M) \times 88}
  \xrightarrow{\ \mathrm{VR}\ } FF_{88}(K)
  \xrightarrow{S\ } \mathrm{SR}(K;<)
  \xrightarrow{\ \mathrm{PH}_1\ } \mathcal D,
\]
where $\mathrm{VR}$ is the Vietoris-Rips extension described in Section \ref{sec:applications}, with radius $R$.
Calling $D=P_1(y)$ the persistence diagram coming from this pipeline, the loss function to optimize is
\begin{equation*}
\mathcal L(y)= P_{\rm dom}(y)+ P_{\rm speed}(y)+\mathcal L_{\rm topo}(y),
\end{equation*}
with
\begin{equation*}
\mathcal L_{\rm topo}=\textstyle\sum_{(b,d)\in D}(d-b)
\ \ \text{or}\ \
\textstyle\sum_{(b,d)\in D}\max\!\big(0,d-b-\Delta\big)^2,
\end{equation*}
the first being the count objective, and the second the lifetime objective with $\Delta$ frames tolerance.
The two penalty terms are
\begin{align*}
P_{\rm dom}(y)&=40\sum_{i}\sum_{\nu \text{ robot}}\sum_{c\in\{1,2\}}
\Big[\max\big(0,-x(\nu)_{i,c}\big)^2+\max\big(0,x(\nu)_{i,c}-3\big)^2\Big],\\[-0.2em]
P_{\rm speed}(y)&=40\sum_{t}\sum_{j=1}^{M}\Big(\big\|y(j)_{t}-y(j)_{t-1}\big\|-1\Big)^2,
\end{align*}
the squared excursion of the interpolated path outside the monitored domain, summed over the $88$ time stamps and both coordinates, and the squared deviation of every move from unit length. The topological term enters at weight one, and the weight $40$ is the largest that does not diverge at the step size used ($400$ and $4000$ do).
Since $G$ is affine, $\mathrm{VR}$ is semialgebraic, and $\mathcal L$ can be seen as $\mathcal L=L\circ P_1$ with $L$ $1$-differentiable (Definition \ref{def:differentiability}), Proposition \ref{prop:ae-differentiable} applies and can be used to optimize the loss as described in Section \ref{sec:stability}.

\paragraph{Random walk as initial iterate.}
The optimization starts from a seeded random walk: from the centre, each robot makes $29$ unit moves in uniformly random directions, a move leaving $[0,3]^2$ being redrawn, so both penalties vanish at the first iterate. Within a seed, all strategies start from the same walk.

\subsection{Experiment evaluation and controls}

\paragraph{Evaluation.}
As an independent check of coverage, we compute intervals where the network has coverage gaps exactly in time at each of
$14\,641$ probes on a uniform grid of step $0.025$. A probe within $R_S$ of a sensor is never
blind; otherwise, for each robot and each motion segment $x(s)=a+s(b-a)$ the covered set is the
solution of a quadratic, and the intervals are merged across robots and segments and complemented
in the window. We computed the maximum over probes of the longest coverage gap as the primary
metric. Every barcode quantity reported, such as the longest $H_1$ bar and the topological loss functions (both from count and lifetime stratiegies),
is read from $D=P_1(y)$, the barcode the loss itself computes on the $88$ time stamps, at the final
iterate. 

Empirically the barcode and the evaluator agree closely in this arena: the longest bar is at least the maximum coverage gap in every run, exceeding it by a
median of $0.38$ frames (range $0.04$--$1.87$) on the lifetime objective and $0.21$ frames
(range $0.00$--$1.14$) on the count objective. That ordering is the one the pointwise filling statement
predicts: a cell whose boundary cycle bounds is covered by the sensing discs, so a probe may be
covered while its cell is still topologically open, so the barcodes report more failure than
the physical measurement does.

\paragraph{Hyperparameters and configuration search.}
The parametrization, $\gamma_0$, the noise level, the decay exponent, the budget and the penalty
weight were fixed jointly on development seeds $0$--$3$ over $30$ configurations, and every number
in Section~\ref{sec:exp_coverage} and below is from held-out seeds $10$--$19$, ten seeds per objective,
with the seed as the independent unit. We report medians across seeds, and $95\%$ bootstrap
intervals over the ten ($20\,000$ resamples) for paired within-seed differences, the pairing being
across objectives at a common seed. The \emph{final} iterate is the reported quantity for every objective.

Settings were fixed on the development seeds and on the lifetime objective only, scored by the median of
the primary metric, over $30$ configurations: $\gamma_0$ across
$\{3\cdot10^{-4},\dots,3\cdot10^{-2}\}$ and $\sigma$ across $\{0.5,2,5,30,60,100\}$, with both
budgets ($3\,000$ and $30\,000$) and both speed variants (fixed speed and bounded speed) also covered. The
decay exponent was held at $0.51$, the smallest value admitted by the step-size condition and hence
the one that keeps the step alive longest; faster decays were uniformly worse in preliminary
exploration and were not carried into this search. Four findings fixed the reported setting.

\begin{itemize}
\item \emph{The step size is bounded by the penalty, not by the topology.} Steps of $3\cdot10^{-2}$
and above diverge: the penalty weight $40$ gives a curvature of $80$ and hence a stability limit
near $0.025$. At the other end, steps of $3\cdot10^{-3}$ and below leave the longest coverage gap
above $12$ frames at any noise level or budget. 
\item \emph{The perturbation helps up to a point.} At $\gamma_0=10^{-2}$ and $30\,000$ steps the
development median improves monotonically with $\sigma$, giving $11.6$, $11.5$, $9.6$, $8.8$ frames at
$\sigma=0.5,2,5,30$. At t $\sigma=60$ and $100$ the trend turns back down ($9.3$ and $11.4$).
\item \emph{The speed bound was rejected.} Replacing the speed equality of $P_{\rm speed}$
by an inequality lets the robots slow down, travelling $68$ units instead
of $87$ with no gain in the loss, which destroys the equal-effort comparison between the objectives.
\item \emph{The budget was extended after selection.} $30\,000$ steps were selected on development
seeds; the budget was then extended to $100\,000$ on the held-out seeds, worth
$-0.20$ $[-0.71,+0.41]$ frames, lower in $8$ of $10$ seeds.
\end{itemize}

\subsection{Extended results}

All three tables \ref{tab:cov_delta}, \ref{tab:cov_robots} and \ref{tab:cov_noise} report held-out seeds $10$--$19$, final iterates after $100\,000$ steps, medians
over the ten seeds with the range across seeds in parentheses. The longest bar is that of $D$, in frames. The last column is the fraction of the monitored domain whose longest coverage gap
exceeds the tolerance \emph{of that row}, as measured by the independent evaluator.

Both count and lifetime strategies effectively reduces their topological loss compared to the random walk initialization (Table \ref{tab:cov_delta}).
In particular, for the lifetime strategy, the achieved longest bar rises with $\Delta$, from $9.4$ to $13.8$ frames as $\Delta$ goes from $6$ to $12$, staying between $1.8$ and $3.4$ frames
above its own target throughout. Thus, the lifetime topological objective is never driven to zero.

\begin{table}[h]
\centering\footnotesize\setlength{\tabcolsep}{6pt}
\caption{Topological results of the three-robots-experiment, for count and lifetime strategies, the last with four different tolerance $\Delta$ values.
The count objective carries no $\Delta$, so its waiting area is computed for $\Delta=6$.}
\label{tab:cov_delta}
\begin{tabular}{lccc}
\toprule
strategy & $\mathcal L_{\text{topo}}$: init $\to$ final & longest bar & area $>\Delta$ \\
\midrule
count (at $\Delta=6$)   & $244.8\to211.0$                           & $21.6$ \;($11.3$--$29.0$) & $2.7\%$ \;($2.3$--$3.2$) \\
\midrule
lifetime, $\Delta=6$    & $507.9\to33.8$                        & $\phantom{0}9.4$ \;($\phantom{0}8.0$--$10.9$) & $1.9\%$ \;($0.7$--$2.9$) \\
lifetime, $\Delta=8$    & $310.7\to17.2$                        & $11.1$ \;($\phantom{0}9.0$--$13.9$) & $0.7\%$ \;($0.4$--$1.9$) \\
lifetime, $\Delta=10$   & $180.1\to\phantom{0}6.7$              & $11.8$ \;($10.3$--$16.2$) & $0.5\%$ \;($0.1$--$1.2$) \\
lifetime, $\Delta=12$   & $\phantom{0}96.9\to\phantom{0}5.4$    & $13.8$ \;($12.3$--$21.0$) & $0.1\%$ \;($0.0$--$0.9$) \\
\bottomrule
\end{tabular}
\end{table}

When increasing the number of robots (shown in Table \ref{tab:cov_robots}) the result is closer to the topological objective as expected, since the task is easier to solve with more robots covering the holes.
\begin{table}[h]
\centering\footnotesize\setlength{\tabcolsep}{6pt}
\caption{Robot-count ladder, lifetime objective at $\Delta=6$ throughout.}
\label{tab:cov_robots}
\begin{tabular}{llccc}
\toprule
robots & strategy & $\mathcal L_{\text{topo}}$: init $\to$ final & longest bar & area $>\Delta$ \\
\midrule
$2$ & count    & $250.8\to227.9$                       & $25.4$ \;($19.9$--$29.0$) & $3.4\%$ \;($3.1$--$3.6$) \\
$2$ & lifetime & $996.2 \to 144.6$                       & $12.4$ \;($10.6$--$15.9$) & $3.3\%$ \;($2.7$--$3.7$) \\
\midrule
$3$ & count    & $244.8\to211.0$                       & $21.6$ \;($11.3$--$29.0$) & $2.7\%$ \;($2.3$--$3.2$) \\
$3$ & lifetime & $507.9\to 33.8$             & $\phantom{0}9.4$ \;($\phantom{0}8.0$--$10.9$) & $1.9\%$ \;($0.7$--$2.9$) \\
\midrule
$4$ & count    & $239.2\to197.0$                       & $17.1$ \;($\phantom{0}9.6$--$24.6$) & $1.8\%$ \;($1.3$--$2.4$) \\
$4$ & lifetime & $250.4 \to 14.8$             & $\phantom{0}8.9$ \;($\phantom{0}6.7$--$10.9$) & $0.5\%$ \;($0.0$--$1.3$) \\
\midrule
$5$ & count    & $232.4\to180.7$                       & $13.9$ \;($10.6$--$19.3$) & $1.5\%$ \;($0.9$--$2.4$) \\
$5$ & lifetime & $121.8 \to 5.0$             & $\phantom{0}7.9$ \;($\phantom{0}6.9$--$\phantom{0}9.6$) & $0.2\%$ \;($0.0$--$0.7$) \\
\bottomrule
\end{tabular}
\end{table}

We also test the role of the noise in the zigzag gradient (see Table \ref{tab:cov_noise}). Removing the perturbation moves the median longest bar only from $9.4$ to $10.6$ frames --- the paired difference is $+1.69$ $[-1.00,+7.36]$ frames, an interval containing zero --- but it
widens the spread across seeds sharply, one seed ending at the worst attainable $29$. Removing the topological term instead, so that $\mathcal L=P_{\rm dom}+P_{\rm speed}$ and both strategies reduce to the same motion-only run, yields no topological descent: the count term goes from $244.8$ to $246.7$, the lifetime term from $507.9$ to $530.9$, with the longest bar ending at $21.3$ frames. Against this reference the lifetime strategy shortens the longest bar in every seed (median paired difference
$-12.2$ frames), while the count strategy barely moves it ($-1.3$, lower in $7$ of $10$ seeds) and
leaves more of the domain waiting longer than $\Delta$ ($2.7\%$ against $2.2\%$). 
\begin{table}[H]
\centering\footnotesize\setlength{\tabcolsep}{6pt}
\caption{Three experiments with three robots and $\Delta=6$: stochastic optimization, optimization without noise (gradient only), stochastic optimization without topological term (motion only). In the motion only rows the topological term is removed from the loss, so both strategies give the same run: each row reads its own term along that path, evaluated but not optimized.}
\label{tab:cov_noise}
\begin{tabular}{llccc}
\toprule
strategy & condition & $\mathcal L_{\text{topo}}$: init $\to$ final & longest bar & area $>\Delta$ \\
\midrule
lifetime & gradient and noise (reported) & $507.9\to33.8$ & $\phantom{0}9.4$ \;($\phantom{0}8.0$--$10.9$) & $1.9\%$ \;($0.7$--$2.9$) \\
lifetime & gradient only, $\sigma=0$     & $507.9\to48.7$ & $10.6$ \;($\phantom{0}7.3$--$29.0$) & $2.1\%$ \;($0.7$--$3.4$) \\
lifetime & motion only, $\mathcal L_{\rm topo}$ removed & $507.9\to530.9$ & $21.3$ \;($18.9$--$29.0$) & $2.2\%$ \;($2.0$--$2.8$) \\
\midrule
count    & motion only, $\mathcal L_{\rm topo}$ removed & $244.8\to246.7$ & $21.3$ \;($18.9$--$29.0$) & $2.2\%$ \;($2.0$--$2.8$) \\
\bottomrule
\end{tabular}
\end{table}

\subsection{The loss along the iterate sequence}
\label{app:coverage-loss}

Figure~\ref{fig:cov_loss_count} shows the count objective's term (left) and its total loss $\mathcal L$
(right) along the iterate sequence. The term moves by $14\%$, from $244.8$ to $211.0$, with the ten
seeds with a low spread ending between $208.6$ and $212.8$; the total loss ends at $212.1$, the
penalties having decayed from their peak at the first iterates. That is a small movement, mainly caused by how the arena is built. With three robots at least six of the nine cells are open at every
instant, so $\mathcal L_{\rm topo}=\sum_{(b,d)\in D}(d-b)$ has a hard floor and little of it is reachable: the count objective starts
close to the best its own criterion admits (the random walk motion), and the $14\%$ it gains is the lowest total defect-time
reached anywhere in this experiment.

\begin{figure}[h]
\centering
\includegraphics[width=0.49\textwidth]{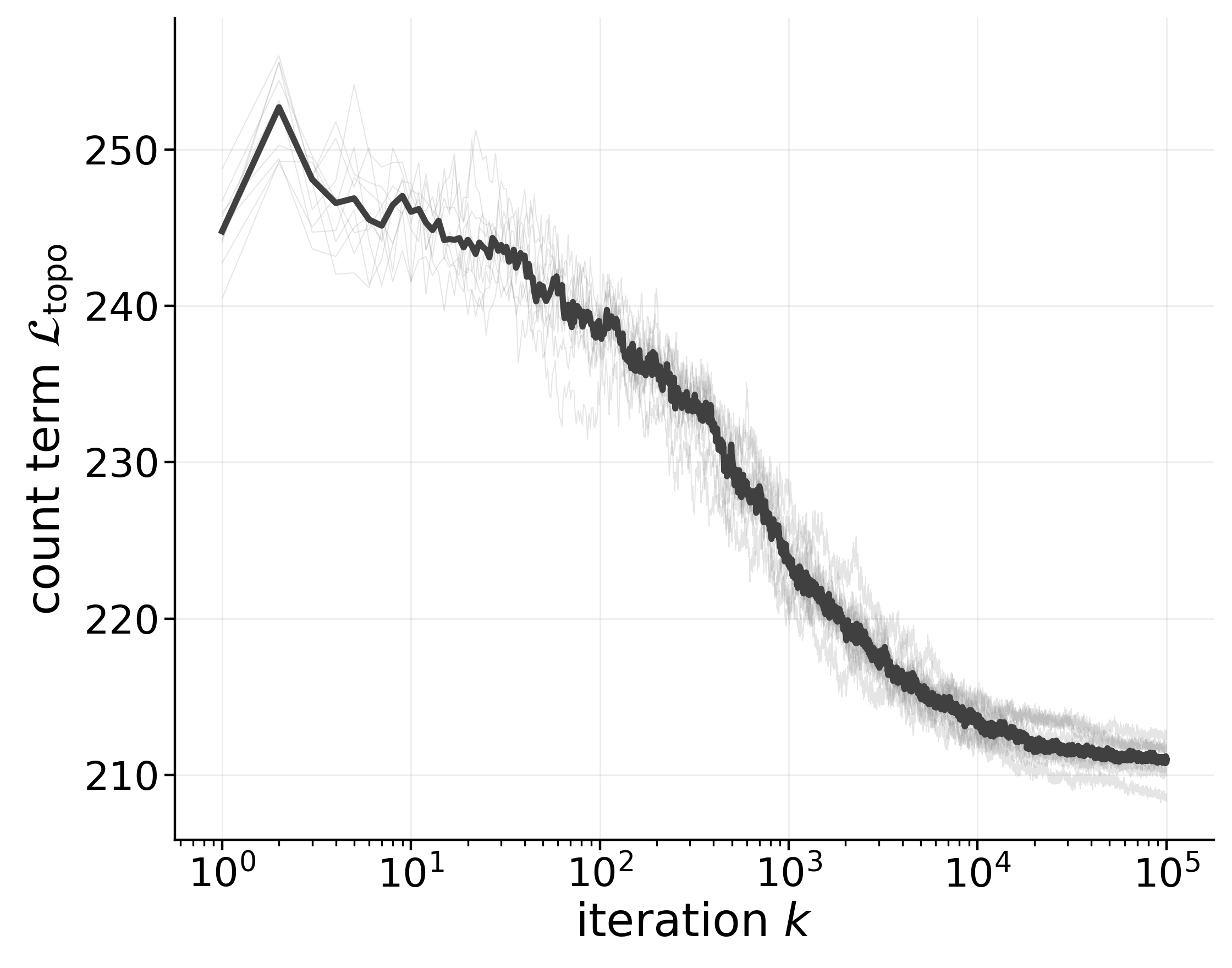}\hfill
\includegraphics[width=0.49\textwidth]{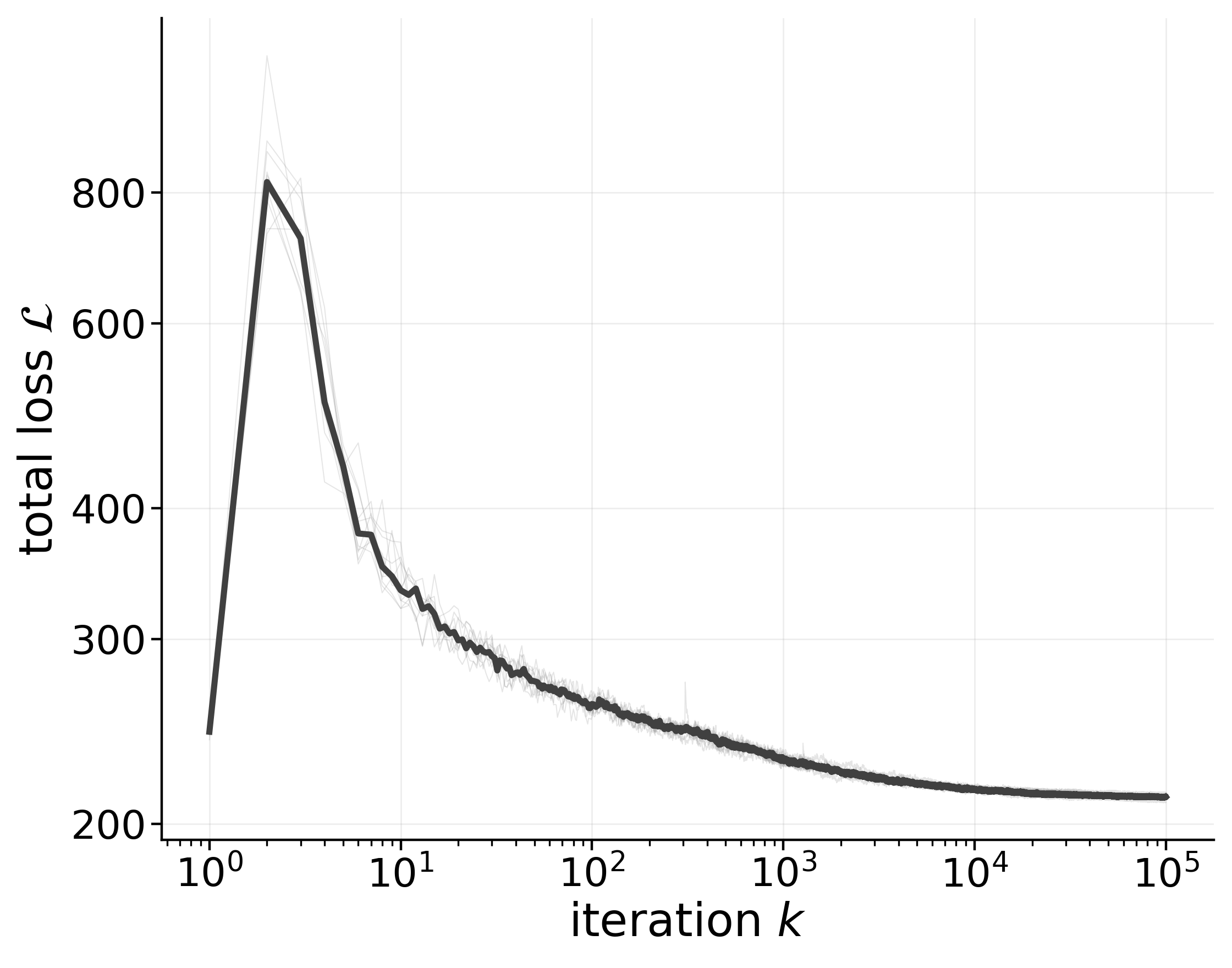}
\vspace{-0.4em}
\caption{The count strategy along the iterate sequence, held-out seeds $10$--$19$, median bold and seeds faint, logarithmic iteration axis. Left: the count term $\mathcal L_{\rm topo}=\sum_{(b,d)\in D}(d-b)$ on a linear value axis; it descends by $14\%$ and no more, because with three robots at least six of the nine cells are open at every instant and the loss has a hard floor. Right: the total loss $\mathcal L$, logarithmic value axis; the peak at $k=2$ is the first perturbation breaking the unit-speed penalty (which was satisfied by the random walk initialization), and the curve ends $1.1$ above $\mathcal L_{\rm topo}$. }
\label{fig:cov_loss_count}
\end{figure}

The lifetime objective has far more room, because it measures how the budget of defect-time is distributed in time. Over the same
$100\,000$ steps its term falls by a factor of $15$ at $\Delta=6$ ($507.9\to33.8$), $18$ at $\Delta=8$
($310.7\to17.2$), $27$ at $\Delta=10$ ($180.1\to6.7$) and $18$ at $\Delta=12$ ($96.9\to5.4$), as the left
panel of Figure~\ref{fig:cov_loss_vs_bar} shows.
Figure~\ref{fig:cov_loss_total} shows the total objective $\mathcal L$ for the same four conditions:
$507.9\to34.6$, $310.7\to17.8$, $180.1\to7.3$, $96.9\to6.1$. It differs from $\mathcal L_{\rm topo}$ only by
the two penalties, which are large only at the first iterates, where the first perturbation of
$0.3$ lattice units breaks the unit-speed condition, and amount to less than one unit at the final
iterate (median $0.6$--$0.7$).
Furthermore, the right panel of Figure~\ref{fig:cov_loss_vs_bar} shows the median length of the longest $H_1$ bar across iterations. Over the optimization runs, the longest bar starts at a median of $21.1$ frames (from the seeded random walk) and decreases to $9.4$, $11.1$, $11.8$, and $13.8$ across the four tolerances $\Delta$. Nevertheless, in all cases, the final defect length remains slightly above the target tolerance.

\subsection{Computational cost.} 

Every run uses a single CPU core of an AMD EPYC 9374F with no GPU,
and peaks at $350$\,MB of memory. One run of the reported configuration --- $\Delta=6$, three robots,
topology read at three sub-steps per move, $100\,000$ subgradient steps --- takes a median of
$1.5$ hours over the ten held-out seeds (range $1.35$--$1.63$), i.e.\ $57$\,ms per step, of which
$10$\,ms is the zigzag layer's forward pass (crossing times and pairing), $7$\,ms the evaluation of the
filtering values at the $88$ instants and $17$\,ms the backward pass; the remaining $23$\,ms rebuild the
ambient complex at each iterate, which is an artifact of our implementation rather than of the method,
since only $46$ distinct complexes occur across $300$ consecutive steps and a cache would remove most of
it. Each step handles $19$ points, $88$ instants, a complex of about $180$ simplices and about $300$
$H_1$ bars. The independent evaluator costs $1.6$\,s per call at $14\,641$ probes and is called twice per
run. The campaign behind Table~\ref{tab:cov_delta} totals $75$ core-hours.

\begin{figure}[h]
\centering
\includegraphics[width=0.995\textwidth]{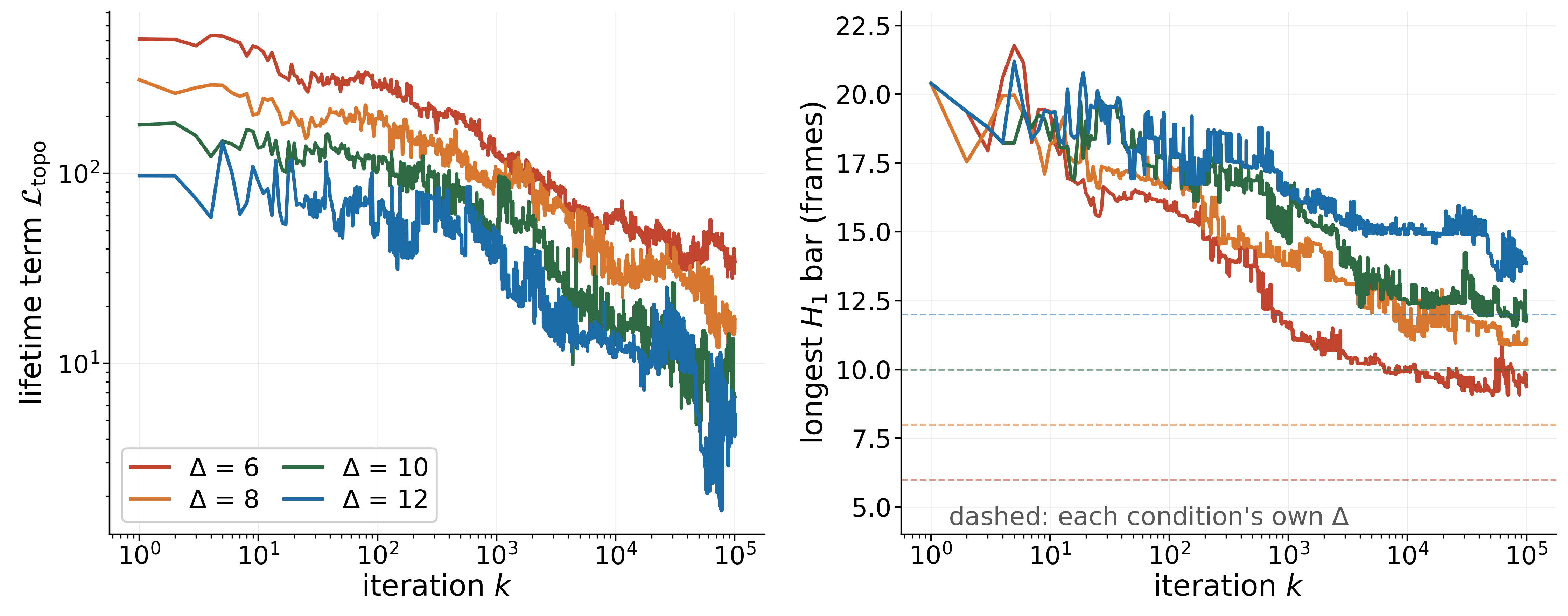}
\vspace{-0.4em}
\caption{Left: the topological term at the iterate for the lifetime objective at each tolerance,
medians over held-out seeds $10$--$19$, logarithmic axes. Right: the longest $H_1$ bar at the same iterates, each condition against its own $\Delta$ (dashed, matching colours). The objective goes on descending for the whole run at every tolerance.}
\label{fig:cov_loss_vs_bar}
\end{figure}

\begin{figure}[h]
\centering
\includegraphics[width=0.56\textwidth]{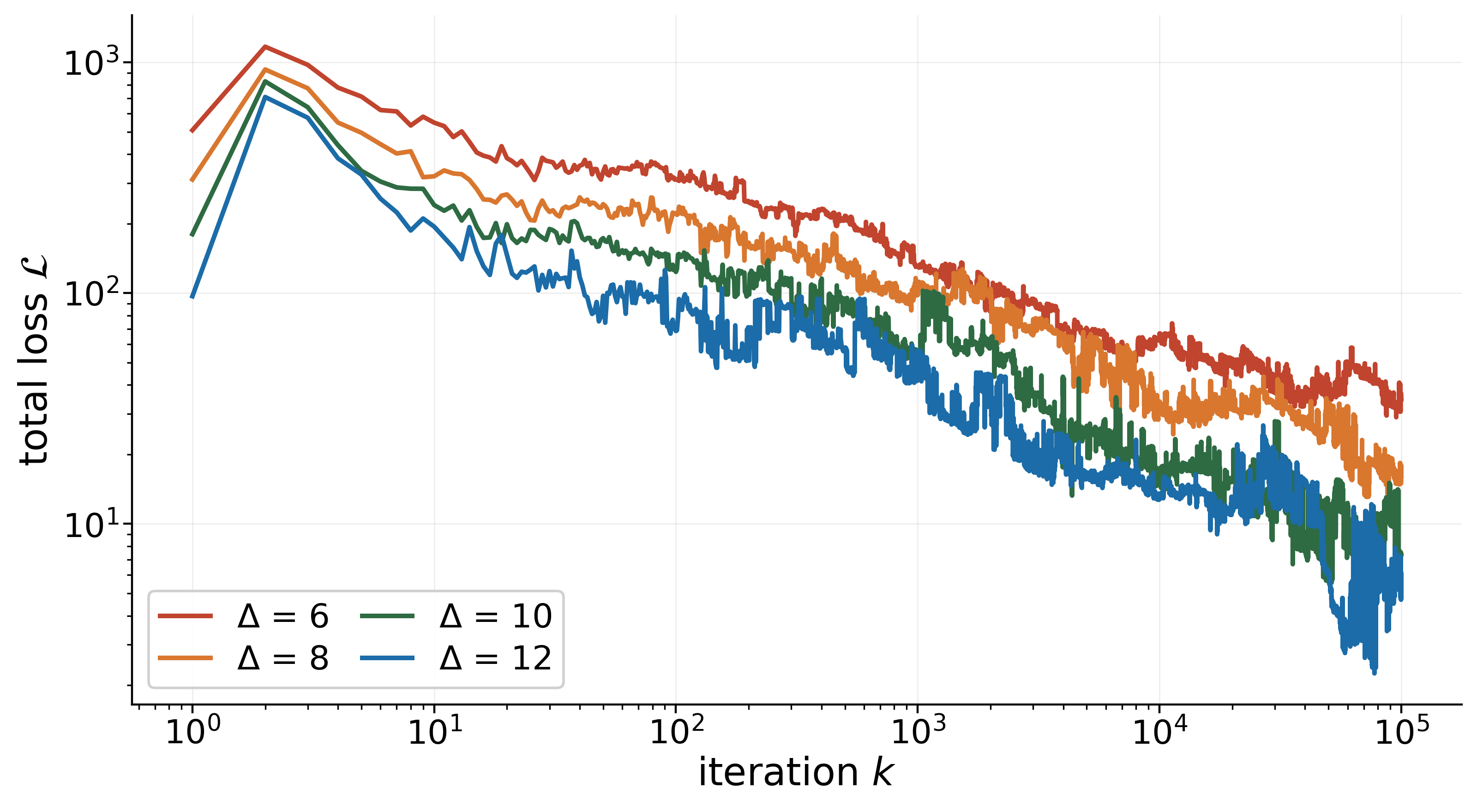}
\vspace{-0.4em}
\caption{The total objective $\mathcal L$ of the lifetime strategy at each tolerance, penalties
included, medians over held-out seeds $10$--$19$, logarithmic axes.}
\label{fig:cov_loss_total}
\end{figure}

%% file: sections/app_exp_mooc.tex
\section{Learned upper-star zigzag: details}
\label{app:exp_mooc}

In this section, we give more details on the experiment presented in Section~\ref{sec:exp_mooc}.

\subsection{Experimental Setup}
\label{app:mooc_setup}

\paragraph{Generator.}
A figure-eight is two loops sharing a node, each loop containing one switch vertex and closed only while that vertex is active, so at every frame exactly one of the two loops is closed and which one is decided by the switch pattern. Class~A alternates slowly, every $16$ frames, and class~B rapidly, every $2$. Every example carries a second, decoy figure-eight of identical structure running the other class's pattern with an independent phase, so each example contains one slowly and one rapidly alternating figure-eight whichever its label, and the two classes differ only in which of the two is the signal. That is written in four per-vertex features: the first two have opposite signs on the signal figure-eight's vertices and equal signs on the decoy's, with magnitudes drawn independently of the label, and the last two are noise. Neither feature alone nor any linear function of the two identifies the signal (logistic regression reaches $0.515$ and $0.517$ on each alone and $0.568$ on both, over the $3\,600$ vertices of a development draw), while their product does exactly ($1.000$): the network has to compute an exclusive-or. An example has $32$ frames, $18$ vertices in two figure-eights of nine, $20$ fixed edges and $38$ cells; a draw of the generator produces $300$ balanced examples, split into stratified ten folds with $20\%$ of each training pool held out for validation ($216$ training, $54$ validation and $30$ test examples per split).

\begin{figure}[h]
\centering
\includegraphics[width=0.94\textwidth]{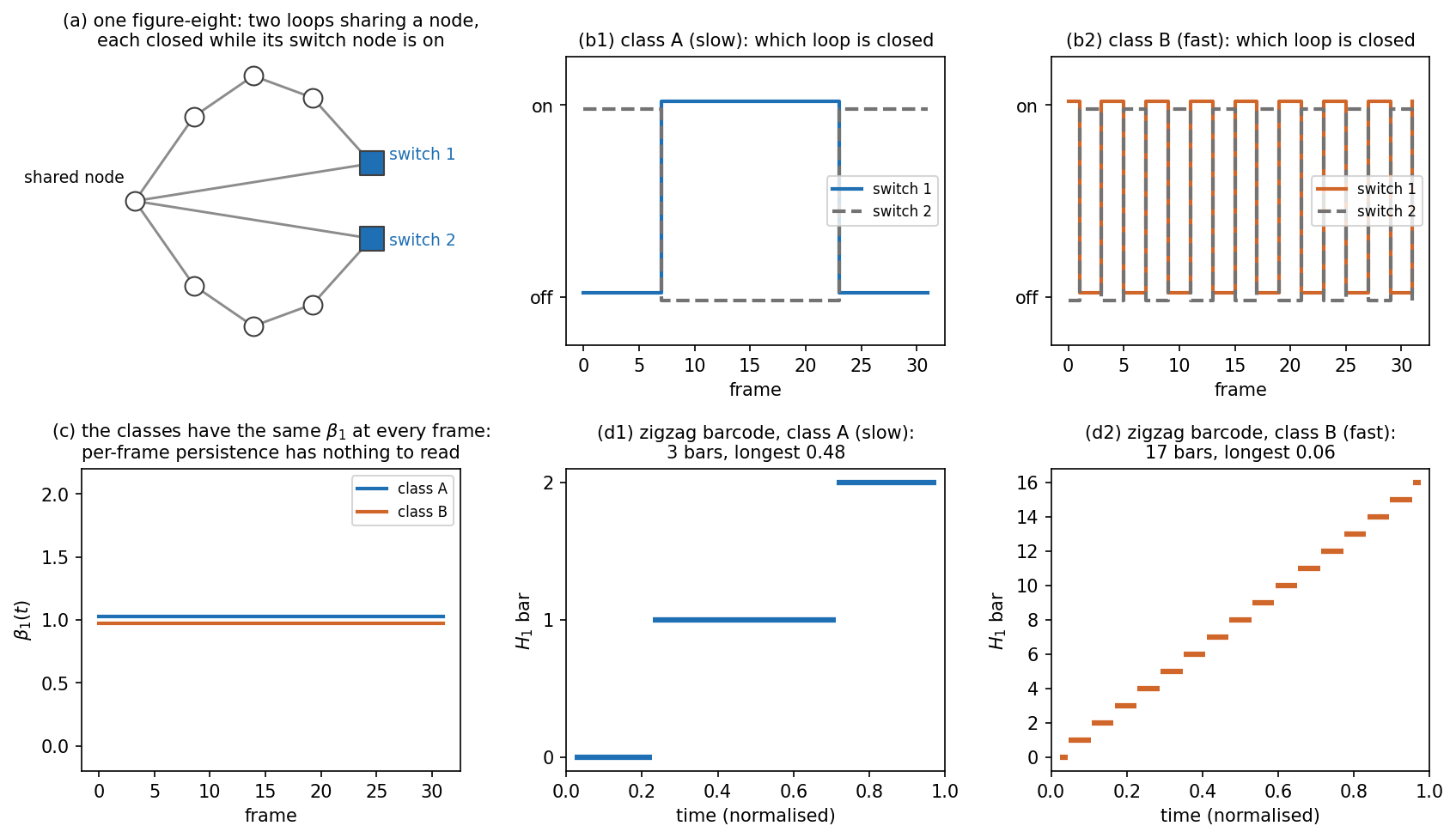}
\vspace{-0.4em}
\caption{The designed task. (a) One figure-eight: two loops sharing a node, each closed while its switch vertex is active. (b) The switch patterns defining the two classes, slow against fast. (c) The resulting Betti curves, identical between the classes at every frame, so a per-frame reading has nothing to separate. (d) The zigzag barcodes of one example of each class under the filtering function the generator knows, at the threshold of the experiment: three bars with the longest at $0.48$ against seventeen with the longest at $0.06$.}
\label{fig:mooc_task}
\end{figure}

\paragraph{Filtering function and read-out.}
Values on vertices are extended by the upper-star rule of Section~\ref{sec:applications}, $v(\sigma)$ is the minimum over the vertices of $\sigma$, producing a filtering function in $FF_{32}(K)$, where $K$ is the connected graph.
Since vertex values take values in $(0,1)$, we take a thresholded $\epsilon=0.5$, and apply the paper's construction pipeline to obtain a barcode from the resulting filtering function. A barcode becomes a $20\times20$ persistence image on (birth, persistence) under two weightings: the persistence itself and the persistence capped at one frame
 for $H_0$ and $H_1$ separately, so four image channels in all, and a convolutional reader of about $26$k parameters maps the image to two logits. Both weightings vanish on the diagonal.

\paragraph{Variants.}
Every variant produces a vertex field or reads one. The \emph{learned zigzag} uses $\sigma(\mathrm{standardise}(\mathrm{base}+r))$, where $\sigma(z)=1/(1+e^{-z})$ is the logistic sigmoid, $\mathrm{standardise}$ rescales a network's values to zero mean and unit standard deviation over all its vertices and frames, and $\mathrm{base}$ is the standardised activity field, $\mathrm{standardise}(a)$ with $a(\nu)_t=+1$ if vertex $\nu$ is active at frame $t$ and $-1$ otherwise; it trains the residual $r$, a two-layer graph convolution of $225$ parameters shared across frames, along with the reader; the \emph{fixed zigzag} uses $\sigma(\mathrm{base})$, the activity-only field, which keeps only each vertex's activity and ignores its four features, so that it cannot tell the signal figure-eight from the decoy, and trains the reader only;
the \emph{signal-only zigzag} uses the selection the generator knows, the signal figure-eight alone, and trains the reader only. The \emph{per-frame} variants apply standard persistence to each frame separately: at every frame, the level is lowered from $1$ to $0$ and a cell enters once all its vertices exceed it, so the complex at level $\epsilon$ is exactly the zigzag's complex at that frame, and the frame yields an ordinary barcode whose endpoints are field values rather than times. The $32$ barcodes become $32$ persistence images, stacked along a frame axis and passed to the same reader. Bars of different frames are never matched, so these variants see which loops exist at each frame but not whether a loop at one frame is the same loop as at the next, and since every level is read, they do not depend on the threshold. All variants share the reader, so differences between them are not about capacity. The residual's last layer is zero-initialized, so the learned variant starts as the fixed variant. The graph convolution's input at each frame is, for every vertex, its activity at that frame and its four features, which are the same at every frame; its output is a single value per vertex and frame, recomputed at every frame and at every iteration. It is applied to each frame independently, with no recurrence, no temporal convolution and no time index in its input. The threshold is fixed, and only the graph convolution and the reader are trained.

\paragraph{Loss and training.}
The loss is the standard cross-entropy loss $\mathcal L(\theta)=\mathrm{CE}(\text{logits},\text{label})+\lambda\|\theta\|^2$ with $\lambda=10^{-4}$. Stage~1 fits the reader alone on the images of the initial frozen field (Adam, weight decay $0.01$, $300$ epochs) and keeps the last iterate, with no checkpoint selection.
Stage~2 is the experiment, the iteration of Definition~\ref{def:clark} with the settings of Table~\ref{tab:mooc_settings}. Its stochasticity is the minibatch sampling, $16$ examples drawn uniformly with replacement, so that $\zeta_k=\hat g_k-\nabla\mathcal L(\theta_k)$.
The graph convolution has about a hundred times fewer parameters than the reader and, with a shared step size, would barely move; we therefore rescale it, writing its weights as $\mathrm{scale}\cdot w$ and training $w$, which leaves the model unchanged but makes its steps $\mathrm{scale}^2$ times larger.

\begin{table}[h]
\centering\footnotesize\setlength{\tabcolsep}{5pt}
\caption{Stage-2 settings and why they take these values. All were fixed on development draws of the generator, none of which the reported results come from.}
\label{tab:mooc_settings}
\begin{tabular}{llp{0.53\textwidth}}
\toprule
setting & value & why \\
\midrule
$\gamma_0$ & $0.02$ & scanned over $\{0.01,0.02,0.03,0.05,0.1\}$; the optimum is interior, larger steps moving the field further and classifying worse \\
$\alpha$ & $0.6$ & in $(1/2,1]$, matching the step-size condition of Proposition~\ref{prop:clark}, which on its own brings none of its other hypotheses; \\
iterations & $3\,200$ & at $800$ the loss is still falling when the schedule has decayed the step to nothing; four times the budget removes that truncation \\
minibatch & $16$, uniform with replacement & the sampling of Definition~\ref{def:clark} \\
ridge $\lambda$ & $10^{-4}$ & no measurable effect in development \\
amplification & $30$ & the rescaling of the graph convolution's weights; without it the graph convolution barely moves beside the reader (Appendix~\ref{app:mooc_setup}) \\
threshold $\epsilon$ & $0.5$ & strictly inside the gap $(0.056,0.587)$ of the initial field $\sigma(\mathrm{base})$, which takes only a low and a high value; \\
restarts & $4$ seeds & the final iterate of the seed with the lowest final \emph{validation} loss is reported \\
\bottomrule
\end{tabular}
\end{table}

\paragraph{Evaluation.} Settings were fixed on four development draws of the generator; every number in Section~\ref{sec:exp_mooc} and below is from two further draws, each divided by ten-fold cross-validation into training, validation and test examples; each of the resulting twenty divisions is a \emph{split}, and every variant is measured on the same twenty. The learned variants are trained from four seeds per split; a seed changes the initialization of the graph convolution's first layer and of the reader, and the order of the minibatches. The final iterate is the reported quantity for every variant, with no best-so-far selection inside a run, as in the coverage experiment; across seeds, the one with the lowest final validation loss is kept, test labels playing no part. A split is \emph{solved} when the test accuracy exceeds $0.9$; we report means and standard deviations over splits and, for single runs, over all $80$ runs. The reader-only variants are trained once per split, their filtering function being fixed.

\subsection{Extended results}
\label{app:mooc_results}

Table~\ref{tab:mooc_arms} reports every variant on the same twenty splits. The controls fail as the design intends: per-frame persistence is at chance at any fixed field and the activity-only filtering function is at chance in either read-out, while the selection the generator knows is solved exactly by the zigzag read-out. The signal-only rows are the only pair that isolates the pairing itself, sharing the filtering function, the complexes and the reader and differing only in whether features are matched across time; that pair separates completely and without variance, but on a task built so that it would. Per-frame persistence partly recovers once its field is also learned (Table~\ref{tab:mooc_arms}), because the network can write temporal structure into per-frame field values for the reader's convolution along the frame axis to pick up; the $1.000$ against $0.500$ comparison is therefore quoted at a shared filtering function. 
\begin{table}[h]
\centering\footnotesize\setlength{\tabcolsep}{4.5pt}
\caption{Temporal-graph classification on the same twenty held-out splits; final-iterate test accuracy, chance $=0.500$; \emph{solved} counts splits above $0.9$. Learned variants: best of four seeds by final validation loss, and all $80$ single runs.}
\label{tab:mooc_arms}
\begin{tabular}{lllcc}
\toprule
read-out & field & runs & solved & accuracy \\
\midrule
zigzag & learned & best of 4 & $\mathbf{20/20}$ & $\mathbf{0.995\pm0.016}$ \\
 & & single & $58/80$ & $0.893\pm0.178$ \\
per-frame persistence & learned & best of 4 & $0/20$ & $0.755\pm0.112$ \\
 & & single & $0/80$ & $0.635\pm0.136$ \\
\midrule
zigzag & signal-only & reader only & $20/20$ & $1.000\pm0.000$ \\
per-frame persistence & signal-only & reader only & $0/20$ & $0.500\pm0.000$ \\
zigzag & activity-only & reader only & $0/20$ & $0.540\pm0.086$ \\
per-frame persistence & activity-only & reader only & $0/20$ & $0.500\pm0.000$\\
\bottomrule
\end{tabular}
\end{table}

The successful runs keep descending steadily (Fig.~\ref{fig:mooc_selected}): averaged over the twenty splits, their validation loss falls from $0.55$ at the first evaluation to $0.078$ at iteration $1\,000$ and $0.024$ at the last, roughly as a power of $k$ and without a plateau, and the training loss follows it closely ($0.023$ at the last iterate), so the kept runs fit without overfitting.

\begin{figure}[h]
\centering
\includegraphics[width=0.7\textwidth]{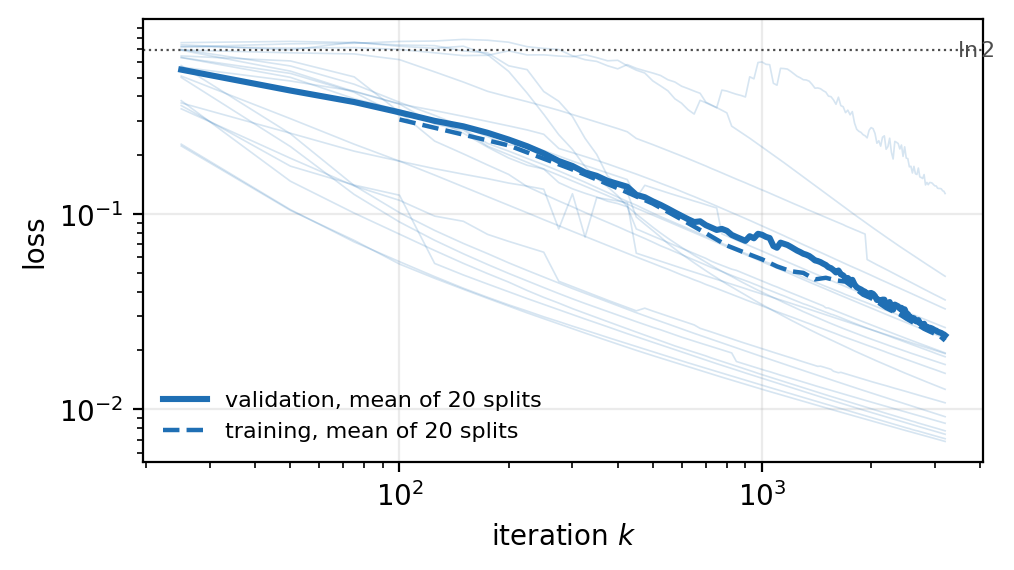}
\vspace{-0.4em}
\caption{Loss of the runs the protocol keeps: on each of the twenty held-out splits, the seed with the lowest final validation loss among four, which solves its split in every case. Faint lines are the individual validation curves; the solid line is their mean, the dashed line the mean training loss. Both axes logarithmic; the dotted line marks $\ln 2$, the loss at chance.}
\label{fig:mooc_selected}
\end{figure}
Within the 20 splits, 22 runs out of 80 fail to converge. These failed runs are an optimization effect: the signal-only filtering function solves every split, and within the admissible schedules $\gamma_k=\gamma_0(1+k)^{-\alpha}$ a faster decay keeps the solutions a run finds but leaves more runs at the uninformative start, so no fixed schedule removes both kinds of failure. To cure for this a step size that adapts along the trajectory would be needed, which falls outside the schedules covered by Proposition~\ref{prop:clark}.

\subsection{Computational cost}
\label{app:mooc_cost}

Every run uses four cores of an AMD EPYC 9374F with no GPU. Per network, a forward and backward pass of the learned zigzag variant takes $3.3$\,ms: $0.3$\,ms for the field, $1.7$\,ms for the zigzag layer (crossing times and pairing), $0.3$\,ms for the reader and $1.0$\,ms for the backward pass, or $0.05$\,s per minibatch step. A complete run --- stage~1, $3\,200$ subgradient steps, evaluation every $25$ steps and the two finite-difference checks --- takes a median of $3.1$ minutes, with a peak memory of $541$\,MB in the timing run. Per-frame persistence is the costly variant, $13.5$\,ms per network ($0.22$\,s per step) for $32$ separate persistence computations and a reader over $32$ images, and $71$ minutes per run. The runs behind Table~\ref{tab:mooc_arms} total $561$ core-hours, of which the per-frame variant with a learned field accounts for $381$.